\documentclass[twoside]{article}

\usepackage[preprint]{aistats2027}
\usepackage{amsthm}
\usepackage{enumitem}
\usepackage{amssymb}
\usepackage{bbm}
\usepackage{xspace}
\usepackage{algorithm}
\usepackage{algpseudocode}
\usepackage{xcolor}
\usepackage{hyperref}
\usepackage{aliascnt}
\usepackage{booktabs}
\usepackage{causalmodels}

\definecolor{darkred}{RGB}{203,27,79} 
\definecolor{darkteal}{RGB}{53,159,171} 
\hypersetup{
  colorlinks=true,
  citecolor=darkteal,
  linkcolor=darkred,
  urlcolor=blue
}

\newtheoremstyle{plainbf}
{\topsep}{\topsep}
{\itshape}
{}
{\bfseries}
{.}
{ }
{\thmname{#1}\thmnumber{ #2}\thmnote{ (\normalfont #3)}}

\newtheoremstyle{defbf}
{\topsep}{\topsep}
{\normalfont}
{}
{\bfseries}
{.}
{ }
{\thmname{#1}\thmnumber{ #2}\thmnote{ (\normalfont #3)}}

\newtheoremstyle{remarkit}
{\topsep}{\topsep}
{\normalfont}
{}
{\bfseries}
{.}
{ }
{\thmname{#1}\thmnumber{ #2}\thmnote{\normalfont\space(\textit{#3})}}

\theoremstyle{plainbf}
\newtheorem{theorem}{Theorem}

\newaliascnt{proposition}{theorem}
\newtheorem{proposition}[proposition]{Proposition}
\aliascntresetthe{proposition}

\newaliascnt{lemma}{theorem}
\newtheorem{lemma}[lemma]{Lemma}
\aliascntresetthe{lemma}

\newaliascnt{corollary}{theorem}
\newtheorem{corollary}[corollary]{Corollary}
\aliascntresetthe{corollary}

\newaliascnt{observation}{theorem}

\aliascntresetthe{observation}

\theoremstyle{defbf}
\newaliascnt{definition}{theorem}
\newtheorem{definition}[definition]{Definition}
\aliascntresetthe{definition}

\newaliascnt{assumption}{theorem}
\newtheorem{assumption}[assumption]{Assumption}
\aliascntresetthe{assumption}

\newaliascnt{example}{theorem}
\newtheorem{example}[example]{Example}
\aliascntresetthe{example}

\theoremstyle{remarkit}
\newaliascnt{remark}{theorem}

\aliascntresetthe{remark}

\makeatletter

\def\Hy@str@sub{subsection}
\def\Hy@str@subsub{subsubsection}
\newcommand{\appendixautoref}{
  \xdef\Hy@chapapp{appendix}
  \let\HyOrg@makecurrent\hyper@makecurrent
  \def\hyper@makecurrent##1{
    \edef\Hy@tmp{##1}
    \ifx\Hy@tmp\Hy@str@sub \HyOrg@makecurrent{appsubsection}
    \else\ifx\Hy@tmp\Hy@str@subsub \HyOrg@makecurrent{appsubsubsection}
    \else \HyOrg@makecurrent{##1}\fi\fi}
}
\makeatother

\newcommand{\Pa}{\mathrm{Pa}}
\newcommand{\sk}{\mathrm{sk}}
\newcommand{\mlt}{\mathrm{mlt}}
\newcommand{\supp}{\mathrm{supp}}

\newcommand{\tr}{\mathrm{tr}}
\newcommand{\diag}{\mathrm{diag}}
\newcommand{\BIC}{\mathrm{BIC}}

\newcommand{\PP}{\mathbb{P}}
\newcommand{\R}{\mathbb{R}}

\newcommand{\1}{\mathbbm{1}}

\newcommand{\N}{\mathcal{N}}
\newcommand{\cE}{\mathcal{E}}

\mathcode`G="8000
\mathcode`H="8000

\begingroup
\catcode`G=\active
\catcode`H=\active
\gdef G{\mathcal{\mathchar"7047}}
\gdef H{\mathcal{\mathchar"7048}}
\endgroup
\newcommand{\cO}{\mathcal{O}}

\newcommand{\cI}{\mathcal{I}}

\newcommand{\cD}{\mathcal{D}}
\newcommand{\cL}{\mathcal{L}}
\newcommand{\B}{\mathcal{B}}
\newcommand{\BL}{\mathcal{BL}}

\newcommand{\Tau}{\mathcal{T}}

\newcommand{\bX}{\mathbf{X}}

\newgraphop{\pastar}{pa^{\star}}
\newgraphop{\hatpastar}{\widehat{pa}^{\star}}
\makeatletter
\newcommand{\indepsym}{\mathrel{\causalmodels@indepglyph}}
\makeatother
\newcommand{\coarse}{\textsc{coarse}\xspace}
\newcommand{\repourl}{\url{https://github.com/tedescoG/coarse}}

\usepackage[numbers,sort&compress]{natbib}

\begin{document}

\twocolumn[

\aistatstitle{Score-Based Learning of Cluster DAGs from Interventions}

\aistatsauthor{Gaetano Tedesco \And Alex Markham}

\aistatsaddress{Informatics Institute,\\University of Amsterdam \And  Department of Mathematical Sciences,\\University of Copenhagen } ]

\begin{abstract}
  Graphical approaches to causal abstraction transform a low-level causal directed acyclic graph (DAG) over many measured variables into a smaller, high-level DAG whose nodes cluster the original variables and whose edges summarize the causal relations between clusters.
  Such cluster DAGs are easier to interpret, but learning them requires finding the clusters and recovering the edges between them.
  \citet{madaleno_coarsening_2026-1}
  learn the interventional coarsening---the cluster DAG that merges variables the interventions cannot distinguish---in two constraint-based phases: first the clusters, then the edges.
  We introduce \coarse, the first score-based method for this task: it keeps the two-phase structure but, under linear Gaussian assumptions, swaps the constraint-based edge phase for a score-based one.
  We show that the interventions themselves identify a causal order over the clusters, and learning the edges reduces to a single local search per cluster under a cluster-level BIC score. We prove that the procedure runs in polynomial time and, provided the variables affected by each intervention are correctly identified, that it is consistent. On synthetic and real-world interventional data, \coarse matches state-of-the-art edge recovery given enough samples, with an edge phase up to two orders of magnitude faster, including on dense graphs with hundreds of nodes.
\end{abstract}

\section{Introduction}
\label{sec:introduction}

Recovering the causal structure of a system is a central goal across the empirical sciences, and the large body of work on causal discovery shows how to learn such structure from data at the granularity of the recorded variables \citep{spirtes_algorithm_1991,chickering_optimal_2003,spirtes_introduction_2010}.
That resolution is not always the most useful one.
Fine-grained features such as pixels, neurons, or individual transcripts are not the units in which one wants to reason, intervene, or communicate \citep{chalupka_visual_2014,grosse-wentrup_neuro-cognitive_2024,squires_causal_2022}, and searching over DAGs with thousands of nodes is computationally daunting: the number of DAGs grows super-exponentially in the number of variables, and large-sample learning of Bayesian networks is NP-hard \citep{chickering_large-sample_2012}.
Both concerns call for coarser causal descriptions that remain causally valid while being more interpretable and statistically better behaved.
Causal abstraction formalizes this idea, relating two models of the same system at different granularities so that the coarse model preserves the effects of interventions predicted by the fine one \citep{rubenstein_causal_2017,beckers_abstracting_2019}.
We work with the graphical version of this idea, in which the coarse model is a \emph{cluster DAG}: a DAG whose nodes are blocks of the recorded variables and whose edges summarize the causal relations between blocks.

The literature on causal abstraction has developed along two complementary lines.
The \emph{functional} view encodes an abstraction as a pair of maps, one on variable values and one on interventions, such that abstracting and intervening commute.
\citet{rubenstein_causal_2017} introduced this as exact transformations, \citet{beckers_abstracting_2019} refined it into a hierarchy of abstractions, and later work extended it to the approximate \citep{beckers_approximate_2019} and soft-interventional \citep{massidda_causal_2022} cases.
The \emph{graphical} view operates directly on the causal DAG: cluster DAGs (C-DAGs) \citep{anand_causal_2023} carry the standard graphical calculus over an assumed clustering of the variables, transit clusters \citep{tikka_clustering_2023} collapse nodes while preserving the identifiability of a chosen causal query, and \citet{wahl_foundations_2024} study when Markov properties and faithfulness transfer under such grouping.
Finally, \citet{schooltink_aligning_2024} show that the C-DAGs of the graphical view are exactly the constructive abstractions of the functional hierarchy, so the two views describe the same objects.
Most of this work is theoretical in spirit, and learning an abstraction from data remains largely open: existing methods either take the partition as input \citep{entner_estimating_2012,anand_causal_2023,wahl_vector_2023}, fix it to preserve a chosen estimand \citep{tikka_clustering_2023}, or learn abstraction maps under strong parametric assumptions \citep{niu_learning_2022,massidda_learning_2024,xia_neural_2024,zennaro_jointly_2023,felice_causally_2025,dacunto_causal_2025,bing_structural_2026,felekis_distributionally_2025}.

Closest to us, the coarsening framework of \citet{madaleno_coarsening_2026-1} gives a lattice-theoretic account of all valid partition-induced DAGs.
The framework has since been extended to cyclic models, where the condensation of the fine graph arises as an identifiable coarse summary \citep{madaleno_coarsening_cycles_2026}.
Among them, it singles out the \emph{interventional coarsening}, which merges exactly those variables that the available interventions cannot distinguish, and learns it from interventional data with unknown targets using RePaRe, a constraint-based procedure in two phases: first the clusters, then the edges, decided by conditional independence (CI) tests.
\autoref{app:related-work} gives a fuller account of the related literature.

Constraint-based learning of this kind inherits the familiar fragilities of CI testing.
Each edge decision rests on many hypothesis tests whose reliability degrades as conditioning sets grow and whose errors accumulate across decisions \citep{ramsey_adjacency-faithfulness_2006,tsamardinos_max-min_2006}.
Score-based methods avoid testing altogether, replacing each CI query with a comparison of a decomposable score; this is the route behind the most successful practical structure searches, from greedy equivalence search to recent order-based and discrete searches, most recently from interventional data with known targets \citep{chickering_optimal_2003,teyssier_ordering-based_2012,andrews_fast_2023,wienobst_embracing_2026,chen_i-flop_2026}.
For cluster DAGs, no such method exists.
Only the constraint-based RePaRe learns the interventional coarsening; classical score-based searches do not directly apply, since they target a single DAG over the fine variables, not a coarsening thereof.

We introduce \coarse, the first score-based method for learning the interventional coarsening from interventional---more generally, heterogeneous---data.
The setting is linear Gaussian: each non-observational environment is a soft intervention on an unknown, possibly multi-target set of variables, so all edges are preserved and every environment is Markov to the fine DAG.
\coarse keeps the two-phase template---clusters first, edges second---and replaces the CI-tested edge phase with score comparisons.
Two observations make this work.
First, the interventions themselves identify a causal order over the blocks.
Each variable carries an intervention signature, the set of environments that shift its marginal; signatures only grow along directed paths, so comparing them yields a topological order of the cluster DAG and pins down each block's possible parents.
No search over orderings is needed.
Second, the fine structural equation model induces a Gaussian block-level model: a curved exponential family whose likelihood factorizes over blocks.
The Bayesian information criterion (BIC) \citep{schwarz_estimating_1978}, summed over environments, therefore decomposes per block and is locally consistent, and classical local searches recover the true parents at the block level.
With the order and each block's possible parents settled by the signatures, edge recovery reduces to a single score-based local search per block, in the spirit of grow-shrink \citep{margaritis_bayesian_1999,lam_greedy_2022,wienobst_embracing_2026}.
Assuming error-free signatures, we prove the resulting procedure consistent, with worst-case polynomial running time.

In sum, our contributions are:
(i) we extend the interventional coarsening of \citet{madaleno_coarsening_2026-1} to unknown, multi-target interventions---the resulting coarsening merges variables with equal intervention signatures, coincides with theirs under single-target interventions, and remains identifiable from marginal shifts in the multi-target case, where theirs is not;
(ii) we show that the same shift information identifies a causal order over the blocks, removing the search over orderings and confining each block's candidate parents to a set with all of its parents and none of its descendants;
(iii) we score coarse parent sets with the BIC pooled over environments, prove the score decomposable and locally consistent, and prove \coarse consistent and with worst-case polynomial running time;
and (iv) on synthetic and real-world interventional data, \coarse matches state-of-the-art edge recovery given enough samples, while its edge phase runs up to two orders of magnitude faster.
Code is available at \repourl.

\section{Preliminaries}
\label{sec:prelim}

We review some definitions and results about graphs, probability distributions, our data-generating model, coarsenings, and local structure search that will be useful for describing our new results in \autoref{sec:method}.

\subsection{Graphs and orders}
\label{sec:graphs}

Let $G = (V,E)$ be a directed acyclic graph (DAG) with node set $V = [d]$, indexing a random vector $X = (X_v)_{v \in V} \in \R^d$ with joint distribution $P_X$, and edge set $E$.
Two nodes are \emph{adjacent} if an edge joins them, and the \emph{skeleton} $\sk(G)$ is the undirected graph obtained by ignoring edge directions.
We write $\pa[G]{v}$ and $\ch[G]{v}$ for the \emph{parents} and \emph{children} of $v$.
The \emph{ancestors} $\an[G]{v}$ (\emph{descendants} $\de[G]{v}$) are the nodes $w$ with a directed path from $w$ to $v$ (from $v$ to $w$), with the convention $v \in \an[G]{v} \cap \de[G]{v}$.
The \emph{non-descendants} are $\nd[G]{v} := V \setminus \de[G]{v}$, and all notations extend to sets $A \subseteq V$ by union, e.g.\ $\de[G]{A} := \bigcup_{a \in A}\de[G]{a}$.

\begin{definition}[Ancestral partial order]
  \label{def:ancestral-order}
  For $u,v \in V$, write $u \preceq_G v$ iff $u \in \an[G]{v}$, and $u \prec_G v$ iff $u \preceq_G v$ and $u \neq v$.
\end{definition}

\begin{definition}[Topological ordering]
  \label{def:topological-order}
  A permutation $\tau:V \to V$ is a \emph{topological} (or \emph{causal}) \emph{ordering} of $G$ if $\tau(u) < \tau(v)$ for every $u \in \an[G]{v} \setminus \{v\}$.
\end{definition}

We write $\mathcal{T}( \le )$ for the set of all linear extensions of a partial order $\le$, and $\tau_{\prec v} := \{u \mid \tau(u) < \tau(v)\}$ for the nodes before $v$ in $\tau$.
The topological orders of $G$ are exactly the elements of $\mathcal{T}( \preceq_G )$, and each satisfies $\pa[G]{v} \subseteq \tau_{\prec v} \subseteq \nd[G]{v}$.

\subsection{Conditional independence and Markov properties}
\label{sec:prob-distr}

We write $X_A \indepsym_P X_B \mid X_C$ when $X_A$ and $X_B$ are conditionally independent given $X_C$ under $P_X$, and $A \indepsym_G B \mid C$ when $A$ and $B$ are \textit{d-separated} by $C$ in $G$.

\begin{definition}[Markov property and faithfulness]
  Let $G = (V,E)$ be a DAG and let $P_X$ be a joint probability distribution over $X$ admitting a strictly positive density $p_X$.
  We say that $P_X$ satisfies the \emph{Markov factorization property} with respect to $G$ if
  \begin{equation}
    \label{eq:markov-factorization}
    p_X(x) = \prod_{v \in V} p\big(x_v \mid x_{\pa[G]{v}}\big).
  \end{equation}
  This is equivalent to the \emph{local} Markov property, $X_v \indepsym_P X_{\nd[G]{v} \setminus \pa[G]{v}} \mid X_{\pa[G]{v}}$ for every $v$, and to the \emph{global} Markov property, $A \indepsym_G B \mid S \Rightarrow X_A \indepsym_P X_B \mid X_S$ \citep{lauritzen_graphical_1996} for all pairwise disjoint $A,B,S \subseteq V$.
  Moreover we say that $P_X$ is \emph{faithful} to $G$ when every conditional independence in $P_X$ is implied by \eqref{eq:markov-factorization}.
\end{definition}

Two DAGs $G$ and $G'$ are \emph{Markov equivalent}, written $G \sim G'$, if they entail the same conditional independence statements, which holds if and only if they have the same skeleton and the same v-structures \citep{verma_equivalence_2013}.
We write $[G]$ for the Markov equivalence class of $G$.

\subsection{Local structure search}
\label{sec:local-struct-search}

Global methods search over whole graphs on $V$.
Local methods instead fix a single target $v \in V$ and ask which subset of the remaining variables shields $X_v$ from everything else.
We call a set $\BL(v) \subseteq V \setminus \{v\}$ a \emph{Markov blanket} of $X_v$ if $X_v \indepsym_P X_{V \setminus (\BL(v) \cup \{v\})} \mid X_{\BL(v)}$.
An inclusion-minimal Markov blanket is called a \emph{Markov boundary} and we denote it by $\B(v)$.
If $\indepsym_P$ is a graphoid, which holds under strict positivity of $p_X$, then $\B(v)$ exists and is unique;
under the Markov property and faithfulness, it consists of the parents, children, and parents of children of $v$ \citep{pearl_probabilistic_1988}.
Graphoids are defined in App.~\ref{app:graphs}.
In practice we can restrict the search to a candidate pool fixed in advance, typically by a topological order.

\begin{definition}[Restricted Markov boundary; \citealp{lam_greedy_2022}]
  \label{def:rmb}
  Let $v \in V$ and $Z \subseteq V \setminus \{v\}$.
  A set $\B(v,Z) \subseteq Z$ is a \emph{Markov boundary of $X_v$ relative to $Z$} if
  \begin{enumerate}[label=(\roman*), nosep]
    \item $X_v \indepsym_P X_{Z \setminus \B(v,Z)} \mid X_{\B(v,Z)}$;
    \item no proper subset $\B' \subsetneq \B(v,Z)$ satisfies (i).
  \end{enumerate}
\end{definition}

The case $Z = V \setminus \{v\}$ recovers $\B(v)$, and existence and uniqueness carry over unchanged.
If the pool contains all parents and no descendants, as the prefix $\tau_{\prec v}$ of any $\tau \in \mathcal{T}( \preceq_G )$ does, the restricted boundary collapses to the parents set.

\begin{proposition}[Collapse to parents; \citealp{lam_greedy_2022}]
  \label{prop:collapse}
  Let $P_X$ be Markov and faithful to $G$.
  If $\pa[G]{v} \subseteq Z \subseteq \nd[G]{v}$, then $\B(v,Z) = \pa[G]{v}$.
\end{proposition}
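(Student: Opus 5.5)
We show two things: that $\pa[G]{v}$ satisfies condition (i) of \autoref{def:rmb} relative to $Z$, and that no proper subset of it does. Together with the uniqueness of the restricted boundary under strict positivity (the graphoid property inherited from $\B(v)$), this gives $\B(v,Z) = \pa[G]{v}$.

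\textbf{Step 1: the parents form a blanket relative to $Z$.} Since $Z \subseteq \nd[G]{v}$ and $\pa[G]{v} \subseteq Z$, we have $Z \setminus \pa[G]{v} \subseteq \nd[G]{v} \setminus \pa[G]{v}$. The local Markov property, which is equivalent to the factorization \eqref{eq:markov-factorization}, gives
\[
X_v \indepsym_P X_{\nd[G]{v}\setminus \pa[G]{v}} \mid X_{\pa[G]{v}}.
\]
Decomposition, which holds in any probability distribution, then yields $X_v \indepsym_P X_{Z\setminus \pa[G]{v}} \mid X_{\pa[G]{v}}$. Hence $\pa[G]{v}$ satisfies (i).

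\textbf{Step 2: minimality.} Suppose, for contradiction, that some $B' \subsetneq \pa[G]{v}$ satisfies (i). Pick a parent $p \in \pa[G]{v} \setminus B'$. Since $p \in Z \setminus B'$, condition (i) for $B'$ combined with decomposition gives $X_v \indepsym_P X_p \mid X_{B'}$. By faithfulness, this independence must be entailed by the graph, so $v \indepsym_G p \mid B'$. But $p$ and $v$ are adjacent via the edge $p \to v$, and no conditioning set can d-separate adjacent nodes. This contradiction shows that no proper subset of $\pa[G]{v}$ satisfies (i), so (ii) holds. This minimality step is the only point where faithfulness is used.

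\textbf{Step 3: conclude via uniqueness.} The main subtlety lies here. Steps 1 and 2 show that $\pa[G]{v}$ is an inclusion-minimal set satisfying (i), but \autoref{def:rmb} alone does not rule out other minimal blankets inside $Z$. We close this gap in one of two ways.
\begin{enumerate}[label=(\alph*), nosep]
  \item Invoke the stated uniqueness of $\B(v,Z)$ for graphoids, which applies because $p_X$ is strictly positive.
  \item Argue directly. Let $B \subseteq Z$ be any set satisfying (i), and take $p \in \pa[G]{v}$. If $p \notin B$, then $p \in Z \setminus B$, and decomposition gives $X_v \indepsym_P X_p \mid X_B$. By faithfulness this forces $v \indepsym_G p \mid B$, which is impossible because $p$ and $v$ are adjacent.
\end{enumerate}
Argument (b) shows that every blanket $B \subseteq Z$ contains $\pa[G]{v}$. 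Since $\pa[G]{v}$ is itself a blanket by Step 1, the minimal blanket is exactly $\pa[G]{v}$, and this route needs neither positivity nor intersection.
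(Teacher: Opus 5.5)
Your proof is correct and, through Steps 1, 2 and 3(a), follows the paper's argument exactly: the local Markov property plus decomposition shows the parents form a blanket within $Z$, faithfulness plus adjacency gives minimality, and uniqueness of the restricted boundary closes the argument. Your route 3(b) is a valid slight strengthening: it shows every blanket inside $Z$ contains $\mathrm{pa}_G(v)$, which makes the uniqueness step self-contained without the intersection axiom. It also uses faithfulness again and subsumes Step 2, so your remark that Step 2 is the only use of faithfulness is not quite accurate.
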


Once a topological order is known, recovering $G$ thus reduces to one restricted boundary search per node.

The grow-shrink algorithm \citep{margaritis_bayesian_1999} computes $\B(v,Z)$ in two phases.
The \emph{grow} phase starts from $S = \emptyset$ and adds each $w \in Z$ with $X_v \not\indepsym_P X_w \mid X_S$, ending as a Markov blanket relative to $Z$;
the \emph{shrink} phase removes each $w \in S$ with $X_v \indepsym_P X_w \mid X_{S \setminus \{w\}}$, discarding redundant early inclusions.
If $\indepsym_P$ is a compositional graphoid, the output is $\B(v,Z)$ after $O(|Z|^2)$ CI queries.

Following \citet{lam_greedy_2022} and the FLOP method of \citet{wienobst_embracing_2026}, we replace each CI query by a comparison of a decomposable local score $s(v,S;\cD)$, where $\cD$ consists of $n$ i.i.d.\ samples.
A candidate $w$ is added if $s(v,S \cup \{w\};\cD) > s(v,S;\cD)$ and removed if $s(v,S \setminus \{w\};\cD) > s(v,S;\cD)$.
Each phase repeats its sweep until no change occurs, so the search is non-greedy and not tied to a fixed visiting order (App.~\ref{app:gs}).
If $s$ is \emph{locally consistent}, so that as $n \to \infty$ the increment is positive exactly when $X_v \not\indepsym_P X_w \mid X_S$, every decision agrees in the limit with its CI counterpart, and the score-based version returns $\B(v,Z)$.

As the local score we use the Bayesian information criterion \citep{schwarz_estimating_1978},
\begin{equation}
  \label{eq:bic}
  s(v,S;\cD) = 2\,\hat\ell(v,S) - \lambda \log(n)\, \dim(v,S),
\end{equation}
where $\hat\ell(v,S)$ is the maximized log-likelihood of $X_v$ given $X_S$, $\dim(v,S)$ its number of free parameters, and $\lambda > 0$ a penalty multiplier ($\lambda = 1$ gives the standard BIC).
For curved exponential families, the BIC is decomposable, score equivalent, consistent, and locally consistent \citep{haughton_choice_1988,chickering_optimal_2003}.

\subsection{Linear Gaussian model and interventions}
\label{sec:data-gener-model}

Score-based methods compare graphs through the likelihood of the data, so they must assume a parametric family for $P_X$.
The most common choice is the linear Gaussian model, used for instance by GES, GIES, GnIES and FLOP
\citep{chickering_optimal_2003,hauser_characterization_2011,gamella_characterization_2025,wienobst_embracing_2026}.
Let $\cE = \{0,1,\dots,m\}$ be a finite set of environments, with $e = 0$ the observational one, and let $X^{(e)} \in \R^d$ be the random vector observed in environment $e$, with distribution $P^{(e)}$.
We assume each $X^{(e)}_v$ given $X^{(e)}_{\pa[G]{v}}$ Gaussian with mean linear in the parents and constant variance $(\omega^{(e)}_v)^2$.
In matrix form,
\begin{equation}
  \label{eq:lin-gauss}
  X^{(e)} = A^{(e)} X^{(e)} + \eta^{(e)},\qquad \eta^{(e)}
          \sim \N\big(\nu^{(e)},\Omega^{(e)}\big),
\end{equation}
where $A^{(e)}_{vu} \neq 0$ only if $u \in \pa[G]{v}$ and $\Omega^{(e)} = \diag\big((\omega^{(e)}_1)^2,\dots,(\omega^{(e)}_d)^2\big)$.
Hence $P^{(e)}$ is Gaussian, and the BIC \eqref{eq:bic} of a parent set is a least-squares regression.

Each environment $e \geq 1$ arises from a \emph{soft intervention} on an unknown target set $I_e \subseteq V$, possibly $|I_e| > 1$, changing only the conditionals of the targets, so that \eqref{eq:markov-factorization} becomes
\begin{equation}
  \label{eq:int-factorization}
  p^{(e)}(x) = \prod_{t \in I_e} p^{(e)}\big(x_t \mid x_{\pa[G]{t}}\big)\prod_{v \notin I_e} p^{(0)}\big(x_v \mid x_{\pa[G]{v}}\big).
\end{equation}
In terms of \eqref{eq:lin-gauss}, soft interventions change only the rows $A^{(e)}_{t, \cdot}$ and the pairs $(\nu^{(e)}_t,\omega^{(e)}_t)$ with $t \in I_e$, so all environments share the same DAG $G$.
We write $f^{(e)}(X_v)$ for the marginal of $X_v$ in environment $e$.
For each $e$ we observe a dataset $\cD^{(e)}$ of $n_e$ i.i.d.\ samples, independent across environments, with $n := \sum_e n_e$.
The intervention targets $I_e$ are never observed.

\subsection{Coarsenings}
\label{sec:coarsenings}

We build on the framework of \citet{madaleno_coarsening_2026-1}, making precise the \emph{resolution} identifiable under unknown, multi-target interventions.

\begin{definition}[Coarsening; \citealp{madaleno_coarsening_2026-1}]
  \label{def:coarsening}
  A coarsening of a DAG $G = (V,E)$ is a DAG $G' = (V',E')$ with a surjection $\chi:V \to V'$ such that
  \[
    E' = \{\chi(v) \to \chi(w) : v \to w \in E,\ \chi(v) \neq \chi(w)\}.
  \]
\end{definition}

We identify $V'$ with the partition $\Pi = \{\pi_1,\dots,\pi_q\}$ of $V$ whose parts group the nodes sharing a common image under $\chi$ and refer to each part as a \emph{block}.
We call $G'$ a \emph{valid} coarsening of $G$ whenever its induced edge set is acyclic.
\citet{madaleno_coarsening_2026-1} show that valid coarsenings form a lattice under refinement
($\Pi$ refines $\Pi'$, written $\Pi \preceq \Pi'$, if every part of $\Pi$ lies inside some part of $\Pi'$), and that every distribution Markov to $G$ is also Markov to each of its coarsenings.
Further, they show that a valid coarsening can be learned by RePaRe, which walks down this lattice from $\{V\}$: a \emph{Refine} step splits blocks and an \emph{IsEdge} step decides edges with conditional independence (CI) tests.

For interventional data, the authors target the \emph{interventional coarsening} $G^\cI$ (partition $\Pi^\cI$), merging nodes sharing the intervened ancestors $\an[G]{v} \cap T$, $T := \bigcup_{e \geq 1} I_e$;
with single-target interventions (and suitable assumptions), $X_v$ shifts in environment $e$ iff $v \in \de[G]{I_e}$, so RePaRe recovers $G^\cI$ by grouping equal shift patterns, without knowing the targets.
\autoref{sec:target} extends this to multi-target interventions.

\section{Method: \coarse}
\label{sec:method}

\coarse learns the interventional coarsening in two phases.
First, it reads the blocks and a causal order over them off marginal shifts alone (\autoref{sec:target} and \autoref{sec:order}).
Second, with the order fixed, it runs a single score-based local search per block (\autoref{sec:score} and \autoref{sec:guarantees}).
All proofs are deferred to \autoref{app:theory}.

\subsection{The interventional coarsening}
\label{sec:target}

We first characterize the coarsening identifiable from interventional data under multi-target interventions.

\begin{definition}[Intervention signature]
  \label{def:signature}
  The signature of a node $v \in V$ is the set of environments that shift its marginal,
  \[
    \sigma(v) := \{e \in \cE \setminus \{0\} : f^{(e)}(X_v) \neq f^{(0)}(X_v)\}.
  \]
\end{definition}

\begin{definition}[Interventional coarsening]
  \label{def:icoarsening}
  The interventional coarsening $G^{\cE}$ is the coarsening of $G$ whose surjection satisfies $\chi(v) = \chi(w) \iff \sigma(v) = \sigma(w)$.
  We denote its partition by $\Pi^{\cE}$, and call a block's common signature its \emph{support}, $\supp(\pi)$.
\end{definition}

In words, $G^{\cE}$ merges the variables the available interventions cannot tell apart; $\Pi^{\cE}$ is identifiable by construction, since $\sigma$ depends only on marginals.
\autoref{def:icoarsening} extends the interventional coarsening $G^{\cI}$ of \citet{madaleno_coarsening_2026-1}, and the two coincide under single-target interventions.
With multi-target interventions, a shift only signals that \emph{some} target lies upstream of $v$, so $\Pi^{\cI}$ is no longer identifiable from marginal shifts.
In general, $\Pi^{\cI} \preceq \Pi^{\cE}$ (App.~\ref{app:multi-target}).

Our results rely on the following assumptions.

\begin{assumption}[Identifiability assumptions; \citealp{madaleno_coarsening_2026-1}]
  \label{ass:ident}
  For the fine-grained DAG $G = (V, E)$ over the random variables indexed by $V$, and the environments $\cE$ with targets $\{I_e\}_{e \in \cE}$, $I_0 := \emptyset$, the distributions $\{f^{(e)}\}_{e \in \cE}$ satisfy:
  \begin{enumerate}[label=(\roman*), nosep]
    \item \emph{Coarse Markov}: for every $e \in \cE$ the joint law factorizes according to $G^{\cE}$,
          \begin{equation}
            \begin{aligned}
              f^{(e)}(X_V) = & \prod_{u \, \in \, \chi(V) \setminus \chi(I_e)} f^{(0)}\big(X_u \mid X_{\pa{u}}\big) \\
                             & \prod_{u \, \in \, \chi(I_e)} f^{(e)}\big(X_u \mid X_{\pa{u}}\big),
            \end{aligned}
          \end{equation}
          where $X_u := X_{\chi^{-1}(u)}$, $X_{\pa{u}} := X_{\chi^{-1}(\pa[G^{\cE}]{u})}$;
    \item \emph{Coarse faithfulness}: the only conditional independences among coarse nodes are those implied by the Markov factorization above;
    \item \emph{Interventional soundness}: every intervention induces a shift in the marginal distribution of its targets and each downstream node with respect to the observational one.
          That is, $v \in \de[G]{I_e} \Rightarrow f^{(e)}(X_v) \neq f^{(0)}(X_v)$ for all $e \in \cE \setminus \{0\}$ and $v \in V$.
  \end{enumerate}
\end{assumption}

Every (and only) descendant of an intervention target records a shift, so signatures only grow along directed edges:
\begin{lemma}[Signature monotonicity]
  \label{lem:monotone}
  Under \autoref{ass:ident}, if $u \to w$ in $G$, then $\sigma(u) \subseteq \sigma(w)$.
\end{lemma}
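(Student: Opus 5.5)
The plan is to first establish the exact characterization $\sigma(v) = \{e \in \cE\setminus\{0\} : v \in \de[G]{I_e}\}$ for every $v \in V$, and then read off monotonicity from transitivity of ancestry. \autoref{ass:ident}(iii) (interventional soundness) already gives the inclusion $\supseteq$: if $v \in \de[G]{I_e}$ then $f^{(e)}(X_v) \neq f^{(0)}(X_v)$, so $e \in \sigma(v)$. The remaining work is the reverse inclusion: no shift occurs outside the descendants of the targets.

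For the inclusion $\subseteq$, fix $e \geq 1$ and a node $v \notin \de[G]{I_e}$. Then $\an[G]{v} \cap I_e = \emptyset$, since any target that is an ancestor of $v$ would place $v$ in $\de[G]{I_e}$. Let $A := \an[G]{v}$, which is an ancestral set. Starting from the interventional factorization \eqref{eq:int-factorization}, I would marginalize out the nodes of $V \setminus A$ one at a time in reverse topological order. Each node removed is at that moment a sink of the remaining ancestral-closed structure, so its conditional density integrates to one. What remains is
\[
p^{(e)}(x_A) = \prod_{a \in A} p^{(\cdot)}\big(x_a \mid x_{\pa[G]{a}}\big).
\]
Since $A \cap I_e = \emptyset$, every factor is the observational conditional $p^{(0)}$. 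Hence $p^{(e)}(x_A) = p^{(0)}(x_A)$, and marginalizing further to $x_v$ gives $f^{(e)}(X_v) = f^{(0)}(X_v)$, that is, $e \notin \sigma(v)$.

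With the characterization in hand, the lemma follows directly. Take $u \to w$ in $G$ and $e \in \sigma(u)$. By the characterization, $u \in \de[G]{I_e}$, so some $t \in I_e$ has $t \in \an[G]{u}$. The edge $u \to w$ then gives $t \in \an[G]{w}$, so $w \in \de[G]{I_e}$, and soundness yields $e \in \sigma(w)$.

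The only step requiring care is the marginalization argument. One must check that ordering the non-ancestors so that each removed node has no remaining children is possible; this holds because $V \setminus A$ is closed under descendants. One must also note that the fine-level factorization \eqref{eq:int-factorization}, rather than the coarse one in \autoref{ass:ident}(i), is the one to use here. Everything else is bookkeeping.
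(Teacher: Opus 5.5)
Your proposal is correct and follows essentially the same route as the paper. You first prove $\sigma(v) = \{e \in \cE \setminus \{0\} : v \in \de[G]{I_e}\}$, with $\supseteq$ from interventional soundness and $\subseteq$ by marginalizing the fine-level factorization \eqref{eq:int-factorization} over the non-ancestors of $v$, and then obtain monotonicity from $\an[G]{u} \subseteq \an[G]{w}$; your reverse-topological elimination just spells out the integration step that the paper states in one line.
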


\begin{proposition}[Validity]
  \label{prop:validity}
  Under \autoref{ass:ident} the graph $G^{\cE}$, induced by $\chi$, is a valid coarsening of $G$.
\end{proposition}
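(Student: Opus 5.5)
We must show that the edge set induced by $\chi$ on $\Pi^{\cE}$ is acyclic. The plan is to exhibit a strict order on blocks that every induced edge respects. We will use \autoref{lem:monotone} together with a strict-inclusion argument, and order blocks by their supports under $\subseteq$.

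\textbf{Step 1: induced edges go forward in inclusion.} Suppose $\chi(u) \to \chi(w)$ is an edge of $G^{\cE}$. By \autoref{def:coarsening} there exist $u' \in \chi(u)$ and $w' \in \chi(w)$ with $u' \to w'$ in $G$ and $\chi(u') \neq \chi(w')$. By \autoref{lem:monotone}, $\sigma(u') \subseteq \sigma(w')$. Since the blocks differ, \autoref{def:icoarsening} gives $\sigma(u') \neq \sigma(w')$. Hence $\supp(\chi(u)) = \sigma(u') \subsetneq \sigma(w') = \supp(\chi(w))$.

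\textbf{Step 2: no cycles.} A directed cycle $\pi_1 \to \pi_2 \to \dots \to \pi_k \to \pi_1$ in $G^{\cE}$ would give a chain of strict inclusions $\supp(\pi_1) \subsetneq \dots \subsetneq \supp(\pi_1)$. This is impossible because strict inclusion is irreflexive and transitive. Self-loops are excluded by definition, since $\chi(v) \neq \chi(w)$ is required for an edge. Equivalently, any linear extension of $\subsetneq$ on supports (for instance, sorting blocks by $|\supp(\pi)|$, which strictly increases along edges) is a topological order of $G^{\cE}$. Therefore $G^{\cE}$ is a DAG, and hence a valid coarsening.

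\textbf{Main obstacle.} The only substantive ingredient is \autoref{lem:monotone}, which we may assume. If it had to be re-derived, the content is the following. If $e \in \sigma(u)$, then soundness alone does not give $u \in \de[G]{I_e}$. The converse direction is needed, namely that non-descendants of $I_e$ have unchanged marginals. That direction follows from the fine factorization \eqref{eq:int-factorization}: marginalizing out $\de[G]{I_e}$ leaves only the unchanged observational conditionals. So $e \in \sigma(u)$ implies $u \in \de[G]{I_e}$, hence $w \in \de[G]{I_e}$, and then \autoref{ass:ident}(iii) gives $e \in \sigma(w)$.
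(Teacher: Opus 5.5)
Your proof is correct and is essentially the paper's argument: both take a fine representative $u \to w$ of each coarse edge, apply \autoref{lem:monotone}, and derive a contradiction around a putative directed cycle. The only difference is bookkeeping. You make each inclusion strict edge by edge, as the paper itself does in the proof of \autoref{thm:order}(i). The paper instead chains non-strict inclusions $\sigma(u_j) \subseteq \sigma(v_j) = \sigma(u_{j+1})$ around the cycle, concludes they are all equalities, and so contradicts that $u_1$ and $v_1$ lie in distinct blocks.
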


\subsection{A causal order from the interventions}
\label{sec:order}

By \autoref{lem:monotone}, comparing supports therefore orders the blocks consistently with causation.

\begin{definition}[Support order and parent pool]
  \label{def:support}
  The \emph{support order} is the strict partial order on $\Pi^{\cE}$ given by $\pi_a <_{\supp} \pi_b \iff \supp(\pi_a) \subsetneq \supp(\pi_b)$,
  and $\Tau( <_{\supp} )$ denotes the set of its linear extensions.
  The \emph{parent pool} of a block $\pi$ is $\pastar{\pi} := \{\pi' \in \Pi^{\cE} : \pi' <_{\supp} \pi\}$;
  distinct blocks have distinct supports, so two blocks are either $<_{\supp}$-ordered or incomparable.
\end{definition}

\begin{theorem}[Supports encode the causal order]
  \label{thm:order}
  Under \autoref{ass:ident}, for distinct blocks $\pi_a, \pi_b \in \Pi^{\cE}$:
  \begin{enumerate}[label=(\roman*), nosep]
    \item \emph{Ancestry}: if $\pi_a \prec_{G^{\cE}} \pi_b$, then $\pi_a <_{\supp} \pi_b$;
    \item \emph{Parent pool}: $\pa[G^{\cE}]{\pi_a} \subseteq \pastar{\pi_a} \subseteq \nd[G^{\cE}]{\pi_a}$; in particular, blocks with incomparable supports are non-adjacent and neither is an ancestor of the other;
    \item \emph{Orientation}: if $\pi_a$ and $\pi_b$ are adjacent in $G^{\cE}$, then $\pi_a \to \pi_b \in G^{\cE}$ if and only if $\pi_a <_{\supp} \pi_b$.
  \end{enumerate}
  Hence $G^{\cE}$ is determined by its skeleton $\sk(G^{\cE})$ and the supports.
\end{theorem}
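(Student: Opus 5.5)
The plan is to lift \autoref{lem:monotone} from fine edges to coarse ancestry, and then get (ii) and (iii) from (i) together with the fact that supports are distinct across blocks.

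\textbf{Part (i).} Suppose $\pi_a \prec_{G^{\cE}} \pi_b$. Then there is a directed path $\pi_a = \rho_0 \to \rho_1 \to \dots \to \rho_k = \pi_b$ in $G^{\cE}$ with $k \ge 1$.
\begin{itemize}
\item By \autoref{def:coarsening}, each coarse edge $\rho_{i-1} \to \rho_i$ comes from some fine edge $u \to w$ in $G$ with $\chi(u) = \rho_{i-1}$ and $\chi(w) = \rho_i$.
\item \autoref{lem:monotone} gives $\sigma(u) \subseteq \sigma(w)$.
\item Every node in a block has the block's support, so $\supp(\rho_{i-1}) = \sigma(u) \subseteq \sigma(w) = \supp(\rho_i)$.
\item Chaining over the path yields $\supp(\pi_a) \subseteq \supp(\pi_b)$.
\end{itemize}
Distinct blocks have distinct supports by \autoref{def:icoarsening}, so the inclusion is strict, and hence $\pi_a <_{\supp} \pi_b$. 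No path of the fine graph needs to stay inside a block; only the per-edge inclusion and the constancy of $\sigma$ on blocks are used.

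\textbf{Part (ii).}
\begin{itemize}
\item If $\pi' \in \pa[G^{\cE}]{\pi_a}$, then $\pi' \prec_{G^{\cE}} \pi_a$, and (i) gives $\pi' <_{\supp} \pi_a$. So $\pi' \in \pastar{\pi_a}$.
\item Conversely, take $\pi' \in \pastar{\pi_a}$ and suppose $\pi'$ were a descendant of $\pi_a$. Since $\pi' \neq \pi_a$, this means $\pi_a \prec_{G^{\cE}} \pi'$, and (i) gives $\supp(\pi_a) \subsetneq \supp(\pi')$. This contradicts $\supp(\pi') \subsetneq \supp(\pi_a)$, so $\pi' \in \nd[G^{\cE}]{\pi_a}$.
\end{itemize}
For the "in particular" clause: adjacency or ancestry between two distinct blocks would, by (i) applied in the appropriate direction, force their supports to be $<_{\supp}$-comparable. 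The contrapositive is the claim.

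\textbf{Part (iii).} Let $\pi_a$ and $\pi_b$ be adjacent.
\begin{itemize}
\item If $\pi_a \to \pi_b$, then (i) gives $\pi_a <_{\supp} \pi_b$.
\item Conversely, if $\pi_a <_{\supp} \pi_b$, the edge cannot be $\pi_b \to \pi_a$. That orientation would give $\pi_b <_{\supp} \pi_a$ by (i), and $<_{\supp}$ is a strict partial order, so the two cannot hold together.
\item Adjacency means one of the two orientations is present. Therefore $\pi_a \to \pi_b$.
\end{itemize}
We need $G^{\cE}$ to be a DAG so that "adjacent" means exactly one oriented edge. This is \autoref{prop:validity}, which I invoke here. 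The final sentence then follows: the skeleton gives the adjacencies, and (iii) orients each one from the supports.

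\textbf{Expected obstacle.} The only delicate step is the passage in (i) from coarse edges to fine edges and the strictness of the inclusion. Both follow from \autoref{def:coarsening} and from \autoref{def:icoarsening} (blocks are exactly the level sets of $\sigma$), so I expect this to be routine. In particular, nothing beyond \autoref{lem:monotone} and \autoref{prop:validity} from \autoref{ass:ident} seems to be needed.
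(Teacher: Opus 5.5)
Your proposal is correct and follows the paper's proof essentially step for step. Like the paper, you lift \autoref{lem:monotone} along coarse directed paths using that $\sigma$ is constant on blocks, get strictness from distinct supports, and then derive (ii) and (iii) from (i). Your explicit appeal to \autoref{prop:validity} in (iii) simply spells out what the paper leaves implicit in ``adjacent blocks carry exactly one orientation''; in fact (i) alone already rules out having both orientations, since it would give $\supp(\pi_a) \subsetneq \supp(\pi_b) \subsetneq \supp(\pi_a)$.
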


The converse of (i) fails: causally unrelated blocks can have nested supports (\autoref{ex:nested}).
Supports cannot decide adjacency, which is left to the score.

\begin{corollary}[No search over orders]
  \label{cor:noorder}
  Under \autoref{ass:ident}:
  \begin{enumerate}[label=(\roman*), nosep]
    \item every $\tau \in \Tau( <_{\supp} )$ is a topological order of $G^{\cE}$, and $\pastar{\pi} \subseteq \tau_{\prec \pi}$ for every block $\pi$;
    \item for every block $\pi$ and environment $e$, the Markov boundary of $X_{\pi}$ relative to $\pastar{\pi}$ under $P^{(e)}$ is $\pa[G^{\cE}]{\pi}$.
  \end{enumerate}
\end{corollary}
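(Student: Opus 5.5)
The plan is to derive both parts directly from \autoref{thm:order} and, for part (ii), from \autoref{prop:collapse} applied at the block level.

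For (i), fix $\tau \in \Tau( <_{\supp} )$ and take blocks with $\pi_a \prec_{G^{\cE}} \pi_b$. By \autoref{thm:order}(i) we have $\pi_a <_{\supp} \pi_b$. Since $\tau$ is a linear extension of $<_{\supp}$, this gives $\tau(\pi_a) < \tau(\pi_b)$. That is exactly the condition of \autoref{def:topological-order} for $G^{\cE}$, which is a DAG by \autoref{prop:validity}. The inclusion $\pastar{\pi} \subseteq \tau_{\prec \pi}$ is immediate: every $\pi' \in \pastar{\pi}$ satisfies $\pi' <_{\supp} \pi$, so $\tau(\pi') < \tau(\pi)$.

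For (ii), fix a block $\pi$ and an environment $e$, and regard each block $X_{\pi'}$ as a single vector-valued coarse variable. By \autoref{ass:ident}(i), $P^{(e)}$ factorizes according to $G^{\cE}$, which gives the Markov property at the block level. By \autoref{ass:ident}(ii), coarse faithfulness holds. \autoref{thm:order}(ii) supplies the sandwich $\pa[G^{\cE}]{\pi} \subseteq \pastar{\pi} \subseteq \nd[G^{\cE}]{\pi}$. These are exactly the hypotheses of \autoref{prop:collapse} with $G^{\cE}$ in place of $G$ and $Z = \pastar{\pi}$, so the conclusion $\B(\pi,\pastar{\pi}) = \pa[G^{\cE}]{\pi}$ would follow.

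The main obstacle is that \autoref{prop:collapse} and the existence and uniqueness of restricted boundaries (\autoref{def:rmb}) were stated for scalar nodes. I would handle this in two steps.

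\begin{itemize}
\item \emph{Existence and uniqueness.} $P^{(e)}$ is a nondegenerate Gaussian, so it has a strictly positive density. Hence the block-level independence relation is a graphoid; intersection holds for vector-valued variables under positivity. Existence and uniqueness of $\B(\pi,Z)$ then carry over verbatim.
\item \emph{The collapse argument.} Rather than cite \autoref{prop:collapse} as a black box, I would redo its short proof at the block level. The local Markov property for $G^{\cE}$ gives $X_\pi \indepsym X_{Z \setminus \pa{\pi}} \mid X_{\pa{\pi}}$, so $\pa[G^{\cE}]{\pi}$ satisfies condition (i). For minimality, suppose some proper subset $S \subsetneq \pa[G^{\cE}]{\pi}$ worked, and pick a parent $\pi' \notin S$. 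The edge $\pi' \to \pi$ means $\pi'$ and $\pi$ are not d-separated by $S$ in $G^{\cE}$. By coarse faithfulness, $X_\pi \not\indepsym X_{\pi'} \mid X_S$, which by weak union contradicts $X_\pi \indepsym X_{Z\setminus S} \mid X_S$.
\end{itemize}

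Finally, the coarse faithfulness in \autoref{ass:ident}(ii) should be read as holding in each environment $e$, as the factorization in (i) does. I would note this explicitly.
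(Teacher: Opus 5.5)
Your proposal is correct and follows the paper's proof: part (i) uses \autoref{thm:order}(i) and the linear-extension property, and part (ii) feeds the sandwich of \autoref{thm:order}(ii) and the coarse Markov and faithfulness clauses into \autoref{prop:collapse} read with blocks as nodes, which you simply unpack explicitly. One small slip in that unpacking: the step from $X_\pi \indepsym X_{Z\setminus S} \mid X_S$ to $X_\pi \indepsym X_{\pi'} \mid X_S$ is the decomposition axiom, not weak union.
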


Order-based methods search over causal orders and, for each order, over parent sets.
Here the interventions fix the order once and for all, and learning $G^{\cE}$ reduces to one restricted Markov boundary search per block over a pool containing all parents and no descendants;
any parent sets chosen within the pools form a DAG.

\subsection{Scoring coarse parents}
\label{sec:score}

Without loss of generality, we assume centered data.
For each block write $X_j$ to denote $X_{\pi_j} := (X_v)_{v \in \pi_j} \in \R^{r_j}$ with $r_j := |\pi_j|$, and let $X_{\Pa_j} \in \R^{s_j}$ stack its parent blocks, with $s_j := \sum_{\pi_k \in \Pa_j} r_k$.

\begin{definition}[Coarse block-Gaussian model]
  \label{def:blockmodel}
  In environment $e$,
  \[
    X^{(e)}_j = B^{(e)}_jX^{(e)}_{\Pa_j}
                + \varepsilon^{(e)}_j,\qquad \varepsilon^{(e)}_j
              \sim \N\big(0,\Sigma^{(e)}_j\big),
  \]
  for $j = 1,\dots,q$, with unrestricted block-regression matrix $B^{(e)}_j \in \R^{r_j \times s_j}$, unrestricted $\Sigma^{(e)}_j \in \R^{r_j \times r_j} \succ 0$, and mutually independent block residuals (i.e. we assume \textit{causal sufficiency} at the coarse level).
\end{definition}

\begin{theorem}[Curved exponential family]
  \label{thm:curved}
  For each $e \in \cE$ and each DAG $H$ on $\Pi^{\cE}$, the model of \autoref{def:blockmodel} on $H$ is a curved exponential family of dimension
  \[
    \dim(H) = \sum_{j=1}^q d_j,\qquad d_j := r_js_j + \tfrac{r_j(r_j+1)}{2}.
  \]
\end{theorem}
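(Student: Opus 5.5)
We will show that the block model on $H$ is the image of a smooth injective map from an open parameter set into the natural parameter space of the full Gaussian family on $\R^d$ with zero mean. That full family is a regular exponential family, with sufficient statistic $xx^\top$ and natural parameter $-\tfrac12 K$, where $K \succ 0$ is the precision matrix. Fix $e$ and a DAG $H$ on $\Pi^{\cE}$, and order the blocks along a topological order of $H$. Stacking the block equations gives $X = BX + \varepsilon$. Here $B$ is block strictly lower triangular, with nonzero blocks only in the positions $(j,k)$ where $\pi_k \in \Pa_j$, and $\varepsilon \sim \N(0,\Sigma)$ with $\Sigma = \diag(\Sigma_1,\dots,\Sigma_q)$ block diagonal. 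Since $I-B$ is invertible, $X \sim \N\bigl(0,(I-B)^{-1}\Sigma(I-B)^{-\top}\bigr)$, and the precision is
\[
  K(B,\Sigma) = (I-B)^\top \Sigma^{-1}(I-B).
\]
Equivalently, the density factorizes as $\prod_j \N(x_j; B_j x_{\Pa_j}, \Sigma_j)$. The parameter space
\[
  \Theta = \prod_j \bigl(\R^{r_j\times s_j} \times \mathrm{PD}(r_j)\bigr)
\]
is an open subset of $\R^{D}$ with $D = \sum_j \bigl(r_js_j + r_j(r_j+1)/2\bigr)$, so $D$ is exactly the claimed $\dim(H)$.

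The key steps are these.
\begin{enumerate}[label=(\arabic*), nosep]
  \item The map $\phi:(B,\Sigma)\mapsto K(B,\Sigma)$ is polynomial in $B$ and rational in $\Sigma$, hence smooth.
  \item $\phi$ is injective, and its inverse on the image is smooth. Blockwise, $K$ determines the covariance $K^{-1}$. From the covariance we recover $B_j = \mathrm{Cov}(X_j, X_{\Pa_j})\,\mathrm{Var}(X_{\Pa_j})^{-1}$ and the Schur complement $\Sigma_j = \mathrm{Var}(X_j) - B_j\,\mathrm{Var}(X_{\Pa_j})\,B_j^\top$. Both are smooth functions of $K$, since $\mathrm{Var}(X_{\Pa_j}) \succ 0$.
  \item The Jacobian of $\phi$ has full rank $D$ everywhere. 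This follows from step 2: a smooth left inverse $\psi$ with $\psi\circ\phi = \mathrm{id}$ forces $D\phi$ to be injective by the chain rule.
\end{enumerate}
Together these say $\phi$ is an injective immersion that is a homeomorphism onto its image, i.e. a smooth embedding. Its image is therefore a $D$-dimensional embedded submanifold of the natural parameter space. This is the standard (Kass--Vos / Haughton) definition of a curved exponential family, and the dimension is $\sum_j d_j$.

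The main obstacle is step 2, in particular making sure the block coefficients are genuinely unrestricted and identified. Identification needs the residuals of different blocks to be independent, which gives the block-diagonal $\Sigma$, and it needs the regression of $X_j$ to be taken on exactly $X_{\Pa_j}$. Under acyclicity of $H$, $\varepsilon_j$ is independent of $X_{\Pa_j}$, because the parents are functions of the residuals of earlier blocks only. This independence is what makes the population regression coefficient of $X_j$ on $X_{\Pa_j}$ equal $B_j$. Finally, if several environments are pooled, the joint model is a product of independent such families with separate parameters. It is again a curved exponential family, with dimension multiplied by $|\cE|$; the statement is per environment, so this is only a remark.
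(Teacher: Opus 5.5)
Your proposal is correct and follows essentially the same route as the paper. Both parametrize the concentration matrix as $(I-B)^\top\bar\Sigma^{-1}(I-B)$ over an open parameter set of dimension $\sum_j d_j$, both recover $(B_j,\Sigma_j)$ by population regressions to get injectivity with a smooth rational inverse, and both conclude that the image is a smooth submanifold of the regular centered Gaussian family. Your one addition is the chain-rule argument: the left inverse is defined and smooth on the whole positive-definite cone, which makes $\phi$ an immersion and a homeomorphism onto its image. That is a slightly more careful justification of the paper's step ``rational with non-vanishing denominators, hence a diffeomorphism onto its image.''
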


The likelihood factorizes over blocks, so the MLE is one least-squares regression per block.
We can therefore define the score as

\begin{definition}[Pooled BIC]
  \label{def:bic}
  Let $H$ be a coarse DAG on $\Pi^{\cE}$ with parent sets $\Pa_j$, and $\cD = \{\cD^{(e)}\}_{e \in \cE}$.
  For each environment, the block score is the BIC \eqref{eq:bic} of the least-squares regression of $X_j$ on $X_{\Pa_j}$ fitted on $\cD^{(e)}$,
  \[
    \BIC^{(e)}_j(\pi_j,\Pa_j) := -n_e \log\big|\widehat\Sigma^{(e)}_j\big|
                                 - \lambda \log(n_e)\, d_j,
  \]
  where $\widehat\Sigma^{(e)}_j$ is the residual sample covariance and $d_j$ is as in \autoref{thm:curved}.
  The pooled BIC is

  \begin{equation}
    \begin{aligned}
      & \BIC(H,\cD) := \sum_{j=1}^{q} \BIC_j(\pi_j,\Pa_j),                 \\
      & \BIC_j(\pi_j,\Pa_j) := \sum_{e \in \cE} \BIC^{(e)}_j(\pi_j,\Pa_j).
    \end{aligned}
  \end{equation}
\end{definition}

Up to a constant, $\BIC(H,\cD)$ equals $\sum_{e} \big(2\hat\ell^{(e)}(H) - \lambda\log(n_e)\dim(H)\big)$, the sum of the per-environment BICs.
It is score equivalent and, by construction, decomposable, with $\BIC_j$ depending on $H$ only through $(\pi_j,\Pa_j)$; for the derivations see App.~\ref{app:score}.
We write $\indep[\cE]{\pi_j}{\pi_k}{S}$ when $\indep{X_{\pi_j}}{X_{\pi_k}}{X_S}$ holds in every $P^{(e)}$ and $X_S$ collects the variables of the blocks in $S$.
Asymptotics are taken with $n_e = \gamma_e n$, $\gamma_e > 0$ fixed, and $n \to \infty$.

\begin{proposition}[Local consistency]
  \label{prop:localcons}
  For a block $\pi_j$, a set of blocks $S$ and a block $\pi_k \notin S \cup \{\pi_j\}$,
  \[
    \BIC_j(S \cup \{\pi_k\})
    - \BIC_j(S)\xrightarrow{p} \begin{cases}
                                 +\infty & \text{if } \nindep[\cE]{\pi_j}{\pi_k}{S}, \\
                                 -\infty & \text{if } \indep[\cE]{\pi_j}{\pi_k}{S}.
                               \end{cases}
  \]
\end{proposition}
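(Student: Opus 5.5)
The plan is to split the pooled difference environment by environment,
\[
\BIC_j(S\cup\{\pi_k\})-\BIC_j(S)=\sum_{e\in\cE}\Big[-n_e\log\frac{|\widehat\Sigma^{(e)}_j(S\cup\{\pi_k\})|}{|\widehat\Sigma^{(e)}_j(S)|}-\lambda\log(n_e)\,r_jr_k\Big],
\]
using that $d_j$ grows by exactly $r_jr_k$ when $\pi_k$ is added. The penalty part is deterministic and equals $-\lambda r_jr_k\sum_e\log(\gamma_e n)$, which diverges to $-\infty$ at rate $\log n$. Everything therefore depends on the behaviour of the log-determinant ratio in each environment. The model is linear Gaussian, so $X^{(e)}$ is jointly Gaussian with a positive definite covariance $\Sigma^{(e)}$. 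This holds because $(I-A^{(e)})^{-1}$ exists and $\Omega^{(e)}\succ0$. By the law of large numbers and the continuous mapping theorem, $\widehat\Sigma^{(e)}_j(S)\to_p \Sigma^{(e)}_{j\mid S}$, the population conditional covariance of $X_{\pi_j}$ given $X_S$, and likewise for $S\cup\{\pi_k\}$.

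\textbf{Dependent case.} Suppose $\nindep[\cE]{\pi_j}{\pi_k}{S}$. Then some environment $e^\ast$ has $X_{\pi_j}\not\indepsym X_{\pi_k}\mid X_S$ under $P^{(e^\ast)}$. For Gaussians this is equivalent to $\Sigma^{(e^\ast)}_{j k\mid S}\neq0$. By the Schur complement,
\[
\Sigma_{j\mid S\cup k}=\Sigma_{j\mid S}-\Sigma_{jk\mid S}\Sigma_{k\mid S}^{-1}\Sigma_{kj\mid S}\preceq\Sigma_{j\mid S},
\]
with strict inequality in Loewner order. Hence the determinant ratio converges to a constant $c_{e^\ast}<1$. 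The term $-n_{e^\ast}\log(\cdot)$ then behaves like $\gamma_{e^\ast}n\,|\log c_{e^\ast}|\to+\infty$ at linear rate.

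Every other environment contributes a nonnegative term. Adding regressors never increases the residual covariance in Loewner order, so the sample log-determinant ratio is $\le 0$ for every $n$; this is the sample-level Schur complement. The linear gain therefore dominates the $O(\log n)$ penalty, and the sum diverges to $+\infty$ in probability.

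\textbf{Independent case.} Now suppose $\indep[\cE]{\pi_j}{\pi_k}{S}$ in every environment. The population ratio is $1$ in each $e$, so I need the rate of the sample term. I would identify $-n_e\log(|\widehat\Sigma_{j\mid S\cup k}|/|\widehat\Sigma_{j\mid S}|)$ with the Gaussian likelihood-ratio statistic for the nested hypothesis $B_{jk}=0$ in the multivariate regression. These are nested regular submodels of the curved exponential family of \autoref{thm:curved}, so Wilks' theorem applies; alternatively, one can appeal directly to the classical result for Wilks' $\Lambda$. Either way the statistic converges to $\chi^2_{r_jr_k}$ and is $O_p(1)$. The sum over the finitely many environments is $O_p(1)$, the penalty still diverges to $-\infty$, and the difference tends to $-\infty$ in probability.

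\textbf{Main obstacle.} The step I expect to be delicate is justifying the $O_p(1)$ rate of the LR statistic. Wilks' theorem needs the true parameter to lie in the interior of the nested model, which holds here since $\Sigma^{(e)}_j\succ0$ and the model on $\pi_j$ is unrestricted. I would state this explicitly, citing \citet{haughton_choice_1988}, or verify it through the exact Wilks' $\Lambda$ distribution. A second point to check is that the reduction to conditional covariance uses only the true joint Gaussian law. It does not require the coarse block model to be correctly specified for $S$, because least squares consistently estimates the population linear projection in any case.
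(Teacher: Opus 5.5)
Your proposal is correct and follows the paper's proof essentially step for step. The matching steps are: the per-environment split with the $r_jr_k$ penalty increment; the Schur-complement limit, which gives an order-$n$ gain in any environment where the partial cross-covariance is nonzero; Wilks' $\chi^2_{r_jr_k}$ limit in the independent case; and a finite sum over environments with $n_e=\gamma_e n$. Your use of the in-sample Schur complement to show that the other environments' likelihood gains are nonnegative is a slightly tidier way to handle them than the paper's appeal to the per-environment lemma. One wording fix: the population gap $\Sigma_{jk\mid S}\Sigma_{k\mid S}^{-1}\Sigma_{kj\mid S}$ is positive semidefinite and nonzero rather than positive definite, and that is already enough for the determinant ratio to be strictly below one.
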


Intuitively, a single dependent environment contributes an order-$n$ gain that beats every order-$\log n$ penalty; with no dependent environment, only the penalties remain.

\subsection{COARSE and its guarantees}
\label{sec:guarantees}

We now turn to estimation.
Following \citet{madaleno_coarsening_2026-1}, we collect the signatures in the \emph{intervention descendant matrix} $M \in \{0,1\}^{d \times m}$, with $M_{v,e} = \1[e \in \sigma(v)]$, and define the blocks of $\Pi^{\cE}$ as the classes of equal rows of $M$.
Each entry tests whether the marginal of $X_v$ in environment $e \neq 0$ differs from the observational one.
Under Gaussianity we use a Welch $t$-test \citep{welch_generalization_1947} at level $\alpha$:
\[
  M_{v,e} = \1\bigl[p_{v,e} < \alpha\bigr], \qquad p_{v,e}
          := \mathrm{Welch}_t\bigl(X_v^{(e)}, X_v^{(0)}\bigr).
\]
The Welch $t$-test only detects mean shifts.
An intervention that changes only higher-order moments of $X_v$ (e.g., its variance) gives $M_{v,e} = 0$ even though $f^{(e)}(X_v) \neq f^{(0)}(X_v)$.
\autoref{ass:ident}, however, allows any distributional change.
We keep the Welch $t$-test as a cheap default for the Gaussian linear setting, but it can be swapped for a general two-sample test (e.g., Kolmogorov-Smirnov, kernel-based \citep{gretton_kernel_2012} or energy distance \citep{szekely_energy_2013} tests) with no other change to the algorithm.
Given $M$, \autoref{alg:refinetest} refines the partition greedily, moving down the meet-subsemilattice of valid coarsenings to the partition identified by $M$.
Once the partition is returned, we compute the estimated support of each block and its parent pool $\hatpastar{\pi}$ as in \autoref{def:support}.
For each block we then run a single \textsc{GrowShrink} call (\autoref{alg:gs}) over its parent pool, estimating its coarse parent set $\widehat{\Pa}_j$.
The procedure returns the coarse DAG $\hat G = \big(\hat\Pi,\{\pi_k \to \pi_j:\pi_k \in \widehat{\Pa}_j\}\big)$ (\autoref{alg:coarse}).

As is standard for constraint-based and hybrid methods, including RePaRe \citep{madaleno_coarsening_2026-1}, we treat the shift tests as an oracle: $\hat M_{v,e} = M_{v,e}$ for all $v$ and $e$.
Since $\hat\Pi$, the supports, and the pools are deterministic functions of $\hat M$, they then coincide respectively with $\Pi^{\cE}$, $\supp$, and $\pastar{\cdot}$, and the order of \autoref{sec:order} is exact.
What remains is to prove consistency of the score.

\begin{theorem}[Consistency of \coarse]
  \label{thm:consistency}
  Under \autoref{ass:ident}, \autoref{def:blockmodel} and oracle tests, as $n \to \infty$ the output of \autoref{alg:coarse} satisfies $\PP(\hat G = G^{\cE}) \to 1$, and it does not depend on the order in which blocks are processed.
\end{theorem}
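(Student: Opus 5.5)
Under oracle tests, $\hat M = M$, so $\hat\Pi = \Pi^{\cE}$, the estimated supports equal $\supp$, and $\hatpastar{\pi_j} = \pastar{\pi_j}$ for every block. The only randomness left is in the $q$ \textsc{GrowShrink} calls. The plan is therefore: (a) show that the population (CI-oracle) version of each call returns $\pa[G^{\cE}]{\pi_j}$; (b) show that with probability tending to one every score comparison made during the run agrees with its CI counterpart; (c) take a union bound over the finitely many comparisons and blocks.

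For (a), fix a block $\pi_j$ and pool $Z = \pastar{\pi_j}$. By \autoref{cor:noorder}(ii), the restricted Markov boundary of $X_{\pi_j}$ relative to $Z$ under each $P^{(e)}$ is $\pa[G^{\cE}]{\pi_j}$. Since this set is the same across environments, it is also the boundary for the pooled relation $\indepsym_{\cE}$. I will check that the pooled relation is a compositional graphoid on blocks, at least for triples involving $\pi_j$. Gaussian CI is a compositional graphoid within each $P^{(e)}$, by positivity of the densities. Properties such as intersection and composition pass to ``holds in every $e$'' because their hypotheses are conjunctions over $e$, and so is their conclusion. Then the grow-shrink correctness result quoted in \autoref{sec:local-struct-search} gives that the CI-oracle run outputs $\pa[G^{\cE}]{\pi_j}$, whatever the visiting order. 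In fact, simply applying \autoref{prop:collapse} at the block level with coarse faithfulness (\autoref{ass:ident}(ii)) in each environment would suffice.

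For (b), the key tool is \autoref{prop:localcons}. Every comparison made by \textsc{GrowShrink} on block $\pi_j$ has the form $\BIC_j(S\cup\{\pi_k\}) - \BIC_j(S)$ with $S \subseteq Z$ and $\pi_k \in Z\setminus S$. There are at most $2^{|Z|}\,|Z|$ such pairs $(S,\pi_k)$, a finite number independent of $n$. By \autoref{prop:localcons}, each increment diverges to $+\infty$ or $-\infty$ in probability according to the pooled CI statement. Hence on an event $\Omega_n$ with $\PP(\Omega_n)\to 1$, every increment has the sign predicted by the oracle. This is a finite intersection of events each with probability tending to one. On $\Omega_n$, the score-based run follows exactly the same trajectory as the oracle run and returns $\pa[G^{\cE}]{\pi_j}$. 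The multi-sweep, repeat-until-no-change structure matters only for termination, which is finite since each sweep changes $S$ monotonically within a phase.

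The main subtlety I expect is making sure the event is chosen uniformly over all trajectories. The sets $S$ visited are random, so I will define $\Omega_n$ over all pairs $(S,\pi_k)$ rather than only the visited ones. This is exactly why finiteness of the pool matters. Intersecting $\Omega_n$ over the $q$ blocks gives that $\hat G$ has the same parent sets as $G^{\cE}$ and the same vertex set $\hat\Pi = \Pi^{\cE}$, so $\PP(\hat G = G^{\cE})\to 1$. Order independence follows for two reasons. First, each block's search uses only its own data $X_{\pi_j}$, its pool and the decomposable score $\BIC_j$, so no call depends on another block's output. Second, within a call, the oracle output does not depend on visiting order, as noted in (a).
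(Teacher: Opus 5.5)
Your proposal is correct and follows essentially the same route as the paper's proof. Oracle tests fix $\Pi^{\cE}$ and the pools; \autoref{cor:noorder}(ii) makes the true coarse parent set the restricted boundary in every environment, and hence under the compositional graphoid $\indepsym_{\cE}$ (\autoref{lem:graphoid}); \autoref{prop:localcons}, the all-pairs argument of \autoref{lem:gs}(ii) and a union bound over blocks then give $\PP(\hat G = G^{\cE}) \to 1$, and order independence holds because each call depends only on $\pi_j$, its pool and $\BIC_j$.

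One small caveat: your aside that \autoref{prop:collapse} alone would suffice only identifies the boundary. The grow-shrink step still needs composition and contraction for the pooled relation, and your graphoid check supplies these (as does the observation that coarse faithfulness makes every environment's block-level CI relation equal d-separation in $G^{\cE}$).
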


To be precise, being score equivalent, the pooled BIC alone identifies $G^{\cE}$ only up to $[G^{\cE}]$, whose members share the skeleton.
The heterogeneity of the data enters only through the supports, which fix every orientation, so no interventional equivalence class or interventional score is needed.

\begin{theorem}[Worst-case complexity]
  \label{thm:complexity}
  With the Welch test and $n_0 = \cO(n/|\cE|)$, \coarse runs in time $\cO\big(d^2 n + |\cE|\,q^{3}d^{3}\big)$,
  where $d$ is the number of fine-grained nodes, $n = \sum_e n_e$ the total sample size and
  $q = |\widehat\Pi^{\cE}| \le d$ the number of estimated blocks.
\end{theorem}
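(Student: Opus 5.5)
We account for the two phases of \autoref{alg:coarse} separately and add the costs. The \emph{shift-testing and partition phase} has three parts. Computing the intervention descendant matrix $M$ takes one Welch $t$-test per pair $(v,e)$. Each test needs the sample mean and variance of $X_v$ in $\cD^{(e)}$ and in $\cD^{(0)}$. We precompute the observational moments once in $\cO(d n_0)$ and each environment's moments in $\cO(d n_e)$, which gives $\cO(dn)$ in total; the assumption $n_0 = \cO(n/|\cE|)$ ensures that reusing $\cD^{(0)}$ across environments does not add a hidden factor $|\cE|$. Grouping equal rows of $M$ into blocks (\autoref{alg:refinetest}) costs at most $\cO(d^2 m)$ by pairwise row comparison. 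Computing supports and parent pools $\hatpastar{\pi}$ requires $\cO(q^2)$ subset tests, each costing $\cO(m)$. Since $m = |\cE|-1$, these terms are dominated by the final bound.

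For the \emph{edge phase}, we first precompute, for each environment $e$, the sample covariance matrix $\widehat S^{(e)}$ of all $d$ variables. This costs $\cO(d^2 n_e)$, hence $\cO(d^2 n)$ summed over $e$, and accounts for the first term. Afterwards every block score $\BIC^{(e)}_j(\pi_j,S)$ is a function of $\widehat S^{(e)}$ alone: the residual covariance is the Schur complement $\widehat\Sigma^{(e)}_j = \widehat S_{jj} - \widehat S_{jS}\widehat S_{SS}^{-1}\widehat S_{Sj}$. Its cost is dominated by a Cholesky factorization or inversion of $\widehat S_{SS}$, which is $\cO(s^3)$ with $s \le d$, followed by an $\cO(r_j^3)$ log-determinant. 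So each score evaluation is $\cO(d^3)$ per environment, and $\cO(|\cE| d^3)$ for the pooled $\BIC_j$.

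Next we count score evaluations. For block $\pi_j$ the pool has at most $q-1$ blocks. In the grow phase each full sweep either adds a block or terminates, and $S$ can grow at most $q$ times, so there are at most $q$ sweeps of at most $q$ evaluations each, i.e.\ $\cO(q^2)$. The shrink phase only removes elements, so the same argument applies. The key point is that these repeated sweeps are bounded by monotonicity: additions only during grow and deletions only during shrink, with each sweep changing $S$ or halting. This gives $\cO(q^2)$ evaluations per block and $\cO(q^3)$ over all blocks. Multiplying by the per-evaluation cost gives $\cO(|\cE| q^3 d^3)$. Adding the covariance precomputation yields $\cO(d^2 n + |\cE| q^3 d^3)$, which absorbs all the phase-one terms.

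The main obstacle is making the edge-phase accounting honest. A naive implementation refits each regression directly from the data, which would introduce a factor of $n$ into every score call. The proof must therefore argue explicitly that the least-squares residual covariance depends only on sufficient statistics, namely the sample covariances. It must also state the sweep bound for \autoref{alg:gs} precisely: the repeat-until-no-change loops can be shown to terminate within $q$ sweeps per phase by the monotonicity argument above.
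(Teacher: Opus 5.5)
Your proposal is correct and follows the paper's proof essentially step for step. Both arguments charge $\cO(dn)$ for the Welch moments, lower-order terms for grouping the rows of $\widehat M$ and for the support/pool comparisons, $\cO(d^2n)$ for computing and caching the sample covariances $S^{(e)}$, and $\cO(q^2)$ grow-shrink probes per block at $\cO(|\cE|\,d^3)$ each (Cholesky and Schur complement on the cached covariances), for $\cO(|\cE|\,q^3d^3)$ overall. The one difference is minor: because you compute the observational moments once and reuse them, your count already gives $\cO(dn)$ without the assumption $n_0 = \cO(n/|\cE|)$, so your remark that the assumption prevents a hidden factor $|\cE|$ is unnecessary rather than wrong. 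That assumption is needed only in the paper's accounting, which charges $\cO(d(n_e+n_0))$ per environment.
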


\autoref{thm:consistency} assumes oracle shift tests and enough samples for every block MLE to exist, i.e.\ $n_e \ge r_j + s_j$ (\autoref{app:block}).
In finite samples a fixed $\alpha$ over $dm$ tests can split or merge blocks, and large blocks inflate the model parameters, making estimation complex and data hungry.
Three variants address these issues (\autoref{app:heuristics}; used in \autoref{sec:experiments}): CV-\coarse selects $\alpha$ by cross-validation, kPC-\coarse compresses blocks to their leading principal components, and avg-\coarse summarizes blocks by their standardized mean.

\section{Experiments}
\label{sec:experiments}

We validate the results of \autoref{sec:method} on synthetic and real data: convergence of the recovered partition and edges, scaling in the number of variables (\autoref{sec:exp-scal}), edge recovery and runtime against RePaRe \citep{madaleno_coarsening_2026-1} (\autoref{sec:exp-compare}), and the light-tunnel causal chamber \citep{gamella_causal_2025} (\autoref{sec:exp-chamber}).
Additional experiments on sensitivity and robustness to assumption violations, together with theoretical results for multi-target interventions and high-dimensional regimes, are reported in \autoref{app:experiments}.

\subsection{Setup}
\label{sec:exp-setup}

We sample linear Gaussian models \eqref{eq:lin-gauss} with \texttt{sempler} \citep{gamella_characterization_2025}.
Ground-truth DAGs are Erd\H{o}s--R\'enyi (ER) graphs with edge density $\rho$.
Edge weights have magnitude uniform in $[0.5, 2]$ and random sign, and each model is standardized to unit observational variance \citep{ormaniec_standardizing_2025}, to avoid variance artifacts that would invalidate the results \citep{reisach_beware_2021}.
Each of the $m$ interventional environments is a soft shift intervention on unknown targets drawn without replacement, adding $\N(2,1)$ noise to each targeted variable.
All environments have the same sample size $n_e$, and every configuration is repeated over 20 seeds.
Unless stated otherwise, \coarse builds $\hat M$ with a Welch $t$-test at level $\alpha = 10^{-4}$ and uses $\lambda = 1$; full details in App.~\ref{app:exp-details}.

Partition recovery is measured by the adjusted Rand index (ARI) between $\hat\Pi$ and $\Pi^\cE$ \citep{warrens_understanding_2022} ($1$ for a perfect match, $0$ at chance), edge recovery by precision, recall and $F_1$-score on the coarse DAG, and runtime as the wall-clock time of one fit on one CPU core.

\begin{figure*}[t]
  \centering
  \begin{minipage}{0.245\textwidth}
    \centering
    \includegraphics[width=0.98\linewidth]{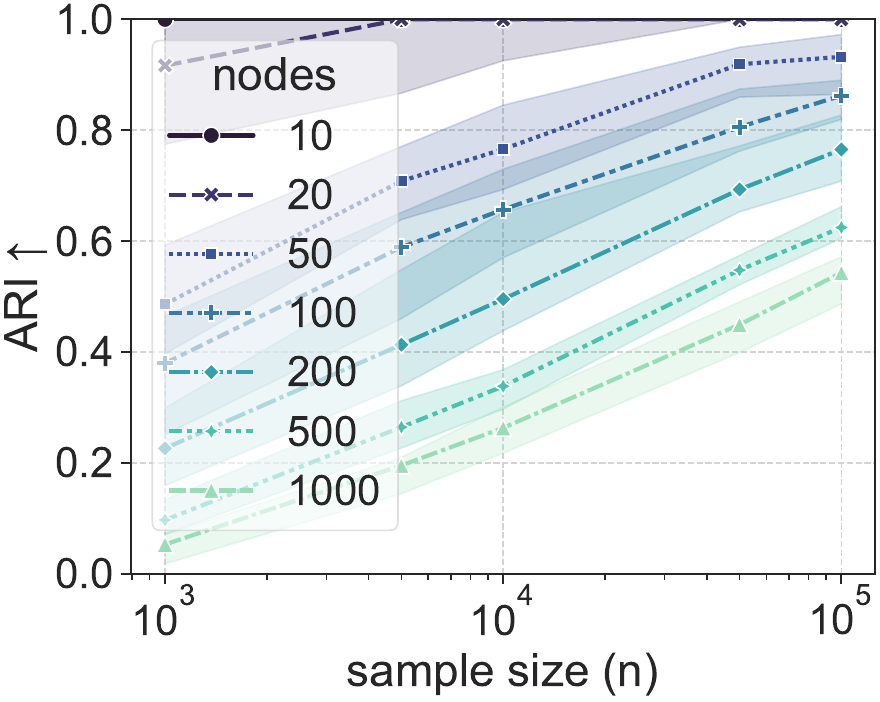}\\
    {\small (a) partition recovery}
  \end{minipage}
  \hfill
  \begin{minipage}{0.245\textwidth}
    \centering
    \includegraphics[width=0.98\linewidth]{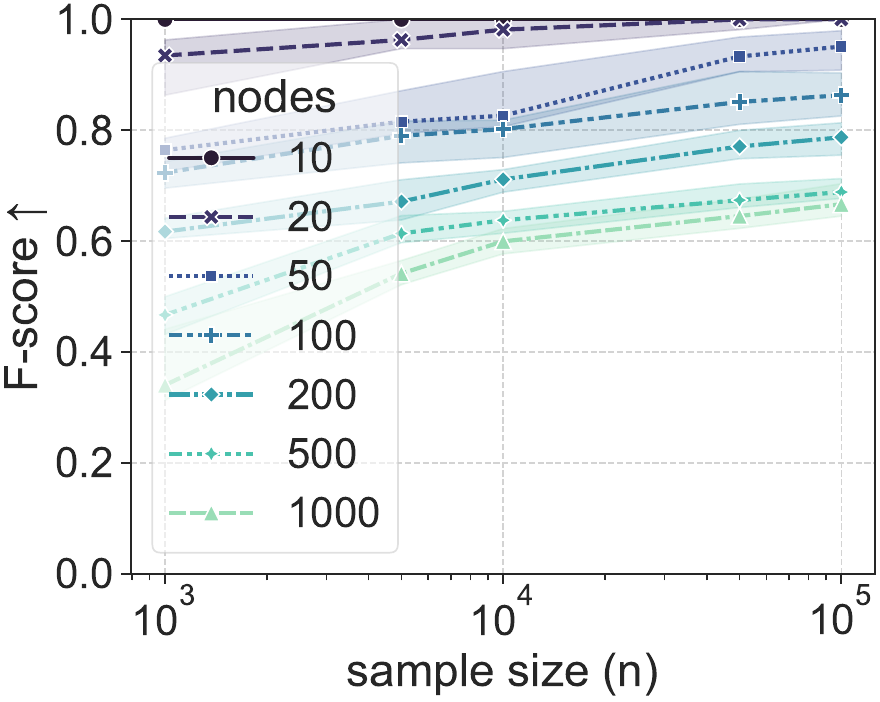}\\
    {\small (b) coarse-edge recovery}
  \end{minipage}
  \hfill
  \begin{minipage}{0.245\textwidth}
    \centering
    \includegraphics[width=0.98\linewidth]{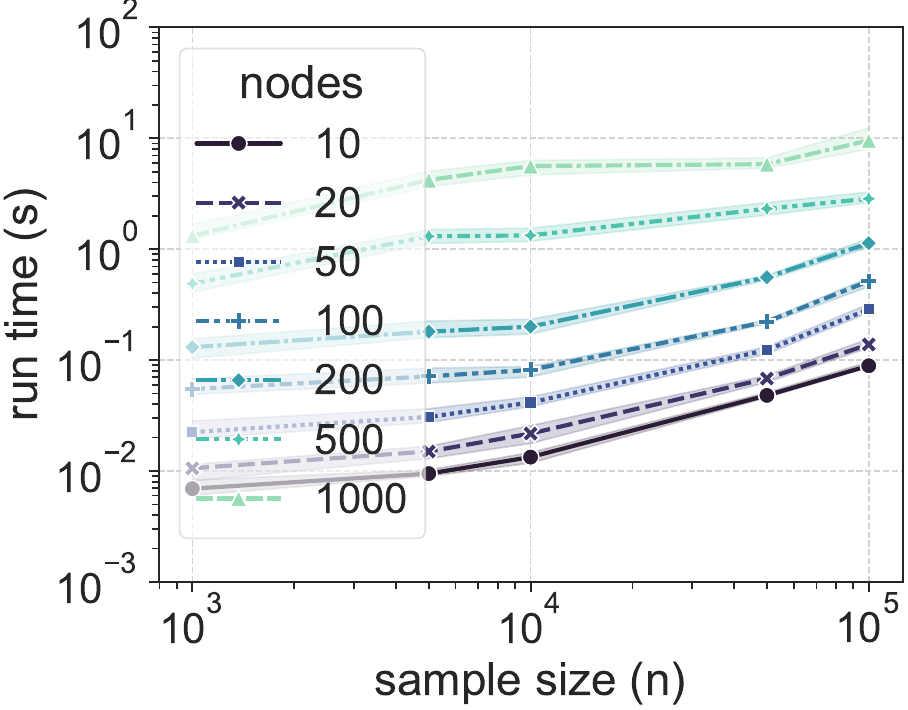}\\
    {\small (c) runtime in $n_e$}
  \end{minipage}
  \hfill
  \begin{minipage}{0.245\textwidth}
    \centering
    \includegraphics[width=0.98\linewidth]{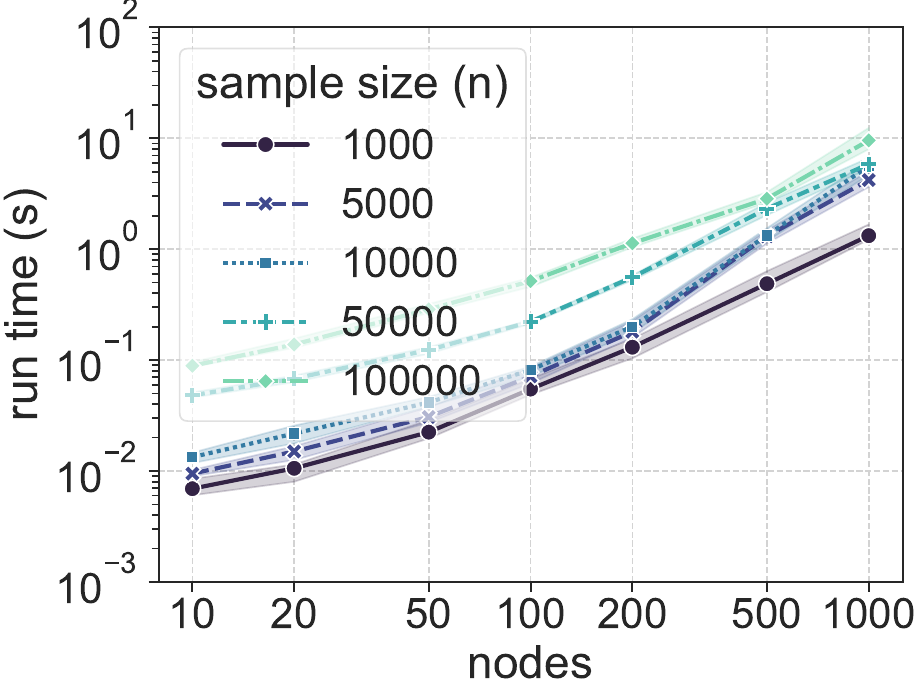}\\
    {\small (d) runtime in $d$}
  \end{minipage}
  \caption{
    Full \coarse pipeline on ER graphs, $\rho = 0.2$, $m = 5$.
    Both recovery metrics improve with $n_e$ at every $d$.
  }
  \label{fig:scalability}
\end{figure*}

\subsection{Consistency and scalability}
\label{sec:exp-scal}

\autoref{fig:scalability} reports runs of the full \autoref{alg:coarse} for $d$ from $10$ to $1000$ and $n_e$ from $10^3$ to $10^5$.
At every $d$, both ARI and F-score increase with $n_e$, confirming \autoref{thm:consistency}.
For small graphs ($\approx d \le 20$), both the partition and the coarse edges are recovered near-perfectly once $n_e \ge 10^4$.
Larger graphs need more data: at $n_e = 10^5$ the F-score varies from $0.95$ for $d = 50$ to $0.79$ for $d = 200$ and $0.67$ for $d = 1000$, while ARI drops from $0.93$ at $d = 50$ to $0.54$ at $d = 1000$.
Nonetheless, under the oracle partition, \autoref{tab:compare} shows near-perfect F-scores up to $d = 500$ once $n_e \ge 5 \times 10^3$, suggesting the loss in performance stems from the partition estimation stage: $\hat M$ rests on $d \cdot m$ Welch tests, and one missed or spurious shift moves a variable to another block, so partition errors grow with $d$.
Runtime instead grows far below the worst-case bound of \autoref{thm:complexity} in both $n_e$ and $d$: about $10$ seconds for $d = 1000$ with $10^5$ samples, and about one second for $d = 200$.

\subsection{Comparison against RePaRe}
\label{sec:exp-compare}

RePaRe keeps a coarse edge only if a CI test rejects in every environment, while \coarse adds a parent block as soon as the pooled BIC improves.
\autoref{tab:compare} reports edge recovery under the oracle partition---a stage common to both methods---on ER graphs with $\rho = 0.2$, $m = 5$ and $d$ up to $500$ (full grid in \autoref{tab:compare-full}).
With small samples, RePaRe's many CI tests lack power, and a single environment that fails to reject is enough to miss an edge.
The \coarse score instead suffers on large graphs with few interventions and small samples, where blocks of tens of variables make the model parameters hard to estimate.
From $n_e = 5 \times 10^3$ on, \coarse reaches an F-score of at least $0.99$ at every $d$, with speed-ups from $9 \times$ to $510 \times$.

\begin{table*}[t]
  \centering
  \footnotesize
  \caption{
    Edge recovery under the oracle partition, ER graphs, $\rho = 0.2$, $m = 5$.
    F-score as mean $\pm$ sample standard deviation over 20 seeds; speed-up is the ratio of mean RePaRe to mean \coarse runtime.
    Bold marks the better method where the two differ at the reported precision.
  }
  \label{tab:compare}
  \begin{tabular}{lccccccc}
    \toprule
    $d$          &       10        &            20            &            50            &           100            &       200       &           300            &           500            \\
    \midrule
    \multicolumn{8}{l}{\textbf{F-score $\uparrow$, $n_e = 10^3$}}                                                                                                                           \\[2pt]
    \coarse      & 1.00 $\pm$ 0.00 & \textbf{1.00 $\pm$ 0.02} & \textbf{0.99 $\pm$ 0.02} & \textbf{0.98 $\pm$ 0.02} & 0.95 $\pm$ 0.05 &     0.84 $\pm$ 0.08      &     0.71 $\pm$ 0.05      \\
    RePaRe       & 1.00 $\pm$ 0.01 &     0.96 $\pm$ 0.03      &     0.95 $\pm$ 0.05      &     0.93 $\pm$ 0.07      & 0.95 $\pm$ 0.08 & \textbf{0.95 $\pm$ 0.10} & \textbf{0.97 $\pm$ 0.04} \\
    \midrule
    \multicolumn{8}{l}{\textbf{F-score $\uparrow$, $n_e = 10^5$}}                                                                                                                           \\[2pt]
    \coarse      & 1.00 $\pm$ 0.00 &     1.00 $\pm$ 0.00      &     1.00 $\pm$ 0.00      &     1.00 $\pm$ 0.00      & 1.00 $\pm$ 0.00 & \textbf{1.00 $\pm$ 0.00} &     1.00 $\pm$ 0.00      \\
    RePaRe       & 1.00 $\pm$ 0.00 &     1.00 $\pm$ 0.00      &     1.00 $\pm$ 0.00      &     1.00 $\pm$ 0.00      & 1.00 $\pm$ 0.00 &     0.99 $\pm$ 0.01      &     1.00 $\pm$ 0.01      \\
    \midrule
    \multicolumn{8}{l}{\textbf{Speed-up $\uparrow$ (RePaRe runtime / \coarse runtime)}}                                                                                                     \\[2pt]
    $n_e = 10^3$ &    9$\times$    &        21$\times$        &        38$\times$        &        44$\times$        &   37$\times$    &        37$\times$        &        32$\times$        \\
    $n_e = 10^5$ &   51$\times$    &       172$\times$        &       342$\times$        &       510$\times$        &   509$\times$   &       424$\times$        &       329$\times$        \\
    \bottomrule
  \end{tabular}
\end{table*}

\subsection{Real data: the causal chamber}
\label{sec:exp-chamber}

We use measurements from the light-tunnel causal chamber \citep{gamella_causal_2025}, whose causal graph is known by construction: $d = 20$ actuator and sensor variables, one observational dataset and five interventional ones, each perturbing one light intensity (R, G, B) or one polarizer angle.
Besides RePaRe, we run three fine-grained baselines that also use interventional data to estimate the true graph: GIES \citep{hauser_characterization_2011} (with the intervention target assumed known), GnIES \citep{gamella_characterization_2025} and UT-IGSP \citep{squires_permutation-based_2019} (both of which estimate the intervention targets).
For each method with hyperparameters, we compare selected values against oracle ones: \coarse selects by cross-validation, UT-IGSP by BIC, and RePaRe uses the GnIES score, which needs the intervention targets to pool data across environments.

\coarse and RePaRe recover the true partition (ARI $= 1$) and the same coarse DAG, with $8$ of the $10$ true coarse edges and no false edge (\autoref{tab:chamber}).
Since the fine-grained methods learn a different object, we score all methods at both resolutions: a coarse edge expands to all variable pairs it connects, and a fine-grained graph collapses onto the true partition.
Each directed F-score is paired with a negative control \citep{petersen_are_2025} measuring whether a method improves over chance.
\coarse and RePaRe score highest at both levels and above their null intervals.
GIES beats chance at both levels while UT-IGSP only at the variable level.
GnIES recovers directions no better than chance, as reported by \citet{gamella_characterization_2025}: interventions in this dataset mostly shift the noise mean, which GnIES removes by centering the data.

\begin{table*}[t]
  \centering
  \footnotesize
  \caption{
    Light tunnel, ungrouped mode.
    Directed F-score at the variable and block level, each against its negative control (mean and 95\% interval over $1000$ random DAGs with the same number of edges).
    Fit and Search are wall-clock seconds of the final fit and hyperparameter search.
  }
  \label{tab:chamber}
  \begin{tabular}{lccccccc}
    \toprule
            &      & \multicolumn{2}{c}{\textbf{Variable level}} &  \multicolumn{2}{c}{\textbf{Block level}}  &        &          \\
    \cmidrule(lr){3-4}\cmidrule(lr){5-6}
    \textbf{Method}  & \textbf{ARI}  & \textbf{F1 $\uparrow$} &       \textbf{Null}        & \textbf{F1 $\uparrow$} &       \textbf{Null}        & \textbf{Fit(s)} & \textbf{Search(s)} \\
    \midrule
    \coarse & 1.00 & \textbf{0.74} & 0.13 [0.00, 0.38] & \textbf{0.89} & 0.17 [0.00, 0.44] &  0.02  &   0.40    \\
    RePaRe  & 1.00 & \textbf{0.74} & 0.12 [0.00, 0.35] & \textbf{0.89} & 0.16 [0.00, 0.44] &  0.10  &   1.54    \\
    GIES    &  --  &     0.62      & 0.10 [0.02, 0.21] &     0.80      & 0.27 [0.12, 0.45] &  1.83  &    --     \\
    GnIES   &  --  &     0.02      & 0.12 [0.02, 0.23] &     0.07      & 0.27 [0.11, 0.46] &  376   &    --     \\
    UT-IGSP &  --  &     0.44      & 0.12 [0.02, 0.23] &     0.39      & 0.28 [0.11, 0.45] &  0.19  &   2.69    \\
    \bottomrule
  \end{tabular}
\end{table*}

\section{Discussion}
\label{sec:discussion}

We introduced \coarse, the first score-based method for learning a coarsened causal model from heterogeneous data.
Intervention signatures fix the partition and a causal order over the blocks, and a BIC pooled over environments drives one local search per block.
The target, the interventional coarsening $G^{\cE}$, extends that of \citet{madaleno_coarsening_2026-1} to unknown, multi-target interventions and is identifiable from marginal shifts alone.
The procedure is consistent and runs in polynomial time;
it matches RePaRe on edge recovery given enough samples and runs faster in every configuration, by up to $510 \times$ under the oracle partition,
and on the light-tunnel chamber it recovers the block structure of the physical system in a fraction of a second.

The design carries four limitations.
First, the pipeline is hybrid and sequential: the partition rests on $d \cdot m$ Welch tests, with their multiple-testing burden and limited power at small samples, and a misread signature moves a variable between blocks before any edge is scored;
at large $d$ the loss comes mostly from this stage (\autoref{fig:scalability}).
These tests also detect only mean shifts, so with them the oracle-test assumption of \autoref{thm:consistency} requires every shift in \autoref{ass:ident}(iii) to move the mean.
Second, the pooled BIC is liberal by construction: a coarse edge survives on the evidence of a single environment (\autoref{sec:score}), whereas the all-environments rule of RePaRe errs conservative (\autoref{sec:exp-compare}).
The same score can also err conservative through its penalties, since adding a parent block costs $r_j r_k$ parameters per environment and large blocks observed in few environments need many samples before their true edges appear.
Third, the resolution of $G^{\cE}$ is bounded by the available interventions: multi-target environments can merge variables that single-target ones would separate.
Fourth, the guarantees rest on acyclic fine graphs, a linear Gaussian model without hidden confounders, soft interventions, and an observational baseline: hard interventions would have each environment score a mutilated coarse graph, which we conjecture breaks the guarantee,
and without a baseline neither the partition nor the order is defined.

Each limitation suggests future work.
Signatures could be read from non-parametric or description-length criteria rather than tests \citep{janzing_causal_2008,marx_formally_2021,mameche_discovering_2022}, or the partition and the coarse edges estimated jointly under a single objective \citep{niu_learning_2022}.
A target-aware score in the spirit of GnIES \citep{gamella_characterization_2025} that does not know the targets could pool environments more sample-efficiently,
and resampling with stability selection \citep{meinshausen_stability_2010} would attach confidence to each coarse edge.
Interventions themselves could be chosen to refine the coarsening where it matters most.
Further out, natural next steps include cyclic fine graphs, already identifiable in the linear non-Gaussian setting \citep{madaleno_coarsening_cycles_2026}, baseline-free regimes, perhaps via context variables \citep{mooij_joint_2020}, and nonlinear or non-Gaussian scores.

The systems that motivate causal abstraction---high-dimensional measurements of cells, brains, and images---face two obstacles at once: the right macro-variables are unknown, and the fine DAG is out of reach.
When heterogeneity is available, \coarse addresses both:
it reads the grouping and the order off the shifts, leaving the data only a small, polynomial skeleton search.

\clearpage
\bibliography{references}

@article{benjamini_control_2001,
  title = {The control of the false discovery rate in multiple testing under dependency},
  volume = {29},
  doi = {10.1214/aos/1013699998},
  language = {en},
  number = {4},
  urldate = {2026-10-06},
  journal = {The Annals of Statistics},
  publisher = {Institute of Mathematical Statistics},
  author = {Benjamini, Yoav and Yekutieli, Daniel},
  month = aug,
  year = {2001},
  pages = {1165--1188},
}

@article{benjamini_controlling_1995,
  title = {Controlling the {False} {Discovery} {Rate}: {A} {Practical} and {Powerful} {Approach} to {Multiple} {Testing}},
  volume = {57},
  copyright = {https://academic.oup.com/journals/pages/open\_access/funder\_policies/chorus/standard\_publication\_model},
  issn = {1369-7412, 1467-9868},
  shorttitle = {Controlling the {False} {Discovery} {Rate}},
  url = {https://academic.oup.com/jrsssb/article/57/1/289/7035855},
  doi = {10.1111/j.2517-6161.1995.tb02031.x},
  language = {en},
  number = {1},
  urldate = {2026-10-06},
  journal = {Journal of the Royal Statistical Society Series B: Statistical Methodology},
  author = {Benjamini, Yoav and Hochberg, Yosef},
  month = jan,
  year = {1995},
  pages = {289--300},
}

@article{dunn_multiple_1961,
  title = {Multiple {Comparisons} among {Means}},
  volume = {56},
  issn = {0162-1459},
  url = {https://www.tandfonline.com/doi/abs/10.1080/01621459.1961.10482090},
  doi = {10.1080/01621459.1961.10482090},
  number = {293},
  urldate = {2026-10-06},
  journal = {Journal of the American Statistical Association},
  publisher = {Taylor \& Francis},
  author = {Dunn, Olive Jean},
  month = mar,
  year = {1961},
  pages = {52--64},
}

@misc{lam_greedy_2022,
  title = {Greedy {Relaxations} of the {Sparsest} {Permutation} {Algorithm}},
  url = {http://arxiv.org/abs/2206.05421},
  doi = {10.48550/arXiv.2206.05421},
  urldate = {2026-10-06},
  publisher = {arXiv},
  author = {Lam, Wai-Yin and Andrews, Bryan and Ramsey, Joseph},
  month = jun,
  year = {2022},
  note = {arXiv:2206.05421 [cs.AI]},
}

@article{szekely_energy_2013,
  title = {Energy statistics: {A} class of statistics based on distances},
  volume = {143},
  issn = {03783758},
  shorttitle = {Energy statistics},
  url = {https://linkinghub.elsevier.com/retrieve/pii/S0378375813000633},
  doi = {10.1016/j.jspi.2013.03.018},
  language = {en},
  number = {8},
  urldate = {2026-10-06},
  journal = {Journal of Statistical Planning and Inference},
  author = {Székely, Gábor J. and Rizzo, Maria L.},
  month = aug,
  year = {2013},
  pages = {1249--1272},
}

@misc{bing_structural_2026,
  title = {Structural {Causal} {Bottleneck} {Models}},
  url = {http://arxiv.org/abs/2603.08682},
  doi = {10.48550/arXiv.2603.08682},
  urldate = {2026-05-27},
  publisher = {arXiv},
  author = {Bing, Simon and Wahl, Jonas and Runge, Jakob},
  month = mar,
  year = {2026},
  note = {arXiv:2603.08682 [stat.ML]},
}

@inproceedings{gamella_active_2020,
  title = {Active {Invariant} {Causal} {Prediction}: {Experiment} {Selection} through {Stability}},
  volume = {33},
  shorttitle = {Active {Invariant} {Causal} {Prediction}},
  url = {https://proceedings.neurips.cc/paper/2020/hash/b197ffdef2ddc3308584dce7afa3661b-Abstract.html},
  urldate = {2026-09-23},
  booktitle = {Advances in {Neural} {Information} {Processing} {Systems}},
  publisher = {Curran Associates, Inc.},
  author = {Gamella, Juan L. and Heinze-Deml, Christina},
  year = {2020},
  pages = {15464--15475},
}

@misc{kawahara_grouplingam_2010,
  title = {{GroupLiNGAM}: {Linear} non-{Gaussian} acyclic models for sets of variables},
  shorttitle = {{GroupLiNGAM}},
  url = {http://arxiv.org/abs/1006.5041},
  doi = {10.48550/arXiv.1006.5041},
  urldate = {2026-09-21},
  publisher = {arXiv},
  author = {Kawahara, Yoshinobu and Bollen, Kenneth and Shimizu, Shohei and Washio, Takashi},
  month = jun,
  year = {2010},
  note = {arXiv:1006.5041 [cs.AI]},
}

@misc{reisach_beware_2021,
  title = {Beware of the {Simulated} {DAG}! {Causal} {Discovery} {Benchmarks} {May} {Be} {Easy} {To} {Game}},
  url = {http://arxiv.org/abs/2102.13647},
  doi = {10.48550/arXiv.2102.13647},
  urldate = {2026-05-31},
  publisher = {arXiv},
  author = {Reisach, Alexander G. and Seiler, Christof and Weichwald, Sebastian},
  month = nov,
  year = {2021},
  note = {arXiv:2102.13647 [stat.ML]},
}

@inproceedings{petersen_are_2025,
  title = {Are {You} {Doing} {Better} {Than} {Random} {Guessing}? {A} {Call} for {Using} {Negative} {Controls} {When} {Evaluating} {Causal} {Discovery} {Algorithms}},
  issn = {2640-3498},
  shorttitle = {Are {You} {Doing} {Better} {Than} {Random} {Guessing}?},
  url = {https://proceedings.mlr.press/v286/petersen25a.html},
  language = {en},
  urldate = {2026-07-13},
  booktitle = {Proceedings of the {Forty}-first {Conference} on {Uncertainty} in {Artificial} {Intelligence}},
  publisher = {PMLR},
  author = {Petersen, Anne Helby},
  month = jul,
  year = {2025},
  pages = {3465--3479},
}

@inproceedings{squires_causal_2022,
  title = {Causal {Structure} {Discovery} between {Clusters} of {Nodes} {Induced} by {Latent} {Factors}},
  issn = {2640-3498},
  url = {https://proceedings.mlr.press/v177/squires22a.html},
  language = {en},
  urldate = {2026-09-09},
  booktitle = {Proceedings of the {First} {Conference} on {Causal} {Learning} and {Reasoning}},
  publisher = {PMLR},
  author = {Squires, Chandler and Yun, Annie and Nichani, Eshaan and Agrawal, Raj and Uhler, Caroline},
  month = jun,
  year = {2022},
  pages = {669--687},
}

@misc{ormaniec_standardizing_2025,
  title = {Standardizing {Structural} {Causal} {Models}},
  url = {http://arxiv.org/abs/2406.11601},
  doi = {10.48550/arXiv.2406.11601},
  urldate = {2026-09-08},
  publisher = {arXiv},
  author = {Ormaniec, Weronika and Sussex, Scott and Lorch, Lars and Schölkopf, Bernhard and Krause, Andreas},
  month = mar,
  year = {2025},
  note = {arXiv:2406.11601 [cs.LG]
    version: 3},
}

@article{tsamardinos_max-min_2006,
  title = {The {Max}-{Min} {Hill}-{Climbing} {Bayesian} {Network} {Structure} {Learning} {Algorithm}},
  volume = {65},
  doi = {10.1007/s10994-006-6889-7},
  journal = {Machine Learning},
  author = {Tsamardinos, Ioannis and Brown, Laura and Aliferis, Constantin},
  month = oct,
  year = {2006},
  pages = {31--78},
}

@misc{chen_i-flop_2026,
  title = {I-{FLOP}: {Fast} {Learning} of {Order} and {Parents} from {Interventional} {Data}},
  shorttitle = {I-{FLOP}},
  url = {http://arxiv.org/abs/2608.28245},
  doi = {10.48550/arXiv.2608.28245},
  urldate = {2026-09-03},
  publisher = {arXiv},
  author = {Chen, Liuting and Markham, Alex},
  month = aug,
  year = {2026},
  note = {arXiv:2608.28245 [stat.ML]},
}

@article{hyttinen_experiment_2013,
  title = {Experiment {Selection} for {Causal} {Discovery}},
  volume = {14},
  issn = {1533-7928},
  url = {http://jmlr.org/papers/v14/hyttinen13a.html},
  number = {93},
  urldate = {2026-08-11},
  journal = {Journal of Machine Learning Research},
  author = {Hyttinen, Antti and Eberhardt, Frederick and Hoyer, Patrik O.},
  year = {2013},
  pages = {3041--3071},
}

@misc{uhler_gaussian_2017,
  title = {Gaussian {Graphical} {Models}: {An} {Algebraic} and {Geometric} {Perspective}},
  shorttitle = {Gaussian {Graphical} {Models}},
  url = {http://arxiv.org/abs/1707.04345},
  doi = {10.48550/arXiv.1707.04345},
  urldate = {2026-05-24},
  publisher = {arXiv},
  author = {Uhler, Caroline},
  month = jul,
  year = {2017},
  note = {arXiv:1707.04345 [math.ST]},
}

@inproceedings{zennaro_jointly_2023,
  title = {Jointly {Learning} {Consistent} {Causal} {Abstractions} {Over} {Multiple} {Interventional} {Distributions}},
  issn = {2640-3498},
  url = {https://proceedings.mlr.press/v213/zennaro23a.html},
  language = {en},
  urldate = {2026-05-27},
  booktitle = {Proceedings of the {Second} {Conference} on {Causal} {Learning} and {Reasoning}},
  publisher = {PMLR},
  author = {Zennaro, Fabio Massimo and Drávucz, Máté and Apachitei, Geanina and Widanage, W. Dhammika and Damoulas, Theodoros},
  month = aug,
  year = {2023},
  pages = {88--121},
}

@article{zennaro_abstraction_2022,
  title={Abstraction between structural causal models: {A} review of definitions and properties},
  author={Zennaro, Fabio Massimo},
  journal={arXiv preprint arXiv:2207.08603},
  year={2022}
}

@article{xia_neural_2024,
  title = {Neural {Causal} {Abstractions}},
  volume = {38},
  copyright = {Copyright (c) 2024 Association for the Advancement of Artificial Intelligence},
  issn = {2374-3468},
  url = {https://ojs.aaai.org/index.php/AAAI/article/view/30044},
  doi = {10.1609/aaai.v38i18.30044},
  language = {en},
  number = {18},
  urldate = {2026-05-27},
  journal = {Proceedings of the AAAI Conference on Artificial Intelligence},
  author = {Xia, Kevin and Bareinboim, Elias},
  month = mar,
  year = {2024},
  pages = {20585--20595},
}

@InProceedings{schooltink_aligning_2024,
  title =	 {Aligning Graphical and Functional Causal
    Abstractions},
  author =	 {Schooltink, Willem and Zennaro, Fabio Massimo},
  booktitle =	 {Proceedings of the Fourth Conference on Causal
    Learning and Reasoning},
  pages =	 {704--730},
  year =	 2025,
  editor =	 {Huang, Biwei and Drton, Mathias},
  volume =	 275,
  series =	 {Proceedings of Machine Learning Research},
  month =	 {07--09 May},
  publisher =	 {PMLR},
  url =
  {https://proceedings.mlr.press/v275/schooltink25a.html},
}

@inproceedings{massidda_learning_2024,
  address = {Barcelona, Spain},
  series = {{UAI} '24},
  title = {Learning causal abstractions of linear structural causal models},
  volume = {244},
  urldate = {2026-04-17},
  booktitle = {Proceedings of the {Fortieth} {Conference} on {Uncertainty} in {Artificial} {Intelligence}},
  publisher = {JMLR.org},
  author = {Massidda, Riccardo and Magliacane, Sara and Bacciu, Davide},
  month = jul,
  year = {2024},
  pages = {2486--2515},
}

@misc{massidda_causal_2022,
  title = {Causal {Abstraction} with {Soft} {Interventions}},
  url = {http://arxiv.org/abs/2211.12270},
  doi = {10.48550/arXiv.2211.12270},
  urldate = {2026-05-07},
  author = {Massidda, Riccardo and Geiger, Atticus and Icard, Thomas and Bacciu, Davide},
  month = nov,
  year = {2022},
  note = {arXiv:2211.12270 [cs]},
}

@misc{felekis_distributionally_2025,
  title = {Distributionally {Robust} {Causal} {Abstractions}},
  copyright = {Creative Commons Attribution 4.0 International},
  url = {https://arxiv.org/abs/2510.04842},
  doi = {10.48550/ARXIV.2510.04842},
  language = {en},
  urldate = {2026-03-28},
  publisher = {arXiv},
  author = {Felekis, Yorgos and Damoulas, Theodoros and Giampouras, Paris},
  year = {2025},
  note = {Version Number: 3},
}

@misc{dacunto_causal_2025,
  title = {Causal {Abstraction} {Learning} based on the {Semantic} {Embedding} {Principle}},
  url = {http://arxiv.org/abs/2502.00407},
  doi = {10.48550/arXiv.2502.00407},
  urldate = {2026-05-27},
  publisher = {arXiv},
  author = {D'Acunto, Gabriele and Zennaro, Fabio Massimo and Felekis, Yorgos and Lorenzo, Paolo Di},
  month = may,
  year = {2025},
  note = {arXiv:2502.00407 [cs.LG]},
}

@article{beckers_abstracting_2019,
  title = {Abstracting {Causal} {Models}},
  volume = {33},
  copyright = {Copyright (c) 2019 Association for the Advancement of Artificial Intelligence},
  issn = {2374-3468},
  url = {https://ojs.aaai.org/index.php/AAAI/article/view/4117},
  doi = {10.1609/aaai.v33i01.33012678},
  language = {en},
  number = {01},
  urldate = {2026-04-16},
  journal = {Proceedings of the AAAI Conference on Artificial Intelligence},
  author = {Beckers, Sander and Halpern, Joseph Y.},
  month = jul,
  year = {2019},
  pages = {2678--2685},
}

@misc{beckers_approximate_2019,
  title = {Approximate {Causal} {Abstraction}},
  url = {https://arxiv.org/abs/1906.11583v2},
  language = {en},
  author = {Beckers, Sander and Eberhardt, Frederick and Halpern, Joseph Y.},
  month = jun,
  year = {2019},
}

@misc{madaleno_coarsening_2026-1,
  title = {Coarsening {Causal} {DAG} {Models}},
  url = {http://arxiv.org/abs/2601.10531},
  doi = {10.48550/arXiv.2601.10531},
  urldate = {2026-05-10},
  publisher = {arXiv},
  author = {Madaleno, Francisco and Misra, Pratik and Markham, Alex},
  month = apr,
  year = {2026},
  note = {arXiv:2601.10531 [stat]},
}

@misc{eberhardt_almost_2012,
  title = {Almost {Optimal} {Intervention} {Sets} for {Causal} {Discovery}},
  url = {https://resolver.caltech.edu/CaltechAUTHORS:20190327-085859738},
  doi = {10.48550/arxiv.1206.3250},
  author = {Eberhardt, Frederick},
  month = jun,
  year = {2012},
}

@misc{ghassami_optimal_2017,
  title = {Optimal {Experiment} {Design} for {Causal} {Discovery} from {Fixed} {Number} of {Experiments}},
  url = {https://arxiv.org/pdf/1702.08567},
  doi = {10.48550/arxiv.1702.08567},
  author = {Ghassami, AmirEmad and Salehkaleybar, Saber and Kiyavash, Negar},
  month = feb,
  year = {2017},
}

@article{hauser_two_2014,
  title = {Two optimal strategies for active learning of causal models from interventional data},
  volume = {55},
  issn = {0888613X},
  url = {https://linkinghub.elsevier.com/retrieve/pii/S0888613X13002879},
  doi = {10.1016/j.ijar.2013.11.007},
  language = {en},
  number = {4},
  urldate = {2026-07-13},
  journal = {International Journal of Approximate Reasoning},
  author = {Hauser, Alain and Bühlmann, Peter},
  month = jun,
  year = {2014},
  pages = {926--939},
}

@article{touw_clusterpath_2026,
  title = {Clusterpath {Gaussian} {Graphical} {Modeling}},
  volume = {0},
  issn = {1061-8600},
  url = {https://doi.org/10.1080/10618600.2026.2653764},
  doi = {10.1080/10618600.2026.2653764},
  number = {0},
  urldate = {2026-06-09},
  journal = {Journal of Computational and Graphical Statistics},
  publisher = {Taylor \& Francis},
  author = {Touw, D. J. W. and Alfons, A. and Groenen, P. J. F. and Wilms, I.},
  month = may,
  year = {2026},
  note = {\_eprint: https://doi.org/10.1080/10618600.2026.2653764},
  pages = {1--15},
}

@inproceedings{niu_learning_2022,
  address = {Vienna, Austria},
  title = {Learning {Cluster} {Causal} {Diagrams}: {An} {Information}-{Theoretic} {Approach}},
  isbn = {978-1-956792-00-3},
  shorttitle = {Learning {Cluster} {Causal} {Diagrams}},
  url = {https://www.ijcai.org/proceedings/2022/675},
  doi = {10.24963/ijcai.2022/675},
  language = {en},
  urldate = {2026-03-28},
  booktitle = {Proceedings of the {Thirty}-{First} {International} {Joint} {Conference} on {Artificial} {Intelligence}},
  publisher = {International Joint Conferences on Artificial Intelligence Organization},
  author = {Niu, Xueyan and Li, Xiaoyun and Li, Ping},
  month = jul,
  year = {2022},
  pages = {4871--4877},
}

@book{lauritzen_graphical_1996,
  address = {Oxford, New York},
  series = {Oxford {Statistical} {Science} {Series}},
  title = {Graphical {Models}},
  isbn = {978-0-19-852219-5},
  publisher = {Oxford University Press},
  author = {Lauritzen, Steffen L.},
  month = may,
  year = {1996},
}

@article{andrews_fast_2023,
  title = {Fast {Scalable} and {Accurate} {Discovery} of {DAGs} {Using} the {Best} {Order} {Score} {Search} and {Grow}-{Shrink} {Trees}},
  volume = {36},
  issn = {1049-5258},
  url = {https://pmc.ncbi.nlm.nih.gov/articles/PMC11393735/},
  urldate = {2026-04-03},
  journal = {Advances in neural information processing systems},
  author = {Andrews, Bryan and Ramsey, Joseph and Sánchez-Romero, Rubén and Camchong, Jazmin and Kummerfeld, Erich},
  month = dec,
  year = {2023},
  pages = {63945--63956},
}

@misc{wienobst_embracing_2026,
  title = {Embracing {Discrete} {Search}: {A} {Reasonable} {Approach} to {Causal} {Structure} {Learning}},
  shorttitle = {Embracing {Discrete} {Search}},
  url = {http://arxiv.org/abs/2510.04970},
  doi = {10.48550/arXiv.2510.04970},
  urldate = {2026-06-01},
  publisher = {arXiv},
  author = {Wienöbst, Marcel and Henckel, Leonard and Weichwald, Sebastian},
  month = feb,
  year = {2026},
  note = {arXiv:2510.04970 [stat.ML]},
}

@misc{squires_permutation-based_2019,
  title = {Permutation-{Based} {Causal} {Structure} {Learning}  with {Unknown} {Intervention} {Targets}},
  url = {https://arxiv.org/abs/1910.09007v2},
  language = {en},
  urldate = {2026-04-03},
  author = {Squires, Chandler and Wang, Yuhao and Uhler, Caroline},
  month = oct,
  year = {2019},
}

@misc{solus_consistency_2021,
  title = {Consistency {Guarantees} for {Greedy} {Permutation}-{Based} {Causal} {Inference} {Algorithms}},
  url = {http://arxiv.org/abs/1702.03530},
  doi = {10.48550/arXiv.1702.03530},
  urldate = {2026-04-03},
  publisher = {arXiv},
  author = {Solus, Liam and Wang, Yuhao and Uhler, Caroline},
  month = jun,
  year = {2021},
  note = {arXiv:1702.03530 [math]},
}

@inproceedings{margaritis_bayesian_1999,
  title = {Bayesian {Network} {Induction} via {Local} {Neighborhoods}},
  volume = {12},
  url = {https://proceedings.neurips.cc/paper_files/paper/1999/hash/5d79099fcdf499f12b79770834c0164a-Abstract.html},
  urldate = {2026-04-14},
  booktitle = {Advances in {Neural} {Information} {Processing} {Systems}},
  publisher = {MIT Press},
  author = {Margaritis, Dimitris and Thrun, Sebastian},
  year = {1999},
}

@misc{gamella_characterization_2025,
  title = {Characterization and {Greedy} {Learning} of {Gaussian} {Structural} {Causal} {Models} under {Unknown} {Interventions}},
  url = {http://arxiv.org/abs/2211.14897},
  doi = {10.48550/arXiv.2211.14897},
  urldate = {2026-04-10},
  publisher = {arXiv},
  author = {Gamella, Juan L. and Taeb, Armeen and Heinze-Deml, Christina and Bühlmann, Peter},
  month = mar,
  year = {2025},
  note = {arXiv:2211.14897 [stat]},
}

@article{chickering_optimal_2003,
  title = {Optimal structure identification with greedy search},
  volume = {3},
  issn = {1532-4435},
  url = {https://dl.acm.org/doi/10.1162/153244303321897717},
  doi = {10.1162/153244303321897717},
  number = {null},
  urldate = {2026-04-10},
  journal = {J. Mach. Learn. Res.},
  author = {Chickering, David Maxwell},
  month = mar,
  year = {2003},
  pages = {507--554},
}

@misc{marx_formally_2021,
  title = {Formally {Justifying} {MDL}-based {Inference} of {Cause} and {Effect}},
  url = {http://arxiv.org/abs/2105.01902},
  doi = {10.48550/arXiv.2105.01902},
  urldate = {2026-06-01},
  publisher = {arXiv},
  author = {Marx, Alexander and Vreeken, Jilles},
  month = may,
  year = {2021},
  note = {arXiv:2105.01902 [cs.IT]},
}

@misc{janzing_causal_2008,
  title = {Causal inference using the algorithmic {Markov} condition},
  url = {http://arxiv.org/abs/0804.3678},
  doi = {10.48550/arXiv.0804.3678},
  urldate = {2026-06-01},
  publisher = {arXiv},
  author = {Janzing, Dominik and Schoelkopf, Bernhard},
  month = apr,
  year = {2008},
  note = {arXiv:0804.3678 [math.ST]},
}

@inproceedings{mameche_discovering_2022,
  address = {Washington DC USA},
  title = {Discovering {Invariant} and {Changing} {Mechanisms} from {Data}},
  isbn = {978-1-4503-9385-0},
  url = {https://dl.acm.org/doi/10.1145/3534678.3539479},
  doi = {10.1145/3534678.3539479},
  language = {en},
  urldate = {2026-06-01},
  booktitle = {Proceedings of the 28th {ACM} {SIGKDD} {Conference} on {Knowledge} {Discovery} and {Data} {Mining}},
  publisher = {ACM},
  author = {Mameche, Sarah and Kaltenpoth, David and Vreeken, Jilles},
  month = aug,
  year = {2022},
  pages = {1242--1252},
}

@article{warrens_understanding_2022,
  title = {Understanding the {Adjusted} {Rand} {Index} and {Other} {Partition} {Comparison} {Indices} {Based} on {Counting} {Object} {Pairs}},
  volume = {39},
  issn = {1432-1343},
  url = {https://doi.org/10.1007/s00357-022-09413-z},
  doi = {10.1007/s00357-022-09413-z},
  language = {en},
  number = {3},
  urldate = {2026-06-01},
  journal = {Journal of Classification},
  author = {Warrens, Matthijs J. and van der Hoef, Hanneke},
  month = nov,
  year = {2022},
  pages = {487--509},
}

@article{gamella_causal_2025,
  title = {Causal chambers as a real-world physical testbed for {AI} methodology},
  volume = {7},
  copyright = {2025 The Author(s)},
  issn = {2522-5839},
  url = {https://www.nature.com/articles/s42256-024-00964-x},
  doi = {10.1038/s42256-024-00964-x},
  language = {en},
  number = {1},
  urldate = {2026-06-01},
  journal = {Nature Machine Intelligence},
  publisher = {Nature Publishing Group},
  author = {Gamella, Juan L. and Peters, Jonas and Bühlmann, Peter},
  month = jan,
  year = {2025},
  pages = {107--118},
}

@article{welch_generalization_1947,
  title = {The {Generalization} of `{Student}'s' {Problem} when {Several} {Different} {Population} {Variances} are {Involved}},
  volume = {34},
  issn = {0006-3444},
  url = {https://www.jstor.org/stable/2332510},
  doi = {10.2307/2332510},
  number = {1/2},
  urldate = {2026-05-30},
  journal = {Biometrika},
  publisher = {[Oxford University Press, Biometrika Trust]},
  author = {Welch, B. L.},
  year = {1947},
  pages = {28--35},
}

@article{gretton_kernel_2012,
  title = {A {Kernel} {Two}-{Sample} {Test}},
  volume = {13},
  issn = {1533-7928},
  url = {http://jmlr.org/papers/v13/gretton12a.html},
  number = {25},
  urldate = {2026-05-30},
  journal = {Journal of Machine Learning Research},
  author = {Gretton, Arthur and Borgwardt, Karsten M. and Rasch, Malte J. and Schölkopf, Bernhard and Smola, Alexander},
  year = {2012},
  pages = {723--773},
}

@article{andersson_alternative_2001,
  title = {Alternative {Markov} {Properties} for {Chain} {Graphs}},
  volume = {28},
  issn = {0303-6898},
  url = {https://www.jstor.org/stable/4616644},
  number = {1},
  urldate = {2026-05-28},
  journal = {Scandinavian Journal of Statistics},
  publisher = {[Board of the Foundation of the Scandinavian Journal of Statistics, Wiley]},
  author = {Andersson, Steen A. and Madigan, David and Perlman, Michael D.},
  year = {2001},
  pages = {33--85},
}

@inproceedings{ramsey_adjacency-faithfulness_2006,
  address = {Arlington, Virginia, USA},
  series = {{UAI}'06},
  title = {Adjacency-faithfulness and conservative causal inference},
  isbn = {978-0-9749039-2-7},
  urldate = {2026-05-28},
  booktitle = {Proceedings of the {Twenty}-{Second} {Conference} on {Uncertainty} in {Artificial} {Intelligence}},
  publisher = {AUAI Press},
  author = {Ramsey, Joseph and Spirtes, Peter and Zhang, Jiji},
  month = jul,
  year = {2006},
  pages = {401--408},
}

@article{wahl_vector_2023,
  title = {Vector {Causal} {Inference} between {Two} {Groups} of {Variables}},
  volume = {37},
  copyright = {Copyright (c) 2023 Association for the Advancement of Artificial Intelligence},
  issn = {2374-3468},
  url = {https://ojs.aaai.org/index.php/AAAI/article/view/26450},
  doi = {10.1609/aaai.v37i10.26450},
  language = {en},
  number = {10},
  urldate = {2026-05-27},
  journal = {Proceedings of the AAAI Conference on Artificial Intelligence},
  author = {Wahl, Jonas and Ninad, Urmi and Runge, Jakob},
  month = jun,
  year = {2023},
  pages = {12305--12312},
}

@misc{verma_equivalence_2013,
  title = {On the {Equivalence} of {Causal} {Models}},
  url = {http://arxiv.org/abs/1304.1108},
  doi = {10.48550/arXiv.1304.1108},
  urldate = {2026-04-02},
  publisher = {arXiv},
  author = {Verma, Tom S. and Pearl, Judea},
  month = mar,
  year = {2013},
  note = {arXiv:1304.1108 [cs]},
}

@misc{peters_causal_2015,
  title = {Causal inference using invariant prediction: identification and confidence intervals},
  shorttitle = {Causal inference using invariant prediction},
  url = {http://arxiv.org/abs/1501.01332},
  doi = {10.48550/arXiv.1501.01332},
  urldate = {2026-04-13},
  publisher = {arXiv},
  author = {Peters, Jonas and Bühlmann, Peter and Meinshausen, Nicolai},
  month = nov,
  year = {2015},
  note = {arXiv:1501.01332 [stat]},
}

@misc{entner_estimating_2012,
  title = {Estimating a {Causal} {Order} among {Groups} of {Variables} in {Linear} {Models}},
  url = {http://arxiv.org/abs/1207.1977},
  doi = {10.48550/arXiv.1207.1977},
  urldate = {2026-05-27},
  publisher = {arXiv},
  author = {Entner, Doris and Hoyer, Patrik O.},
  month = jul,
  year = {2012},
  note = {arXiv:1207.1977 [stat.ML]},
}

@misc{tikka_clustering_2023,
  title = {Clustering and {Structural} {Robustness} in {Causal} {Diagrams}},
  url = {http://arxiv.org/abs/2111.04513},
  doi = {10.48550/arXiv.2111.04513},
  urldate = {2026-05-26},
  publisher = {arXiv},
  author = {Tikka, Santtu and Helske, Jouni and Karvanen, Juha},
  month = aug,
  year = {2023},
  note = {arXiv:2111.04513 [stat.ML]},
}

@misc{hauser_characterization_2011,
  title = {Characterization and {Greedy} {Learning} of {Interventional} {Markov} {Equivalence} {Classes} of {Directed} {Acyclic} {Graphs}},
  url = {https://arxiv.org/abs/1104.2808v2},
  language = {en},
  urldate = {2026-05-26},
  author = {Hauser, Alain and Bühlmann, Peter},
  month = apr,
  year = {2011},
}

@article{spirtes_introduction_2010,
  title = {Introduction to {Causal} {Inference}},
  volume = {11},
  issn = {1533-7928},
  url = {http://jmlr.org/papers/v11/spirtes10a.html},
  number = {54},
  urldate = {2026-05-25},
  journal = {Journal of Machine Learning Research},
  author = {Spirtes, Peter},
  year = {2010},
  pages = {1643--1662},
}

@article{spirtes_algorithm_1991,
  title = {An {Algorithm} for {Fast} {Recovery} of {Sparse} {Causal} {Graphs}},
  volume = {9},
  issn = {0894-4393},
  url = {https://doi.org/10.1177/089443939100900106},
  doi = {10.1177/089443939100900106},
  language = {EN},
  number = {1},
  urldate = {2026-05-25},
  journal = {Social Science Computer Review},
  publisher = {SAGE Publications Inc},
  author = {Spirtes, Peter and Glymour, Clark},
  month = apr,
  year = {1991},
  pages = {62--72},
}

@article{grosse-wentrup_neuro-cognitive_2024,
  title = {Neuro-cognitive multilevel causal modeling: {A} framework that bridges the explanatory gap between neuronal activity and cognition},
  volume = {20},
  issn = {1553-7358},
  shorttitle = {Neuro-cognitive multilevel causal modeling},
  url = {https://journals.plos.org/ploscompbiol/article?id=10.1371/journal.pcbi.1012674},
  doi = {10.1371/journal.pcbi.1012674},
  language = {en},
  number = {12},
  urldate = {2026-05-14},
  journal = {PLOS Computational Biology},
  publisher = {Public Library of Science},
  author = {Grosse-Wentrup, Moritz and Kumar, Akshey and Meunier, Anja and Zimmer, Manuel},
  month = dec,
  year = {2024},
  pages = {e1012674},
}

@misc{chalupka_visual_2014,
  title = {Visual {Causal} {Feature} {Learning}},
  url = {https://arxiv.org/abs/1412.2309v2},
  language = {en},
  urldate = {2026-05-14},
  author = {Chalupka, Krzysztof and Perona, Pietro and Eberhardt, Frederick},
  month = dec,
  year = {2014},
}

@misc{rubenstein_causal_2017,
  title = {Causal {Consistency} of {Structural} {Equation} {Models}},
  url = {http://arxiv.org/abs/1707.00819},
  doi = {10.48550/arXiv.1707.00819},
  urldate = {2026-05-13},
  publisher = {arXiv},
  author = {Rubenstein, Paul K. and Weichwald, Sebastian and Bongers, Stephan and Mooij, Joris M. and Janzing, Dominik and Grosse-Wentrup, Moritz and Schölkopf, Bernhard},
  month = jul,
  year = {2017},
  note = {arXiv:1707.00819 [stat.ML]},
}

@article{meinshausen_stability_2010,
  title = {Stability selection},
  volume = 72,
  copyright = {© 2010 Royal Statistical Society},
  issn = {1467-9868},
  doi = {10.1111/j.1467-9868.2010.00740.x},
  language = {en},
  number = 4,
  urldate = {2026-05-12},
  journal = {Journal of the Royal Statistical Society: Series B (Statistical Methodology)},
  author = {Meinshausen, Nicolai and Bühlmann, Peter},
  year = 2010,
  pages = {417--473},
}

@article{mooij_joint_2020,
  title = {Joint {Causal} {Inference} from {Multiple} {Contexts}},
  volume = {21},
  issn = {1533-7928},
  url = {http://jmlr.org/papers/v21/17-123.html},
  number = {99},
  urldate = {2026-04-17},
  journal = {Journal of Machine Learning Research},
  author = {Mooij, Joris M. and Magliacane, Sara and Claassen, Tom},
  year = {2020},
  pages = {1--108},
}

@article{felice_causally_2025,
  title={Causally reliable concept bottleneck models},
  author={De Felice, Giovanni and Casanova Flores, Arianna and De Santis, Francesco and Santini, Silvia and Schneider, Johannes and Barbiero, Pietro and Termine, Alberto},
  journal={Advances in Neural Information Processing Systems},
  volume={38},
  pages={149099--149139},
  year={2026},
  url = {https://openreview.net/forum?id=UX143QGvb8},
}

@book{vaart_asymptotic_1998,
  address = {Cambridge},
  series = {Cambridge {Series} in {Statistical} and {Probabilistic} {Mathematics}},
  title = {Asymptotic {Statistics}},
  isbn = {978-0-521-78450-4},
  url = {https://www.cambridge.org/core/books/asymptotic-statistics/A3C7DAD3F7E66A1FA60E9C8FE132EE1D},
  doi = {10.1017/CBO9780511802256},
  urldate = {2026-05-05},
  publisher = {Cambridge University Press},
  author = {Vaart, A. W. van der},
  year = {1998},
}

@book{pearl_probabilistic_1988,
  address = {San Francisco, CA, United States},
  title = {Probabilistic {Reasoning} in {Intelligent} {Systems}},
  isbn = {978-0-08-051489-5},
  doi = {10.1016/C2009-0-27609-4},
  language = {en},
  urldate = {2026-05-04},
  publisher = {Morgan Kaufmann Publishers Inc.},
  author = {Pearl, Judea},
  year = {1988},
  note = {ISBN: 9780080514895},
}

@article{margaritis_learning_2003,
  title = {Learning {Bayesian} {Network} {Model} {Structure} from {Data}},
  language = {en},
  journal = {PhD Thesis},
  author = {Margaritis, Dimitris},
  year = {2003},
}

@misc{teyssier_ordering-based_2012,
  title = {Ordering-{Based} {Search}: {A} {Simple} and {Effective} {Algorithm} for {Learning} {Bayesian} {Networks}},
  shorttitle = {Ordering-{Based} {Search}},
  url = {http://arxiv.org/abs/1207.1429},
  doi = {10.48550/arXiv.1207.1429},
  urldate = {2026-05-04},
  publisher = {arXiv},
  author = {Teyssier, Marc and Koller, Daphne},
  month = jul,
  year = {2012},
  note = {arXiv:1207.1429 [cs]},
}

@misc{verma_causal_1988,
  title = {Causal {Networks}: {Semantics} and {Expressiveness}},
  shorttitle = {Causal {Networks}},
  url = {http://arxiv.org/abs/1304.2379},
  doi = {10.48550/arXiv.1304.2379},
  urldate = {2026-05-03},
  publisher = {arXiv},
  author = {Verma, Tom S. and Pearl, Judea},
  year = {1988},
  note = {arXiv:1304.2379 [cs]},
}

@article{haughton_choice_1988,
  title = {On the {Choice} of a {Model} to {Fit} {Data} from an {Exponential} {Family}},
  volume = {16},
  issn = {0090-5364},
  url = {https://www.jstor.org/stable/2241441},
  number = {1},
  urldate = {2026-04-27},
  journal = {The Annals of Statistics},
  publisher = {Institute of Mathematical Statistics},
  author = {Haughton, Dominique M. A.},
  year = {1988},
  pages = {342--355},
}

@article{schwarz_estimating_1978,
  title = {Estimating the {Dimension} of a {Model}},
  volume = {6},
  issn = {0090-5364},
  url = {https://www.jstor.org/stable/2958889},
  number = {2},
  urldate = {2026-04-26},
  journal = {The Annals of Statistics},
  publisher = {Institute of Mathematical Statistics},
  author = {Schwarz, Gideon},
  year = {1978},
  pages = {461--464},
}

@article{drton_maximum_2006,
  title = {Maximum {Likelihood} {Estimation} in {Gaussian} {Chain} {Graph} {Models} under the {Alternative} {Markov} {Property}},
  volume = {33},
  issn = {0303-6898, 1467-9469},
  url = {https://onlinelibrary.wiley.com/doi/10.1111/j.1467-9469.2006.00482.x},
  doi = {10.1111/j.1467-9469.2006.00482.x},
  language = {en},
  number = {2},
  urldate = {2026-04-09},
  journal = {Scandinavian Journal of Statistics},
  author = {Drton, Mathias and Eichler, Michael},
  month = jun,
  year = {2006},
  pages = {247--257},
}

@article{andersson_normal_1998,
  title = {Normal {Linear} {Regression} {Models} {With} {Recursive} {Graphical} {Markov} {Structure}},
  volume = {66},
  copyright = {https://www.elsevier.com/tdm/userlicense/1.0/},
  issn = {0047259X},
  url = {https://linkinghub.elsevier.com/retrieve/pii/S0047259X98917456},
  doi = {10.1006/jmva.1998.1745},
  language = {en},
  number = {2},
  urldate = {2026-04-20},
  journal = {Journal of Multivariate Analysis},
  author = {Andersson, Steen A and Perlman, Michael D},
  month = aug,
  year = {1998},
  pages = {133--187},
}

@article{wahl_foundations_2024,
  title = {Foundations of causal discovery on groups of variables},
  volume = {12},
  copyright = {http://creativecommons.org/licenses/by/4.0},
  issn = {2193-3685},
  url = {https://www.degruyterbrill.com/document/doi/10.1515/jci-2023-0041/html},
  doi = {10.1515/jci-2023-0041},
  language = {en},
  number = {1},
  urldate = {2026-04-16},
  journal = {Journal of Causal Inference},
  author = {Wahl, Jonas and Ninad, Urmi and Runge, Jakob},
  month = jul,
  year = {2024},
  pages = {20230041},
}

@article{kitson_survey_2023,
  title = {A survey of {Bayesian} {Network} structure learning},
  volume = {56},
  issn = {0269-2821, 1573-7462},
  url = {https://link.springer.com/10.1007/s10462-022-10351-w},
  doi = {10.1007/s10462-022-10351-w},
  language = {en},
  number = {8},
  urldate = {2026-04-02},
  journal = {Artificial Intelligence Review},
  author = {Kitson, Neville Kenneth and Constantinou, Anthony C. and Guo, Zhigao and Liu, Yang and Chobtham, Kiattikun},
  month = aug,
  year = {2023},
  pages = {8721--8814},
}

@misc{chickering_large-sample_2012,
  title = {Large-{Sample} {Learning} of {Bayesian} {Networks} is {NP}-{Hard}},
  url = {http://arxiv.org/abs/1212.2468},
  doi = {10.48550/arXiv.1212.2468},
  urldate = {2026-04-02},
  publisher = {arXiv},
  author = {Chickering, David Maxwell and Meek, Christopher and Heckerman, David},
  month = oct,
  year = {2012},
  note = {arXiv:1212.2468 [cs]},
}

@article{anand_causal_2023,
  title = {Causal {Effect} {Identification} in {Cluster} {DAGs}},
  volume = {37},
  issn = {2374-3468, 2159-5399},
  url = {https://ojs.aaai.org/index.php/AAAI/article/view/26435},
  doi = {10.1609/aaai.v37i10.26435},
  language = {en},
  number = {10},
  urldate = {2026-03-28},
  journal = {Proceedings of the AAAI Conference on Artificial Intelligence},
  author = {Anand, Tara V. and Ribeiro, Adele H. and Tian, Jin and Bareinboim, Elias},
  month = jun,
  year = {2023},
  pages = {12172--12179},
}

@inproceedings{madaleno_coarsening_cycles_2026,
  title =	 {Coarsening Linear {Non-Gaussian} Causal Models with
    Cycles},
  author =	 {Madaleno, Francisco and Pereira, Francisco C. and
    Markham, Alex},
  booktitle =	 {Advances in Neural Information Processing Systems
    (NeurIPS)},
  year =	 2026,
  publisher =	 {In Press},
  url =		 {https://arxiv.org/abs/2605.10163}
}

@inproceedings{lacerda2008discovering,
  title =	 {Discovering cyclic causal models by independent
    components analysis},
  author =	 {Lacerda, Gustavo and Spirtes, Peter and Ramsey,
    Joseph and Hoyer, Patrik O.},
  booktitle =	 {Conference on Uncertainty in Artificial Intelligence
    (UAI)},
  year =	 2008
}

@book{stanley2011enumerative,
  title =	 {Enumerative Combinatorics, Volume 1},
  author =	 {Stanley, Richard P.},
  publisher =	 {Cambridge Studies in Advanced Mathematics},
  year =	 2011,
  edition =	 {Second}
}

@article{gu2020learning,
  author =	 {Jiaying Gu and Qing Zhou},
  title =	 {Learning Big Gaussian Bayesian Networks:
    {P}artition, Estimation and Fusion},
  journal =	 {Journal of Machine Learning Research},
  year =	 2020,
  volume =	 21,
  number =	 158,
  pages =	 {1--31},
  url =		 {http://jmlr.org/papers/v21/19-318.html}
}

@article{caspard2003lattices,
  title={The lattices of closure systems, closure operators, and implicational systems on a finite set: a survey},
  author={Caspard, Nathalie and Monjardet, Bernard},
  journal={Discrete Applied Mathematics},
  volume={127},
  number={2},
  pages={241--269},
  year={2003},
  publisher={Elsevier},
  doi={10.1016/S0166-218X(02)00209-3}
}

\clearpage
\appendix
\appendixautoref
\thispagestyle{empty}

\onecolumn
\aistatstitle{Score-Based Learning of Cluster DAGs from Interventions: \\
  Supplementary Materials}

\section{Extended related work}
\label{app:related-work}

\paragraph{Causal abstraction}

The functional view formalizes an abstraction as a pair of maps between structural causal models that commute with interventions.
\citet{rubenstein_causal_2017} introduce exact transformations, which \citet{beckers_abstracting_2019} refine into a hierarchy of uniform, $\tau$-, strong $\tau$-, and constructive abstractions, the last of which arises from an explicit partition of the low-level variables; approximate abstractions relax commutativity \citep{beckers_approximate_2019}, \citet{massidda_causal_2022} extend $\tau$-abstraction to soft interventions, and \citet{zennaro_abstraction_2022} survey the resulting landscape.
On the graphical side, \citet{anand_causal_2023} introduce C-DAGs---graphs over an assumed clustering with unspecified intra-cluster structure---and prove d-separation, do-calculus, and the ID algorithm sound and complete on them whenever the clustering is admissible.
\citet{tikka_clustering_2023} take the causal query as given and characterize the transit clusters whose collapse preserves identifiability, and \citet{wahl_foundations_2024} generalize to group directed mixed graphs with cycles and latents, showing that the Markov property transfers under grouping while faithfulness requires internally well-connected clusters.
\citet{schooltink_aligning_2024} prove that C-DAGs correspond exactly to constructive $\tau$-abstractions; a coarsening is the C-DAG induced by its partition, so learning one from data is implicitly learning a constructive functional abstraction.
\citet{madaleno_coarsening_cycles_2026} extend the same framework to linear non-Gaussian models with cycles: any DAG-coarsening must group strongly connected components, and the resulting condensation is identifiable from observational data alone, invariant across the ICA equivalence class of \citet{lacerda2008discovering}.

\paragraph{Learning abstractions from data}

Existing efforts parameterize the abstraction map with deep architectures \citep{xia_neural_2024,dacunto_causal_2025,bing_structural_2026}, learn it under linearity assumptions \citep{massidda_learning_2024}, use optimal transport for alignment \citep{felice_causally_2025}, or target distributionally robust abstractions \citep{felekis_distributionally_2025}.
\citet{zennaro_jointly_2023} jointly learn abstractions consistent over multiple interventional distributions, and \citet{niu_learning_2022} learn cluster causal diagrams information-theoretically from observational data.
Another thread learns structure given a known clustering \citep{entner_estimating_2012,kawahara_grouplingam_2010,wahl_vector_2023}, and partitions have long been used to scale structure learning of large networks \citep{gu2020learning,touw_clusterpath_2026}.
Most of these works either assume the partition is provided or impose a strong parametric structure; jointly discovering the partition and its abstract DAG from data remains an exception, as does any score-based treatment.

\paragraph{Structure learning paradigms}

\citet{kitson_survey_2023} survey the field.
Constraint-based methods, e.g.\ PC \citep{spirtes_algorithm_1991}, prune a complete graph by CI tests; their order dependence and the deterioration of high-order tests make errors accumulate \citep{ramsey_adjacency-faithfulness_2006,tsamardinos_max-min_2006}.
Score-based methods optimize a decomposable criterion such as the BIC \citep{schwarz_estimating_1978}, at the price of an NP-hard search \citep{chickering_large-sample_2012}: greedy equivalence search \citep{chickering_optimal_2003}, ordering-based search \citep{teyssier_ordering-based_2012}, permutation-based methods with consistency guarantees \citep{solus_consistency_2021} and their greedy relaxations \citep{lam_greedy_2022}, reinsertion-based searches with grow-shrink trees \citep{andrews_fast_2023}, and discrete-search approaches \citep{wienobst_embracing_2026}.
Our edge phase uses the score-based restricted grow-shrink backbone \citep{lam_greedy_2022,wienobst_embracing_2026}, with a coarse, multi-environment score in place of the fine, single-environment BIC.

\paragraph{Learning from interventional data}

Interventions come in hard (graph-mutilating) and soft (mechanism-changing) varieties; soft interventions preserve the DAG and are the natural regime for heterogeneous data whose targets are unknown.
\citet{hauser_characterization_2011} characterize interventional Markov equivalence for known hard targets and learn it greedily (GIES), \citet{gamella_characterization_2025} treat unknown soft targets (GnIES), \citet{squires_permutation-based_2019} estimate targets jointly with the graph (UT-IGSP), and \citet{peters_causal_2015} exploit invariance across environments without target knowledge (ICP).
Further work addresses experimental design and active learning \citep{eberhardt_almost_2012,hyttinen_experiment_2013,ghassami_optimal_2017,gamella_active_2020,hauser_two_2014}.
\citet{chen_i-flop_2026} extend the order-based FLOP algorithm \citep{wienobst_embracing_2026} to interventional data with known targets, scoring prefix-restricted parent sets with a target-filtered pooled interventional BIC.
Our setting is multi-environment with unknown, possibly multi-target soft interventions, and the interventions determine not only the identifiability target but also the search order.

\section{Additional theory and proofs}
\label{app:theory}

\autoref{app:graphs} to \autoref{app:multi-target} collect background results; \autoref{app:order} to \autoref{app:complexity} prove the results of \autoref{sec:method}.

\subsection{Graph and probability theory}
\label{app:graphs}

\subsubsection{d-separation.}

Given a DAG $G$, we say a path between $u$ and $w$ in $G$ is \emph{blocked} by a set $S \subseteq V \setminus \{u,w\}$ if there exists a node $z$ such that either:
\begin{itemize}
  \item $z \in S$ and $z$ is a non-collider;
  \item $z$ is a collider but neither $z$ nor any of its descendants is in $S$, $\de[G]{z} \cap S = \emptyset$.
\end{itemize}
A path that is not blocked by $S$ is \emph{active given} $S$.
Disjoint sets $A,B \subseteq V$ are \emph{d-separated} by $S$ in $G$, written $A \indepsym_G B \mid S$, if $S$ blocks every path between any node of $A$ and any node of $B$ \citep{pearl_probabilistic_1988}.

\subsubsection{Graphoids.}

A \emph{CI relation} on a finite set $U$ is a set of triples $(A,B \mid C)$ of pairwise disjoint subsets of $U$, written $A \indepsym B \mid C$.
Both $\indepsym_P$ (with $U = V$) and the pooled block relation $\indepsym_{\cE}$ of \autoref{sec:score} (with $U = \Pi^{\cE}$) are CI relations.

\begin{definition}[Semi-graphoid, graphoid, compositional graphoid]
  \label{def:graphoid}
  A CI relation $\indepsym$ on $U$ is a \emph{semi-graphoid} if for all pairwise disjoint $A,B,C,D \subseteq U$:
  \begin{enumerate}[label=(\arabic*), nosep]
    \item \emph{Symmetry:} $A \indepsym B \mid C \Rightarrow B \indepsym A \mid C$;
    \item \emph{Decomposition:} $A \indepsym B \cup D \mid C \Rightarrow A \indepsym B \mid C$;
    \item \emph{Weak union:} $A \indepsym B \cup D \mid C \Rightarrow A \indepsym B \mid C \cup D$;
    \item \emph{Contraction:} $A \indepsym B \mid C$ and $A \indepsym D \mid B \cup C \Rightarrow A \indepsym B \cup D \mid C$.
  \end{enumerate}
  It is a \emph{graphoid} if it also satisfies
  \begin{enumerate}[label=(\arabic*), nosep, start=5]
    \item \emph{Intersection:} $A \indepsym B \mid C \cup D$ and $A \indepsym D \mid B \cup C \Rightarrow A \indepsym B \cup D \mid C$,
  \end{enumerate}
  and a \emph{compositional graphoid} if it also satisfies
  \begin{enumerate}[label=(\arabic*), nosep, start=6]
    \item \emph{Composition:} $A \indepsym B \mid C$ and $A \indepsym D \mid C \Rightarrow A \indepsym B \cup D \mid C$.
  \end{enumerate}
\end{definition}

The semi-graphoid axioms hold for the CI relation of any distribution, intersection holds under a strictly positive density, and the CI relation of a Gaussian distribution with positive definite covariance is a compositional graphoid \citep{lauritzen_graphical_1996}.
In this work we assume the data follow a Gaussian distribution, hence $\indepsym_P$ will always be regarded as a graphoid.

\subsubsection{Markov boundary and restricted Markov boundary.}

We now show the results needed for existence and uniqueness of the Markov boundary and its restricted version.

\begin{theorem}[Existence and uniqueness of the Markov boundary]
  \label{thm:markov-boundary-unique}
  If $\indepsym_P$ is a graphoid, then for every $v \in V$ the Markov boundary $\B(v)$ exists and is unique.
\end{theorem}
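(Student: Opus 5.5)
Existence and uniqueness are separate. Existence comes from the finiteness of $V$. Uniqueness comes from the intersection axiom, used to show that every Markov blanket contains one canonical set.

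\emph{Existence.} The set $V \setminus \{v\}$ is trivially a Markov blanket of $X_v$: the statement $X_v \indepsym_P X_{\emptyset} \mid X_{V\setminus\{v\}}$ holds by convention. So the family of Markov blankets is nonempty. It is a family of subsets of a finite set, ordered by inclusion, so it has an inclusion-minimal element. That element is a Markov boundary by definition.

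\emph{Uniqueness.} I would show that any two Markov blankets $\BL_1, \BL_2$ have an intersection $\BL_1 \cap \BL_2$ that is again a Markov blanket. Uniqueness then follows at once: if $\B_1 \neq \B_2$ were both minimal, then $\B_1 \cap \B_2$ would be a blanket strictly contained in one of them, a contradiction.

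For the intersection step, set $C := \BL_1 \cap \BL_2$, $D_1 := \BL_1 \setminus C$, $D_2 := \BL_2 \setminus C$ and $R := V \setminus (\BL_1 \cup \BL_2 \cup \{v\})$. The blanket property of $\BL_1$ reads $v \indepsym (D_2 \cup R) \mid C \cup D_1$. Weak union gives $v \indepsym D_2 \mid C \cup D_1 \cup R$. Symmetrically, from $\BL_2$ we get $v \indepsym (D_1 \cup R) \mid C \cup D_2$, and weak union gives $v \indepsym D_1 \mid C \cup D_2 \cup R$.

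Intersection (axiom (5)), applied with conditioning set $C \cup R$, yields $v \indepsym (D_1 \cup D_2) \mid C \cup R$. Decomposition of the $\BL_1$ statement gives $v \indepsym R \mid C \cup D_1$; this alone does not suffice. So I would instead contract:
\begin{itemize}
\item from the $\BL_2$ statement, decomposition gives $v \indepsym R \mid C\cup D_2$;
\item weak union then gives $v \indepsym R \mid C \cup D_1\cup D_2$, since $D_1 \subseteq D_1 \cup R$, so this is just weak union of the $\BL_1$ statement.
\end{itemize}
Contraction of $v \indepsym (D_1\cup D_2) \mid C\cup R$ with the wrong-direction piece is awkward. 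Cleaner is to apply intersection directly to the pair
\[
v \indepsym (D_2\cup R)\mid C\cup D_1 \quad\text{and}\quad v\indepsym D_1 \mid C\cup D_2\cup R ,
\]
taking $B=D_1$ and $D=D_2\cup R$ in axiom (5) with conditioning set $C$. The hypotheses become $v \indepsym D_1 \mid C\cup(D_2\cup R)$ and $v\indepsym (D_2\cup R)\mid C\cup D_1$, and the conclusion is $v \indepsym D_1\cup D_2\cup R \mid C$, which is exactly the statement that $C$ is a Markov blanket.

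The main obstacle is this bookkeeping of which axiom to apply with which sets; choosing $B=D_1$ and $D=D_2\cup R$ in intersection resolves it. Degenerate cases where $C$, $D_1$ or $D_2$ is empty are covered by the convention that independence of an empty set holds trivially. The only hypothesis used beyond the semi-graphoid axioms is intersection, which is why the statement requires a graphoid.
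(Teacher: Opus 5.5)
Your proof is correct and follows the route the paper sketches: weak union and intersection applied to the symmetric difference $D_1 \cup D_2$ of two blankets show that $C = \BL_1 \cap \BL_2$ is again a blanket, and existence follows from finiteness alone. The decisive step is your final use of intersection with $B = D_1$, $D = D_2 \cup R$ and conditioning set $C$, which uses the $\BL_1$ statement directly and weak union of the $\BL_2$ statement; the earlier detour through conditioning on $C \cup R$ and contraction is unused and partly muddled, so delete it.
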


The proof combines weak union and intersection applied to the symmetric difference of two candidate boundaries.
Strict positivity of $P$ delivers the intersection axiom and hence the hypothesis of the theorem, so $\B(v)$ exists and is unique in the regime assumed throughout this work.
For the full argument see \citet{verma_causal_1988}.

In practice, we restrict the search procedures for the Markov boundary from all of $V \setminus \{v\}$ to a candidate pool $Z$ fixed in advance, typically by a topological order.
The corresponding refinement is given in \autoref{def:rmb}.
In this case, existence and uniqueness extend without modification: the graphoid axioms are closed under restriction to triples $(A, B \mid C)$ with $A \cup B \cup C \subseteq Z \cup \{v\}$, so \autoref{thm:markov-boundary-unique} applied to the restricted relation, with $Z \cup \{v\}$ in place of $V$, gives existence and uniqueness of $\B(v, Z)$ for every $Z \subseteq V \setminus \{v\}$.

What \autoref{prop:collapse} requires of the pool is only that it contains all parents and no descendants, $\pa[G]{v} \subseteq Z \subseteq \mathrm{nd}_G(v)$.
The prefix $\tau_{\prec v}$ of any topological order $\tau$ of $G$ (\autoref{def:topological-order}) is the standard example.
In \textsc{coarse} the pool is instead the parent pool $\mathrm{pa}^\star(\pi)$ of \autoref{def:support}, which satisfies the same property by \autoref{thm:order}(ii), so the restricted boundary again collapses to the parents.

\begin{proof}[Proof of \autoref{prop:collapse}]
  By assumption, $\pa[G]{v} \subseteq Z \subseteq \mathrm{nd}_G(v)$.
  The local Markov property yields $X_v \indepsym_P X_{\nd[G]{v} \setminus \pa[G]{v}} \mid X_{\pa[G]{v}}$, and decomposition restricts this to $X_v \indepsym_P X_{Z \setminus \pa[G]{v}} \mid X_{\pa[G]{v}}$, so $\pa[G]{v}$ satisfies (i) of \autoref{def:rmb}.
  For minimality, suppose $S \subsetneq \pa[G]{v}$ satisfies (i); decomposition gives $X_v \indepsym_P X_u \mid X_S$ for every $u \in \pa[G]{v} \setminus S$, but faithfulness forces $u$ and $v$ to be d-connected since they are adjacent in $G$, a contradiction.
  Uniqueness then gives $B(v, Z) = \pa[G]{v}$.
\end{proof}

\subsection{Grow-shrink algorithm}
\label{app:gs}

\autoref{alg:gs} is the non-greedy, score-based, restricted grow-shrink of \citet{wienobst_embracing_2026}, stated for a generic target $v$ and pool $Z$.
In \coarse, the target is a block $\pi_j$, the pool is $\pastar{\pi_j}$, the candidates are blocks, and the local score is the pooled BIC of \autoref{def:bic}.

\begin{algorithm}[h]
  \caption{Non-greedy, score-based, restricted grow-shrink}\label{alg:gs}
  \begin{algorithmic}[1]
    \Require target $v$, pool $Z$, local score $s(v,\cdot\,;\cD)$
    \State $S \gets \emptyset$
    \Repeat \Comment{grow}
    \State $S_{\mathrm{old}} \gets S$
    \For{$w \in Z \setminus S$}
    \If{$s(v,S \cup \{w\};\cD) > s(v,S;\cD)$} $S \gets S \cup \{w\}$ \EndIf
    \EndFor
    \Until{$S = S_{\mathrm{old}}$}
    \Repeat \Comment{shrink}
    \State $S_{\mathrm{old}} \gets S$
    \For{$w \in S$}
    \If{$s(v,S \setminus \{w\};\cD) > s(v,S;\cD)$} $S \gets S \setminus \{w\}$ \EndIf
    \EndFor
    \Until{$S = S_{\mathrm{old}}$}
    \State \Return $S$
  \end{algorithmic}
\end{algorithm}

\begin{lemma}[Correctness of grow-shrink]
  \label{lem:gs}
  Let $\indepsym$ be a compositional graphoid on $U$, $v \in U$ and $Z \subseteq U \setminus \{v\}$.
  \begin{enumerate}[label=(\roman*), nosep]
    \item \emph{(Oracle.)}
          If \autoref{alg:gs} adds $w$ iff $v \not\indepsym w \mid S$ and removes $w$ iff $v \indepsym w \mid S \setminus \{w\}$, it returns the Markov boundary of $v$ relative to $Z$ after $\cO(|Z|^2)$ queries \citep{margaritis_learning_2003}.
    \item \emph{(Score.)}
          If $s(v,S \cup \{w\};\cD) - s(v,S;\cD)$ diverges in probability to $+\infty$ when $v \not\indepsym w \mid S$ and to $-\infty$ when $v \indepsym w \mid S$, for all $S \subseteq Z$ and $w \in Z \setminus S$, then \autoref{alg:gs} returns the same set with probability tending to one.
  \end{enumerate}
\end{lemma}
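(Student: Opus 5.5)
The plan is to treat the two parts separately, proving (i) first and deducing (ii) from it.

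For part (i), write $B := \B(v,Z)$, which exists and is unique by the restricted version of \autoref{thm:markov-boundary-unique}. I will show two invariants.

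\emph{Grow phase.} At termination, $S$ is a Markov blanket of $v$ relative to $Z$. The loop stops only when no $w \in Z \setminus S$ satisfies $v \not\indepsym w \mid S$. So $v \indepsym w \mid S$ holds for every such $w$. Composition, applied repeatedly over the finite set $Z \setminus S$, then gives $v \indepsym Z \setminus S \mid S$. This is the step where the compositional axiom is essential.

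\emph{Shrink phase.} Two claims are needed.
\begin{enumerate}[label=(\alph*), nosep]
  \item Every removal preserves the blanket property. If $v \indepsym (Z\setminus S) \mid S$ and $v \indepsym w \mid S\setminus\{w\}$, then contraction, with $A=\{v\}$, $B=\{w\}$, $C=S\setminus\{w\}$, $D=Z\setminus S$, gives $v \indepsym (Z\setminus S)\cup\{w\} \mid S\setminus\{w\}$.
  \item At termination, $S = B$. First, $B \subseteq S$ always holds for a blanket $S$. For $u \in B \setminus S$, intersection applied to the blanket statements for $S$ and for $B$ would produce a strictly smaller blanket, contradicting minimality. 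This is the usual uniqueness argument via weak union and intersection on $S \cap B$. Conversely, if $u \in S \setminus B$, then $v \indepsym (Z \setminus B) \mid B$ together with weak union yields $v \indepsym u \mid S\setminus\{u\}$. So $u$ would have been removed, and termination means no such $u$ remains.
\end{enumerate}

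Query count: each grow sweep adds at least one element or stops, so there are at most $|Z|+1$ sweeps of $\le |Z|$ queries each. The shrink phase is bounded the same way. The total is $\cO(|Z|^2)$.

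For part (ii), the first observation is that the algorithm's run is a deterministic function of the finitely many sign comparisons $s(v,S\cup\{w\})-s(v,S)$ over pairs $(S,w)$ with $S\subseteq Z$, $w\in Z\setminus S$. The removal comparisons are the same increments read in reverse, with $S\setminus\{w\}$ in the role of $S$.

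On the event $E_n$ that every such increment has the sign dictated by the oracle, the score-based run coincides step by step with an oracle run. That run uses the same visiting order, and part (i) holds for any visiting order. Hence on $E_n$ the output is $B$.

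By hypothesis each individual increment diverges to $\pm\infty$ in probability, so each has the correct sign with probability tending to one. A union bound over at most $|Z|\,2^{|Z|}$ pairs, a finite number independent of $n$, gives $\PP(E_n)\to 1$.

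I expect the main obstacle to be the shrink-phase correctness in (i). One must argue that earlier removals never break the blanket property and that no element of $B$ is ever removed. The latter needs the intersection axiom to rule out $v \indepsym u \mid S\setminus\{u\}$ for $u\in B$ while $S\setminus\{u\}$ is still a blanket. I would handle this carefully using the minimality clause of \autoref{def:rmb}.
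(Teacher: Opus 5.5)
Your proposal is correct and follows the paper's proof essentially step for step. The shared steps are: composition at the end of the grow phase, contraction to keep the blanket property through each removal, decomposition plus weak union against the termination condition, and for (ii) a finite union bound on the event that every increment has its oracle sign, on which the score run coincides with the oracle run.

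The only difference is cosmetic. You identify the final $S$ with $\B(v,Z)$ through the two inclusions $\B(v,Z)\subseteq S$ (by intersection) and $S\subseteq\B(v,Z)$. The paper instead shows directly that no proper subset of $S$ is a blanket and then invokes uniqueness of the boundary, which rests on the same intersection argument. Note also that your step from $v \indepsym Z\setminus\B(v,Z) \mid \B(v,Z)$ to $v \indepsym u \mid S\setminus\{u\}$ uses decomposition as well as weak union.
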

\begin{proof}
  (i) Each grow sweep that changes $S$ adds an element, so the grow phase stops after at most $|Z| + 1$ sweeps of at most $|Z|$ queries, and the same holds for the shrink phase.
  At the end of the grow phase, $v \indepsym w \mid S$ for every $w \in Z \setminus S$, so composition gives $v \indepsym Z \setminus S \mid S$.
  The shrink phase preserves this property: if $v \indepsym Z \setminus S \mid S$ and $v \indepsym w \mid S \setminus \{w\}$, contraction gives $v \indepsym (Z \setminus S) \cup \{w\} \mid S \setminus \{w\}$.
  At termination, $v \not\indepsym w \mid S \setminus \{w\}$ for every $w \in S$.
  Suppose some $S' \subsetneq S$ also satisfied $v \indepsym Z \setminus S' \mid S'$, and take $w \in S \setminus S'$.
  Decomposition gives $v \indepsym S \setminus S' \mid S'$ and weak union gives $v \indepsym w \mid S \setminus \{w\}$, a contradiction.
  Hence $S$ is minimal, and \autoref{thm:markov-boundary-unique} identifies it with the Markov boundary.
  (ii) A removal comparison is the negative of an addition comparison from $S \setminus \{w\}$, so every comparison the algorithm can make has the sign of a CI statement in the limit.
  There are finitely many pairs $(S,w)$, so with probability tending to one all of them have the correct sign simultaneously, and on this event the run coincides with the oracle run of (i).
\end{proof}

\subsection{Coarsening theory}

\subsubsection{The lattice of valid coarsenings}
\label{app:lattice}

\citet[Theorem~3]{madaleno_coarsening_2026-1} show that the valid coarsenings of a DAG form a lattice under refinement.
The theorem statement (and its proof) is correct except for its last phrase (and sentence), which claims that because the set of valid coarsenings is itself a lattice, it is a sublattice of the partition refinement lattice.
\autoref{prop:meet-subsemilattice} is the corrected version of this claim, in case it is of independent mathematical interest.
All results that build on the theorem are unaffected: they use only the lattice structure of the valid coarsenings, which the proof does establish, and not the false sublattice claim.

\begin{proposition}
  \label{prop:meet-subsemilattice}
  Let $G = (V,E)$ be a DAG, and let $\cL(G)$ be the set of valid coarsening partitions of $V$, ordered by refinement ($\Pi \preceq \Pi'$ if $\Pi$ refines $\Pi'$).
  Then $\cL(G)$ is a meet-subsemilattice of the partition refinement lattice $\Pi_V$ \citep[Definition~1 and Remark~2]{caspard2003lattices}.
\end{proposition}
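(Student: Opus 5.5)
The plan is to show directly that $\cL(G)$ is closed under the meet of $\Pi_V$, namely the common refinement
\[
\Pi \wedge \Pi' := \{\pi \cap \pi' : \pi \in \Pi,\ \pi' \in \Pi',\ \pi \cap \pi' \neq \emptyset\}.
\]
Since $\cL(G)$ inherits the refinement order from $\Pi_V$, closure under $\wedge$ is exactly the meet-subsemilattice property. I would also note that $\cL(G)$ is nonempty: the finest partition into singletons induces $G$ itself, which is acyclic. I would not try to say anything about joins, since these are what break the sublattice claim.

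The key step is the following. Fix valid $\Pi, \Pi'$ with surjections $\chi, \chi'$, and let $\Lambda = \Pi \wedge \Pi'$ with surjection $\lambda$, so that $\lambda(v) = \lambda(w)$ iff $\chi(v) = \chi(w)$ and $\chi'(v) = \chi'(w)$. Suppose, for contradiction, that the graph induced on $\Lambda$ by \autoref{def:coarsening} contains a directed cycle $\lambda_1 \to \lambda_2 \to \cdots \to \lambda_k \to \lambda_1$ of distinct blocks, with $k \ge 2$. Self-loops cannot occur because the definition discards pairs with equal images.

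By definition, each edge $\lambda_i \to \lambda_{i+1}$ is witnessed by some fine edge $v_i \to w_i \in E$ with $v_i \in \lambda_i$ and $w_i \in \lambda_{i+1}$. Since $\Lambda$ refines $\Pi$, each $\lambda_i$ lies in a single $\Pi$-block $p_i := \chi(\lambda_i)$. The witnessing fine edge shows that either $p_i = p_{i+1}$ or $p_i \to p_{i+1}$ is an edge of the coarsening induced by $\Pi$. Hence $p_1, \dots, p_k, p_1$ is a closed walk in that coarsening which uses only edges and ``stays''.

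Because $\Pi$ is valid, its coarsening is acyclic. A closed walk in a DAG that uses any genuine edge would contain a directed cycle, so every step must be a stay and $p_1 = \cdots = p_k$. The same argument with $\Pi'$ gives $\chi'(\lambda_1) = \cdots = \chi'(\lambda_k)$. Then all $\lambda_i$ lie in the same intersection block $p_1 \cap p'_1$, which is a single block of $\Lambda$. So $\lambda_1 = \lambda_2$, contradicting distinctness. Therefore $\Lambda$ is valid.

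The only point needing care is the closed-walk argument, and in particular that a cycle in $\Lambda$ projects to a walk in $G^{\Pi}$ rather than to a path. I would spell out the short argument: delete the stays, and a nonempty closed walk of genuine edges remains, which contains a directed cycle. Everything else is bookkeeping with the definitions.
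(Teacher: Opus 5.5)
Your proof is correct and follows the same route as the paper: you show that the common refinement $\Pi \wedge \Pi'$ of two valid partitions is again valid, so $\cL(G)$ is closed under the ambient meet. The paper takes the acyclicity of the common refinement from the proof of Theorem~3 in \citet{madaleno_coarsening_2026-1}, whereas your projection of a $\Lambda$-cycle onto closed walks in both coarsenings proves that step directly; the paper's extra remark that $\cL(G)$ is a lattice, being a finite meet-semilattice with top $\{V\}$, is not needed for the statement.
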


\begin{proof}
  The partition of $V$ into the non-empty intersections of blocks of $\Pi$ and $\Pi'$ induces an acyclic coarsening whenever $\Pi$ and $\Pi'$ do, as shown in the proof of \citet[Theorem~3]{madaleno_coarsening_2026-1}, and being the greatest partition refining both, it is their meet in $\cL(G)$.
  Since $\Pi_V$ is finite and $\{V\} \in \cL(G)$, $\cL(G)$ is a lattice \citep[Proposition~3.3.1]{stanley2011enumerative}, with join $\Pi \vee_{\cL} \Pi'$ the finest valid common coarsening of $\Pi$ and $\Pi'$.
\end{proof}

The last line of the above proof is why the coarsening lattice is a meet-subsemilattice rather than a sublattice of the partition refinement lattice: the join of two elements of the coarsening lattice may be coarser than their corresponding (ambient) join in the partition refinement lattice, as shown in the following example.

\begin{example}
  \label{ex:join}
  Let $G$ be the DAG $1 \rightdir 2 \rightdir 3$ with node $4$ isolated, and consider the partitions $\Pi_1 = \{14 \mid 2 \mid 3\}$ and $\Pi_2 = \{1 \mid 2 \mid 34\}$ of $V = [4]$.
  Both are valid: the coarsenings they induce are the directed paths $\{14\} \rightdir \{2\} \rightdir \{3\}$ and $\{1\} \rightdir \{2\} \rightdir \{34\}$.
  The ambient join merges the overlapping blocks $14$ and $34$, giving $\Pi_1 \vee \Pi_2 = \{134 \mid 2\}$, which is not valid: the fine edges $1 \rightdir 2$ and $2 \rightdir 3$ induce the coarse edges $\{134\} \rightdir \{2\}$ and $\{2\} \rightdir \{134\}$, a directed cycle.
  The common upper bounds of $\Pi_1$ and $\Pi_2$ in $\Pi_V$ are $\{134 \mid 2\}$ and the top $\{1234\}$, so the only valid one is the top, giving $\Pi_1 \vee_{\cL} \Pi_2 = \{1234\}$, strictly coarser than $\Pi_1 \vee \Pi_2$.
\end{example}

\subsubsection{
  Relation to the interventional coarsening of \citet{madaleno_coarsening_2026-1}
}
\label{app:multi-target}

\citet[Def.~8]{madaleno_coarsening_2026-1} define the interventional coarsening $G^\cI$ through intervened ancestors, while \autoref{def:icoarsening} defines $G^\cE$ through signatures.
Here we show that the two coincide for single-target interventions, and that with multi-target interventions $G^\cE$ can be strictly coarser and $G^\cI$ is not identifiable from marginal shifts.

\begin{proposition}[The signature does not determine $\Pi^\cI$ under multi-target interventions]
  \label{prop:interv-nonidentifiable}
  There exist a DAG $G$ and environments $\cE$ with a multi-target intervention for which $\Pi^\cE$ is strictly coarser than $\Pi^\cI$.
  Consequently the signature-based \textsc{Refine} oracle of \citet[Thm.~10]{madaleno_coarsening_2026-1} does not recover $G^\cI$.
\end{proposition}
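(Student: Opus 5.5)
The proof is by explicit counterexample; the smallest one is enough. I would take $G$ to be two isolated nodes $V=\{1,2\}$ with no edges, and a single interventional environment $\cE=\{0,1\}$ with multi-target $I_1=\{1,2\}$. Here $T=\{1,2\}$, so the intervened ancestors are $\an[G]{1}\cap T=\{1\}$ and $\an[G]{2}\cap T=\{2\}$. These differ, hence $\Pi^\cI=\{1\mid 2\}$. If I prefer an example with an edge, $1\to 2$ with $I_1=\{1,2\}$ also works: the intervened ancestors are $\{1\}$ and $\{1,2\}$, so $\Pi^\cI=\{1\mid 2\}$.

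Next I compute the signatures. I choose a concrete linear Gaussian instance of \eqref{eq:lin-gauss}, for instance a soft shift adding mean $2$ to the noise of each target. Then $X_1$ and $X_2$ both shift their marginals in environment $1$, so $\sigma(1)=\sigma(2)=\{1\}$. I would check that \autoref{ass:ident} holds:
\begin{itemize}
\item Interventional soundness (iii) holds because every descendant of a target has a mean shift.
\item Coarse Markov (i) is immediate at the partition $\Pi^\cE=\{12\}$, since the one-block factorization is trivially valid.
\item Coarse faithfulness (ii) is vacuous with a single coarse node.
\end{itemize}
Hence $\Pi^\cE=\{12\}$, which strictly coarsens $\Pi^\cI$. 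This is consistent with the general relation $\Pi^\cI\preceq\Pi^\cE$.

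For the consequence, the \textsc{Refine} oracle of \citet[Thm.~10]{madaleno_coarsening_2026-1} groups nodes by equal rows of the descendant/shift matrix $M$. In the example both rows equal $(1)$, so the oracle outputs $\{12\}\neq\Pi^\cI$ and fails to recover $G^\cI$.

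I would then strengthen this to non-identifiability. In the edge version, I compare against a second model with the same graph and only $I_1=\{1\}$. There $\Pi^\cI$ merges nothing differently, but the point to make is that any procedure reading only marginal shifts sees identical $M$. In the isolated-node version, I compare against $I_1'=\{1\}$ on $G'=1\to2$: it yields the same $M$, while its $\Pi^\cI$ equals $\{12\}$ (both nodes have intervened ancestors $\{1\}$). So two models with identical signatures have different $\Pi^\cI$.

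The only delicate step is checking that the chosen instance really satisfies \autoref{ass:ident} for the intended coarsening, and that the reference definition of $G^\cI$ (intervened ancestors with $T=\bigcup_e I_e$) is applied correctly. The remaining steps are direct computations.
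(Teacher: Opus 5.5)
Your proof is correct and takes the same route as the paper, which also gives an explicit counterexample ($1\to2$, $1\to3$ with $I_1=\{1,2\}$, so $\Pi^\cI=\{\{1,3\},\{2\}\}$ while \autoref{ass:ident}(iii) forces $\Pi^\cE=\{V\}$); your two-node instance works equally well, and checking all of \autoref{ass:ident} on a concrete instance is a small bonus. One slip in your optional strengthening: for $1\to2$ with $I_1=\{1\}$ you get $\Pi^\cI=\{12\}\neq\{1\mid 2\}$, so that comparison already gives two models with the same $M$ and different $\Pi^\cI$, and in the concrete edge instance the edge weight must not cancel the mean shift of $X_2$ (e.g.\ it cannot be $-1$).
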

\begin{proof}
  Take $G : 1 \to 2,\ 1 \to 3$, $\cE = \{0,1\}$ and $I_1 = \{1,2\}$.
  The intervened-ancestor sets are
  \[
    \cI\text{-}\an[G]{1} = \{1\}, \qquad \cI\text{-}\an[G]{2}
                         = \{1,2\}, \qquad \cI\text{-}\an[G]{3}
                         = \{1\},
  \]
  so $\Pi^\cI = \bigl\{ \{1,3\},\ \{2\} \bigr\}$.
  Since $\de[G]{I_1} = \{1,2,3\}$, \autoref{ass:ident}(iii) gives $\sigma(1) = \sigma(2) = \sigma(3) = \{1\}$, so $\Pi^\cE = \{V\}$, strictly coarser than $\Pi^\cI$.
  In particular $\sigma(2) = \sigma(3)$ while $\cI\text{-}\an[G]{2} \neq \cI\text{-}\an[G]{3}$.
\end{proof}

\begin{proposition}
  \label{prop:refinement}
  Under \autoref{ass:ident}, $\Pi^\cI \preceq \Pi^\cE$, and $G^\cE$ is a valid coarsening of $G^\cI$.
\end{proposition}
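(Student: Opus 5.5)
The strategy is to show that $\sigma$ is constant on every block of $\Pi^\cI$. Once that holds, each block of $\Pi^\cI$ lies inside a block of $\Pi^\cE$, so $\Pi^\cI \preceq \Pi^\cE$. Recall that $\Pi^\cI$ groups $v,w$ exactly when $\an[G]{v} \cap T = \an[G]{w} \cap T$, with $T = \bigcup_{e\ge1} I_e$.

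The first step characterizes signatures through intervened ancestors. I will show that, under \autoref{ass:ident},
\[
  \sigma(v) = \{e \ge 1 : \an[G]{v} \cap I_e \neq \emptyset\} = \{e \ge 1 : v \in \de[G]{I_e}\}.
\]
The inclusion $\supseteq$ is exactly \autoref{ass:ident}(iii).

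For $\subseteq$, suppose no ancestor of $v$ lies in $I_e$. Then $\an[G]{v}$ is an ancestral set containing no target of environment $e$. Marginalizing the factorization \eqref{eq:int-factorization} over the non-ancestors, which is legitimate because integrating out sinks one at a time preserves the product form, gives
\[
  p^{(e)}(x_{\an[G]{v}}) = \prod_{u \in \an[G]{v}} p^{(0)}(x_u \mid x_{\pa[G]{u}}) = p^{(0)}(x_{\an[G]{v}}).
\]
Hence $f^{(e)}(X_v) = f^{(0)}(X_v)$, so $e \notin \sigma(v)$. This is the paper's remark that ``every (and only) descendant'' shifts, and I would make it explicit here.

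The second step is the refinement itself. Since $I_e \subseteq T$, we have $\an[G]{v} \cap I_e = (\an[G]{v} \cap T) \cap I_e$. So $\sigma(v)$ is a function of $\an[G]{v} \cap T$, and nodes sharing the same intervened ancestors share the same signature. This gives $\Pi^\cI \preceq \Pi^\cE$.

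The third step establishes that $G^\cE$ is a valid coarsening of $G^\cI$. Define $\chi' : \Pi^\cI \to \Pi^\cE$ by sending each $\cI$-block to the $\cE$-block containing it; this map is well defined and surjective by the second step. I then need to check the edge condition of \autoref{def:coarsening}. An edge of $G^\cE$ between distinct blocks comes from a fine edge $v \to w$. That fine edge induces the edge $\chi^\cI(v) \to \chi^\cI(w)$ in $G^\cI$, because $\chi^\cI(v) \ne \chi^\cI(w)$ whenever their $\cE$-images differ. Conversely, every $G^\cI$ edge comes from a fine edge and maps to a $G^\cE$ edge or collapses to a single block. So the coarsening of $G^\cI$ induced by $\chi'$ has exactly the edge set of $G^\cE$, i.e., composing surjections commutes with inducing edges. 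Acyclicity follows from \autoref{prop:validity}, which makes the induced graph a DAG.

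The main obstacle is the ``only descendants shift'' direction in the first step, since \autoref{ass:ident}(iii) gives only one implication. I would handle it through the ancestral-set marginalization of \eqref{eq:int-factorization} described above, which uses only the soft-intervention model.
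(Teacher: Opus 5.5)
Your proof is correct and follows the paper's route: you show that equal intervened ancestors $\an[G]{v}\cap T$ force equal signatures, because $\sigma(v)$ is exactly the set of environments $e$ with $I_e\cap\an[G]{v}\neq\emptyset$. You are more explicit than the paper, which uses this characterization (proved inside \autoref{lem:monotone}) without citing it, and which leaves implicit both the edge-set check under the composed surjection and the appeal to \autoref{prop:validity} that together give the second claim.
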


\begin{proof}
  Both partitions are equivalence classes of $V$: $v \sim_{\Pi^\cI} w$ iff $\cI\text{-}\an[G]{v} = \cI\text{-}\an[G]{w}$, and $v \sim_{\Pi^\cE} w$ iff $\sigma(v) = \sigma(w)$.
  It suffices to show that
  \[
    v \sim_{\Pi^\cI} w \; \Longrightarrow \; v \sim_{\Pi^\cE} w.
  \]
  Let $v, w \in V$ with $\cI\text{-}\an[G]{v} = \cI\text{-}\an[G]{w}$.
  By \autoref{def:signature} and fixing the correspondence of each environment $e \in \cE \setminus \{0\}$ with a set $I_e$ of intervention targets this implies $\sigma(v) = \sigma(w)$.
  Thus $\cI\text{-}\an[G]{v} = \cI\text{-}\an[G]{w}$ gives $\sigma(v) = \sigma(w)$, i.e., $v \sim_{\Pi^\cE} w$.
\end{proof}

On the other hand, if we assume that each environment is represented by a single-target intervention the two partitions agree.

\begin{assumption}[Single-target environments]
  \label{ass:single-target}
  For every $e \in \cE \setminus \{0\}$, $|I_e| = 1$.
\end{assumption}

\begin{proposition}
  \label{prop:partition-equivalence}
  Under \autoref{ass:ident} and \autoref{ass:single-target}, $\Pi^\cE = \Pi^\cI$.
\end{proposition}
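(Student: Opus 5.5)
The plan is to show that, under \autoref{ass:single-target}, the signature of a node and its set of intervened ancestors determine each other. Once that holds, the two equivalence relations defining $\Pi^\cE$ and $\Pi^\cI$ coincide. Write $I_e = \{t_e\}$ for $e \in \cE \setminus \{0\}$, so that $T = \{t_e : e \geq 1\}$ and $\cI\text{-}\an[G]{v} = \an[G]{v} \cap T$. Since \autoref{prop:refinement} already gives $\Pi^\cI \preceq \Pi^\cE$, only the reverse refinement is needed. Even so, I would prove the exact characterization
\[
  \sigma(v) = \{e \in \cE \setminus \{0\} : t_e \in \an[G]{v}\},
\]
which yields both directions at once.

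The inclusion ``$\supseteq$'' is immediate from \autoref{ass:ident}(iii). If $t_e \in \an[G]{v}$, then $v \in \de[G]{I_e}$, so $f^{(e)}(X_v) \neq f^{(0)}(X_v)$.

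The inclusion ``$\subseteq$'' is the step needing real work, since (iii) only speaks to descendants. I would argue from the interventional factorization \eqref{eq:int-factorization}. Let $A := \an[G]{v}$, which is an ancestral set. Integrating $p^{(e)}$ over the variables outside $A$ in reverse topological order removes exactly their factors. This leaves $p^{(e)}(x_A) = \prod_{u \in A} p^{(e)}(x_u \mid x_{\pa[G]{u}})$, because every parent of a node in $A$ lies in $A$. If $t_e \notin A$, every factor here is observational. Hence $p^{(e)}(x_A) = p^{(0)}(x_A)$, and marginalizing further to $X_v$ gives $f^{(e)}(X_v) = f^{(0)}(X_v)$, i.e.\ $e \notin \sigma(v)$. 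I expect this marginalization over a non-ancestral remainder to be the main point requiring care. The rest is bookkeeping.

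With the characterization in hand, both maps are well defined. The intervened-ancestor set is recovered as $\an[G]{v} \cap T = \{t_e : e \in \sigma(v)\}$, because every element of $T$ is some $t_e$. Conversely, $\sigma(v) = \{e : t_e \in \an[G]{v} \cap T\}$ depends only on $\an[G]{v} \cap T$. Several environments sharing a target cause no problem, since each direction is a function of the other set. Therefore $\sigma(v) = \sigma(w)$ if and only if $\cI\text{-}\an[G]{v} = \cI\text{-}\an[G]{w}$, so $\Pi^\cE = \Pi^\cI$.
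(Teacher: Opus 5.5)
Your proof is correct and follows the paper's route: under single targets, establish $\sigma(v)=\{e : t_e \in \an[G]{v}\}$, then conclude that equal signatures are equivalent to equal intervened-ancestor sets $\an[G]{v}\cap T$. Your explicit marginalization over the ancestral set for the ``$\subseteq$'' inclusion is the same argument the paper gives for \eqref{eq:sigma-de} in the proof of \autoref{lem:monotone}. The paper's proof of this proposition relies on that argument implicitly when it says the characterization follows ``together with'' \autoref{ass:ident}(iii).
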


\begin{proof}
  Let $T := \bigcup_{e \in \cE \setminus \{0\}} I_e$ be the set of all intervention targets.
  From \citet[Definition~8]{madaleno_coarsening_2026-1} we have,
  \begin{equation}
    \label{eq:ian-intersection}
    \cI\text{-}\an[G]{v} = \an[G]{v} \cap T.
  \end{equation}
  Single-target gives $I_e = \{t_e\}$ for some $t_e \in T$, hence $T = \{ t_e \mid e \in \cE \setminus \{0\}\}$ and $v \in \de[G]{I_e} \iff t_e \in \an[G]{v}$.
  Together with \autoref{ass:ident}(iii) this yields
  \begin{equation}
    \label{eq:sigma-single}
    \sigma(v) = \bigl\{ e \in \cE \setminus \{0\} \mid t_e \in \an[G]{v} \bigr\}.
  \end{equation}
  Quantifying over environments then coincides with quantifying over targets in $T$, and for every $v, w \in V$,
  \[
    \sigma(v) = \sigma(w) \iff \an[G]{v} \cap T = \an[G]{w} \cap T.
  \]
  By \eqref{eq:ian-intersection} the right-hand side reads $\cI\text{-}\an[G]{v} = \cI\text{-}\an[G]{w}$.
  Since $\Pi^\cE$ and $\Pi^\cI$ partition $V$ into the equivalence classes of equal $\sigma(\cdot)$ and equal $\cI\text{-}\an[G] \cdot$ respectively, this gives $\Pi^\cE = \Pi^\cI$.
\end{proof}

\subsubsection{
  Validity of the interventional coarsening and causal orders from interventions
}
\label{app:order}

\begin{proof}[Proof of \autoref{lem:monotone}]
  We first show that, for every $v \in V$,
  \begin{equation}
    \label{eq:sigma-de}
    \sigma(v) = \bigl\{ e \in \cE \setminus \{0\} \mid v \in \de[G]{I_e} \bigr\}.
  \end{equation}
  The inclusion $\supseteq$ is given by \autoref{ass:ident}(iii).
  For $\subseteq$, let $v \notin \de[G]{I_e}$, so that $\an[G]{v} \cap I_e = \emptyset$.
  Since $\an[G]{v}$ is ancestral, integrating \eqref{eq:int-factorization} over $x_{V \setminus \an[G]{v}}$ gives
  \[
    p^{(e)}\big(x_{\an[G]{v}}\big) = \prod_{u \in \an[G]{v}} p^{(0)}\big(x_u \mid x_{\pa[G]{u}}\big)
                                   = p^{(0)}\big(x_{\an[G]{v}}\big),
  \]
  hence $f^{(e)}(X_v) = f^{(0)}(X_v)$ and $e \notin \sigma(v)$.
  Now let $u \to w$ in $G$.
  Then $\an[G]{u} \subseteq \an[G]{w}$, so $u \in \de[G]{I_e}$ implies $w \in \de[G]{I_e}$ for every $e \in \cE \setminus \{0\}$, and \eqref{eq:sigma-de} gives $\sigma(u) \subseteq \sigma(w)$.
\end{proof}

\begin{proof}[Proof of \autoref{prop:validity}]
  By \autoref{def:coarsening}, only acyclicity of $G^\cE$ needs checking.
  Suppose for contradiction that $G^\cE$ contained a directed cycle $\mu_0 \to \mu_1 \to \cdots \to \mu_k = \mu_0$ with $\mu_{j-1} \neq \mu_j$ for every $j$.
  Each coarse edge $\mu_{j-1} \to \mu_j$ arises from some fine edge $u_j \to v_j$ with $u_j \in \mu_{j-1}$ and $v_j \in \mu_j$, while consecutive fine endpoints share a block, $v_j, u_{j+1} \in \mu_j$ (with $u_{k+1} := u_1$), whence $\sigma(v_j) = \sigma(u_{j+1})$.
  Chaining these equalities with the inclusions $\sigma(u_j) \subseteq \sigma(v_j)$ of \autoref{lem:monotone} around the cycle yields
  \[
    \sigma(u_1) \subseteq \sigma(v_1)
                = \sigma(u_2)
                \subseteq \cdots
                = \sigma(u_k)
                \subseteq \sigma(v_k)
                = \sigma(u_1),
  \]
  so every inclusion is an equality.
  In particular $\sigma(u_1) = \sigma(v_1)$, placing $u_1$ and $v_1$ in one block of $\Pi^\cE$ and contradicting $\mu_0 \neq \mu_1$.
\end{proof}

\begin{proof}[Proof of \autoref{thm:order}]
  (i) A coarse edge $\pi \to \pi'$ comes from a fine edge $u \to w$ with $u \in \pi$ and $w \in \pi'$, so \autoref{lem:monotone} gives $\supp(\pi) = \sigma(u) \subseteq \sigma(w) = \supp(\pi')$.
  Chaining along a directed path from $\pi_a$ to $\pi_b$ gives $\supp(\pi_a) \subseteq \supp(\pi_b)$, and the inclusion is strict since $\pi_a \neq \pi_b$.
  (ii) Parents are ancestors, so the first inclusion follows from (i).
  If some $\pi' \in \pastar{\pi_a}$ were a descendant of $\pi_a$, (i) would give $\pi_a <_{\supp} \pi'$, contradicting $\pi' <_{\supp} \pi_a$.
  If the supports of $\pi_a$ and $\pi_b$ are incomparable, (i) rules out ancestry in both directions, hence also adjacency.
  (iii) Adjacent blocks carry exactly one orientation, and by (i) it runs from the smaller support to the larger.
\end{proof}

\begin{example}[Counterexample for the converse of \autoref{thm:order}(i)]
  \label{ex:nested}
  Let $G$ be $a \to c \leftarrow b$ with targets $I_1 = \{a, b\}$, $I_2 = \{b\}$ and $I_3 = \{c\}$.
  Then $\sigma(a) = \{1\}$, $\sigma(b) = \{1, 2\}$ and $\sigma(c) = \{1, 2, 3\}$, so $\Pi^{\cE}$ is the discrete partition and $G^{\cE} = G$.
  Here $\supp(\{a\}) \subsetneq \supp(\{b\})$, yet $a$ and $b$ are non-adjacent and neither is an ancestor of the other.
\end{example}

\begin{proof}[Proof of \autoref{cor:noorder}]
  (i) Fix $\tau \in \Tau( <_{\supp} )$.
  If $\pi_a \to \pi_b \in G^{\cE}$, then
  $\pi_a \prec_{G^{\cE}} \pi_b$, so $\pi_a <_{\supp} \pi_b$ by
  \autoref{thm:order}(i), and $\pi_a$ precedes $\pi_b$ in $\tau$ since $\tau$
  extends $<_{\supp}$.
  Hence $\tau$ is a topological order of $G^{\cE}$.
  Likewise, every $\pi' \in \pastar{\pi}$ satisfies $\pi' <_{\supp} \pi$ by
  \autoref{def:support}, so $\pi'$ precedes $\pi$ in $\tau$.
  (ii) Under \autoref{ass:ident}, $P^{(e)}$ is Markov and faithful to
  $G^{\cE}$, and $\pa[G^{\cE}]{\pi} \subseteq \pastar{\pi} \subseteq \nd[G^{\cE}]{\pi}$ by \autoref{thm:order}(ii).
  \autoref{prop:collapse},
  read with blocks as nodes, then gives the claim.
\end{proof}

\subsection{The coarse block-Gaussian model}
\label{app:block}

This section gives a short primer on the coarse block-Gaussian model.
It is not exhaustive, but it covers what is needed to follow the method.
For a fuller account we refer the reader to \citet{andersson_normal_1998, drton_maximum_2006}.

\subsubsection{Data and sample covariances}

We observe $n_e$ i.i.d.\ realizations of $X^{(e)}$ in environment $e$, collected columnwise into the centered data matrix $\bX^{(e)} \in \R^{d \times n_e}$.
Lowercase $x^{(e)} \in \R^d$ denotes a generic column and $x^{(e)}_j := (x^{(e)}_v)_{v \in \pi_j} \in \R^{r_j}$ its restriction to block $\pi_j$.
Stacking the realizations of $X^{(e)}_j$ and of $X^{(e)}_{\Pa_j}$ gives the row-submatrices $\bX^{(e)}_j \in \R^{r_j \times n_e}$ and $\bX^{(e)}_{\Pa_j} \in \R^{s_j \times n_e}$, that is, the rows of $\bX^{(e)}$ indexed by $\pi_j$ and by $\bigcup_{\pi_k \in \Pa_j} \pi_k$.
The sample covariance is $S^{(e)} = \tfrac{1}{n_e}\bX^{(e)}(\bX^{(e)})^\top$, with blocks
\[
  S^{(e)}_{j,j}
  := \tfrac{1}{n_e}\bX^{(e)}_j(\bX^{(e)}_j)^\top, \qquad S^{(e)}_{j,\Pa_j}
  := \tfrac{1}{n_e}\bX^{(e)}_j(\bX^{(e)}_{\Pa_j})^\top, \qquad S^{(e)}_{\Pa_j,\Pa_j}
  := \tfrac{1}{n_e}\bX^{(e)}_{\Pa_j}(\bX^{(e)}_{\Pa_j})^\top.
\]
For a candidate $B^{(e)}_j \in \R^{r_j \times s_j}$, the residual sample covariance is
\begin{align}
  \label{eq:residual-cov} S^{(e)}_j(B^{(e)}_j) & := \tfrac{1}{n_e}\bigl(\bX^{(e)}_j - B^{(e)}_j \bX^{(e)}_{\Pa_j}\bigr) \bigl(\bX^{(e)}_j - B^{(e)}_j \bX^{(e)}_{\Pa_j}\bigr)^\top \nonumber \\
                                               & = S^{(e)}_{j,j} - B^{(e)}_j S^{(e)}_{\Pa_j,j} - S^{(e)}_{j,\Pa_j}(B^{(e)}_j)^\top + B^{(e)}_j S^{(e)}_{\Pa_j,\Pa_j}(B^{(e)}_j)^\top .
\end{align}

\subsubsection{Maximum likelihood estimation}

Taking logs of the block-recursive density and summing over the $n_e$ columns yields the decomposable log-likelihood
$\ell^{(e)}(G^\cE, \theta^{(e)}) = \sum_{j=1}^q \ell^{(e)}_j(G^\cE, \theta^{(e)}_j)$, with
\begin{equation}
  \label{eq:loglik-j}
  \ell^{(e)}_j(G^\cE, \theta^{(e)}_j)
  = -\frac{n_e r_j}{2}\log(2\pi)
    - \frac{n_e}{2}\log\!\left| \Sigma^{(e)}_j \right|
    - \frac{n_e}{2}\tr\!\left[ (\Sigma^{(e)}_j)^{-1} S^{(e)}_j(B^{(e)}_j) \right].
\end{equation}
Since the parameter space also factorizes across blocks \citep[Sec.~2]{drton_maximum_2006}, the maximum likelihood estimate (MLE) is obtained by maximizing each $\ell^{(e)}_j$ separately, one block regression at a time.
With no constraints on $B^{(e)}_j$ or $\Sigma^{(e)}_j$, it is the ordinary two-step least-squares estimator
\begin{align}
  \label{eq:mle-B} \widehat{B}^{(e)}_j          & = \bX^{(e)}_j(\bX^{(e)}_{\Pa_j})^\top \left[ \bX^{(e)}_{\Pa_j}(\bX^{(e)}_{\Pa_j})^\top \right]^{-1} = S^{(e)}_{j,\Pa_j}\left( S^{(e)}_{\Pa_j,\Pa_j} \right)^{-1}, \\
  \label{eq:mle-Sigma} \widehat{\Sigma}^{(e)}_j & = S^{(e)}_{j,j} - \widehat{B}^{(e)}_j S^{(e)}_{\Pa_j,j} = S^{(e)}_{j,j} - S^{(e)}_{j,\Pa_j}\left( S^{(e)}_{\Pa_j,\Pa_j} \right)^{-1} S^{(e)}_{\Pa_j,j} .
\end{align}
Substituting \eqref{eq:mle-B} and \eqref{eq:mle-Sigma} into \eqref{eq:loglik-j}, using $S^{(e)}_j(\widehat{B}^{(e)}_j) = \widehat{\Sigma}^{(e)}_j$ so that $\tr[(\widehat{\Sigma}^{(e)}_j)^{-1} S^{(e)}_j(\widehat{B}^{(e)}_j)] = r_j$, and summing over $j$ with $\sum_j r_j = d$, gives the profile log-likelihood
\begin{align}
  \label{eq:profile} \widehat{\ell}^{(e)}(G^\cE) & = -\frac{n_e}{2}\sum_{j=1}^{q} \log\!\left| \widehat{\Sigma}^{(e)}_j \right| - \frac{n_e d}{2}\bigl(1 + \log(2\pi)\bigr) \\
                                                 & = -\sum_{j=1}^{q} \log\!\left| \widehat{\Sigma}^{(e)}_j \right| + \mathrm{const}. \nonumber
\end{align}

\subsubsection{Existence of the MLE}

For a Gaussian model, the MLE exists if and only if the partial covariance admits a positive-definite completion; see \citet{uhler_gaussian_2017} for a detailed treatment.
Let $\mlt(G^\cE)$ denote the \emph{maximum-likelihood threshold}, the smallest sample size at which the Gaussian MLE exists with probability one.
Since the MLE above is computed block by block, existence reduces to a per-block condition.
Write $F_j := \pi_j \cup \Pa_j$ for the \emph{family} of block $j$ and
\[
  S^{(e)}_{F_j} = \begin{pmatrix}
                    S^{(e)}_{j,j}     & S^{(e)}_{j,\Pa_j}     \\[2pt]
                    S^{(e)}_{\Pa_j,j} & S^{(e)}_{\Pa_j,\Pa_j}
                  \end{pmatrix}
                \in \R^{(r_j+s_j) \times (r_j+s_j)}
\]
for the corresponding family covariance.
The estimate $\widehat{\Sigma}^{(e)}_j$ in \eqref{eq:mle-Sigma} is the Schur complement of $S^{(e)}_{\Pa_j,\Pa_j}$ in $S^{(e)}_{F_j}$.
Hence $\widehat{\Sigma}^{(e)}_j \succ 0$ if and only if $S^{(e)}_{F_j} \succ 0$, which holds with probability one if and only if $n_e > |F_j| = r_j + s_j$.
The threshold of the model is therefore the size of its largest family,
\[
  \mlt(G^\cE) \; = \; \max_{j}\,|F_j| \; = \; \max_{j} (r_j + s_j),
\]
and the block estimates, the profile log-likelihood \eqref{eq:profile} and the BIC score exist as soon as $n_e \ge \mlt(G^\cE)$.
Throughout, we assume that in every environment $e \in \cE$ the family covariances $S^{(e)}_{F_j}$ have full rank, so that the MLE exists.

Since $\pi_j$ and $\Pa_j$ are disjoint, $\mlt(G^\cE) \le d$, and typically $\mlt(G^\cE) \ll d$.
The model is therefore well defined even in high-dimensional regimes with $n_e < d$: there the pooled covariance $S^{(e)}$ is singular and the unrestricted Gaussian on all $d$ coordinates has no MLE, yet every block residual covariance $\widehat{\Sigma}^{(e)}_j$ remains positive definite.

\subsubsection{Curved exponential family structure}

\begin{proof}[
  Proof of \autoref{thm:curved}, adapted from \citet{drton_maximum_2006}
]
  The model is the Gaussian AMP chain-graph model whose chain components are the blocks, taken complete \citep{andersson_alternative_2001,drton_maximum_2006}.
  It is a subfamily of the regular exponential family of centered Gaussians on $\R^d$ with positive definite covariance.
  Its parameter space $\Theta = \{(B_j,\Sigma_j)_{j=1}^q : B_j \in \R^{r_j \times s_j},\ \Sigma_j \succ 0\}$ is an open subset of a Euclidean space of dimension $\dim(H)$.
  Let $B(\theta) \in \R^{d \times d}$ carry $B_j$ in the rows of $\pi_j$ and the columns of its parent blocks, and let $\bar\Sigma(\theta)$ be block-diagonal with blocks $\Sigma_1,\dots,\Sigma_q$.
  In a topological order of $H$, $B(\theta)$ is block strictly lower triangular, so $I - B(\theta)$ is invertible, and the joint concentration is
  \[
    \psi(\theta) = \big(I - B(\theta)\big)^\top\bar\Sigma(\theta)^{-1}\big(I - B(\theta)\big).
  \]
  The map $\psi$ is injective: from a covariance $\Sigma$ in its image, the parameters are recovered by the population regressions $B_j = \Sigma_{\pi_j,\Pa_j}\Sigma_{\Pa_j,\Pa_j}^{-1}$ and $\Sigma_j = \Sigma_{\pi_j,\pi_j} - B_j\Sigma_{\Pa_j,\pi_j}$.
  Both $\psi$ and its inverse are rational with non-vanishing denominators on their domains, so $\psi$ is a diffeomorphism onto its image, which is therefore a smooth manifold of dimension $\dim(H)$.
\end{proof}

\subsection{Pooled BIC score}
\label{app:score}

Throughout, $H$ is a coarse DAG on $\Pi^{\cE}$ with parent sets $\Pa_j$.
For a set of blocks $T$, $\widehat\Sigma^{(e)}_j(T)$ denotes the residual covariance \eqref{eq:mle-Sigma} of the regression of $X_j$ on the blocks in $T$, fitted on $\cD^{(e)}$, so that $\widehat\Sigma^{(e)}_j = \widehat\Sigma^{(e)}_j(\Pa_j)$.
The regression parameters $(B^{(e)}_j,\Sigma^{(e)}_j)$ are fitted separately in each environment; only $H$ is shared.
We take $n_e$ large enough for every family covariance to be full rank (\autoref{app:block}), so every term below is finite.

Plugging the profile log-likelihood \eqref{eq:profile} and $\dim(H)$ from \autoref{thm:curved} into \eqref{eq:bic} with $n = n_e$ gives the per-environment score
\begin{equation}
  \label{eq:bic-env}
  \BIC(H,\cD^{(e)}) := 2\hat\ell^{(e)}(H) - \lambda\log(n_e)\dim(H)
                    = \sum_{j=1}^{q} \BIC^{(e)}_j(\pi_j,\Pa_j)
                      - n_e d\big(1 + \log(2\pi)\big),
\end{equation}
with $\BIC^{(e)}_j$ as in \autoref{def:bic}.
The constant does not depend on $H$, so it can be dropped.
Summing over $\cE$ and exchanging the finite sums over blocks and environments gives
\[
  \sum_{e \in \cE}\BIC(H,\cD^{(e)}) = \BIC(H,\cD) + \mathrm{const},
\]
so the pooled BIC of \autoref{def:bic} is the sum of per-environment BICs.
Decomposability is immediate, since each $\BIC^{(e)}_j$ depends on $H$ only through $(\pi_j,\Pa_j)$.
\begin{proposition}[Consistency]
  \label{prop:bic-consistency}
  Let $G^\cE$ be the true coarse DAG over $\Pi^\cE$ and $[G^\cE]$ its Markov equivalence class.
  Assuming each environment share the same DAG, for every coarse DAG $H$ over $\Pi^\cE$,
  \[
  \lim_{n\to\infty} \Pr\bigl(\BIC(G^\cE, \cD_n) > \BIC(H, \cD_n)\bigr) = 1
  \qquad \text{for every } H \notin [G^\cE].
\]
\end{proposition}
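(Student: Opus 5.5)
We reduce the claim to the classical consistency of the BIC for curved exponential families \citep{haughton_choice_1988}, applied environment by environment and then pooled. Fix $H \notin [G^\cE]$. By \autoref{thm:curved}, for each $e$ the block-Gaussian model on $H$ is a curved exponential family $\mathcal{M}(H)$ of dimension $\dim(H)$, and likewise for $G^\cE$. The true law $P^{(e)}$ lies in $\mathcal{M}(G^\cE)$, since \autoref{ass:ident}(i) makes it Markov to $G^\cE$ and a Gaussian Markov to a block DAG is a block-recursive regression with independent residuals. We split into two cases according to whether $H$ can represent all the true distributions.

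\emph{Case 1: $P^{(e)} \notin \mathcal{M}(H)$ for some $e$.} In that environment the per-environment BIC difference $\BIC(G^\cE,\cD^{(e)}) - \BIC(H,\cD^{(e)})$ equals $2\{\hat\ell^{(e)}(G^\cE) - \hat\ell^{(e)}(H)\}$ up to an $O(\log n)$ penalty term. By the law of large numbers and continuity of the profile likelihood \eqref{eq:profile} in the sample covariance, $n_e^{-1}$ times the likelihood difference converges in probability to a KL divergence $\inf_{Q \in \overline{\mathcal{M}(H)}} \mathrm{KL}(P^{(e)}\|Q)$. This divergence is strictly positive because the closure of a Gaussian graphical model under the relevant CI constraints still excludes $P^{(e)}$. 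So the difference grows like $c\,n$.

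For every other environment $e'$, the difference is bounded below by $-O_p(1) - O(\log n)$. The reason is that the maximized likelihood of $H$ exceeds that of the truth by at most a $\chi^2$-type $O_p(1)$ amount; this is Wilks' theorem applied to the larger model $\mathcal{M}(G^\cE)\cup\mathcal{M}(H)$, or simply to the saturated Gaussian. Summing over the finitely many environments, with $n_e=\gamma_e n$, gives a pooled difference tending to $+\infty$ in probability.

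\emph{Case 2: $P^{(e)} \in \mathcal{M}(H)$ for all $e$.} Then each $P^{(e)}$ is Markov to $H$. By coarse faithfulness (\autoref{ass:ident}(ii)), every CI statement of $H$ holds in the faithful $P^{(e)}$ and hence in $G^\cE$, so $H$ is an independence map of $G^\cE$. Because $H \notin [G^\cE]$, $H$ entails strictly fewer independences. The standard Chickering-type argument \citep{chickering_optimal_2003}, read at block level, then gives $\dim(H) > \dim(G^\cE)$. This is the step we expect to be the main obstacle: dimension must be monotone under strict I-map inclusion when nodes carry weights $r_j$ and edges cost $r_jr_k$. We would prove it through the covered-edge-reversal and edge-addition sequence from $G^\cE$ to $H$ (Chickering's transformational characterization). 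A covered reversal preserves $\sum_j d_j$, since the family sizes of the two endpoints swap consistently. Each edge addition $\pi_k\to\pi_j$ adds $r_jr_k>0$ parameters.

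Given $\dim(H)>\dim(G^\cE)$, both models contain $P^{(e)}$, so each likelihood-ratio term is $O_p(1)$. Meanwhile the penalty difference is $\lambda\sum_e\log(n_e)\{\dim(H)-\dim(G^\cE)\}\to\infty$, so $\BIC(G^\cE,\cD_n)>\BIC(H,\cD_n)$ with probability tending to one. Since both cases give the claimed limit, the proposition follows for every $H \notin [G^\cE]$.
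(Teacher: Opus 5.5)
Your proposal is correct and follows essentially the same route as the paper's proof. It uses the same two-case split: either some $P^{(e)}$ lies outside the model of $H$, in which case the linear likelihood gap beats the $\log n$ penalties, or every $P^{(e)}$ lies in it, in which case coarse faithfulness makes $H$ a strict I-map of $G^{\cE}$, so $\dim(H) > \dim(G^{\cE})$ and the penalty dominates. You are more explicit than the paper on two steps it only asserts: the $O_p(1)$ control of the remaining environments via Wilks' theorem, and the strict dimension inequality via Chickering's sequence of covered edge reversals and edge additions.
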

\begin{proof}
  The coarse Markov and coarse faithfulness clauses of \autoref{ass:ident} make $G^\cE$ a perfect map of the block-level law $P^{(e)}$ in every environment, hence the inclusion-minimal I-map common to all of them.
  Consequently, any $H$ that is an I-map of every environment yet lies outside $[G^\cE]$ has strictly larger dimension.
  Fix such a $H$ and set
  \[
    \Delta_e := \ell^{(e)}(G^\cE) - \ell^{(e)}(H) \ge 0,
  \]
  the population log-likelihood gap at the respective maximisers, with equality exactly when $H$ is also an I-map of $P^{(e)}$.
  As $n_e \to \infty$, the empirical law converges to $P^{(e)}$, giving $\widehat\ell^{(e)}(G^\cE) - \widehat\ell^{(e)}(H) \approx n_e \Delta_e$.
  Using that $\dim(H)$ is common to all environments, this reads,
  \[
    \BIC(G^\cE, \cD) - \BIC(H, \cD)
    \approx 2\sum_{e \in \cE} n_e \Delta_e
            + \lambda\bigl(\dim(H) - \dim(G^\cE)\bigr)\sum_{e \in \cE}\log n_e.
  \]
  Two cases cover $H \notin [G^\cE]$.
  If $\Delta_{e^\star} > 0$ for some $e^\star$, then $H$ encodes an independence that $P^{(e^\star)}$ violates, the term $2 n_{e^\star}\Delta_{e^\star}$ is linear in $n_{e^\star}$, and it dominates the logarithmic penalty, so the difference tends to $+\infty$.
  If instead $\Delta_e = 0$ for every $e$, then $H$ is an I-map of each environment and, being outside $[G^\cE]$, satisfies $\dim(H) > \dim(G^\cE)$; the likelihood gap vanishes while the penalty term diverges, again to $+\infty$.
  In both cases the stated probability tends to one, so the pooled score prefers the true graph in the large-sample limit.
\end{proof}

\begin{proposition}[Score equivalence]
  \label{prop:score-equiv}
  If $H$ and $H'$ are Markov equivalent coarse DAGs on $\Pi^{\cE}$, then $\BIC(H,\cD) = \BIC(H',\cD)$.
\end{proposition}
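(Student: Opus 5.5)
The plan is to reduce score equivalence to a single local move, using Chickering's transformational characterization of Markov equivalence. That characterization is purely graph-theoretic, so it applies verbatim to DAGs whose nodes are the blocks of $\Pi^{\cE}$. It says that if $H \sim H'$, there is a finite sequence of \emph{covered} edge reversals turning $H$ into $H'$, with every intermediate graph a DAG in $[H]$. Recall that an edge $\pi_a \to \pi_b$ is covered if $\Pa_b = \Pa_a \cup \{\pi_a\}$. It therefore suffices to show that reversing one covered edge leaves $\BIC(\cdot,\cD)$ unchanged. By decomposability (\autoref{def:bic}), only the terms $\BIC_a$ and $\BIC_b$ can change. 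Since $\BIC_j = \sum_e \BIC^{(e)}_j$, it is enough to prove the invariance environment by environment.

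Fix $e$ and write $P := \Pa_a$. Before the reversal the two affected families are $(\pi_a, P)$ and $(\pi_b, P \cup \{\pi_a\})$. After the reversal they are $(\pi_b, P)$ and $(\pi_a, P \cup \{\pi_b\})$. The key identity is the Schur complement determinant formula. By \eqref{eq:mle-Sigma}, $\widehat\Sigma^{(e)}_j(T)$ is the Schur complement of $S^{(e)}_{T,T}$ in $S^{(e)}_{\pi_j \cup T}$, so
\[
  \log\bigl|\widehat\Sigma^{(e)}_j(T)\bigr| = \log\bigl|S^{(e)}_{\pi_j\cup T}\bigr| - \log\bigl|S^{(e)}_{T}\bigr|,
\]
where $S^{(e)}_{\emptyset}$ is taken to have determinant $1$. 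Applying this twice, the likelihood part before the reversal telescopes:
\[
  \log|\widehat\Sigma^{(e)}_a(P)| + \log|\widehat\Sigma^{(e)}_b(P\cup\{\pi_a\})| = \log|S^{(e)}_{P\cup\pi_a\cup\pi_b}| - \log|S^{(e)}_P|.
\]
This expression is symmetric in $a$ and $b$, so it equals the post-reversal sum.

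For the penalty, put $s_P := \sum_{\pi_k\in P} r_k$. Before the reversal,
\[
  d_a + d_b = r_a s_P + r_b(s_P + r_a) + \tfrac{r_a(r_a+1)}{2} + \tfrac{r_b(r_b+1)}{2}.
\]
After the reversal it is $r_b s_P + r_a(s_P + r_b)$ plus the same two quadratic terms, which is equal. Every other $d_j$ and every other $\widehat\Sigma^{(e)}_j$ is untouched, because the other parent sets do not change. Hence $\BIC^{(e)}_a + \BIC^{(e)}_b$ is invariant for each $e$. Summing over $e$ and chaining along the covered-reversal sequence gives $\BIC(H,\cD) = \BIC(H',\cD)$.

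The main care point is well-definedness. The determinant identity requires $S^{(e)}_{P\cup\pi_a\cup\pi_b} \succ 0$ and $S^{(e)}_P \succ 0$. The standing full-rank assumption of \autoref{app:block} guarantees this: every family covariance is nonsingular, and principal submatrices of a positive definite matrix are positive definite. The other point to check is that Chickering's result holds for the block graphs. This is immediate, since it concerns DAGs on an abstract vertex set together with skeletons and v-structures, and \autoref{def:bic} scores exactly such DAGs.
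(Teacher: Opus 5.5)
Your proof is correct, and it takes a different route from the paper.

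The paper argues at the level of models. Markov equivalent coarse DAGs impose the same block-level conditional independences, so the block-Gaussian models of \autoref{def:blockmodel} on $H$ and $H'$ are the same set of distributions and attain the same maximized likelihood in every environment. The penalty agrees because $\dim(H) = \sum_{\pi_k \to \pi_j \in H} r_j r_k + \sum_j r_j(r_j+1)/2$ depends only on the skeleton.

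You instead use Chickering's covered-reversal characterization to reduce the claim to a single local move. You then verify that move by the Schur-complement telescoping $\log|\widehat\Sigma^{(e)}_j(T)| = \log|S^{(e)}_{\pi_j \cup T}| - \log|S^{(e)}_T|$ together with a direct parameter count.

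The paper's argument is shorter, but it rests on a standard fact it does not prove: that the block model on $H$ is exactly the set of Gaussians obeying the block-level Markov property of $H$, so that equal CI statements give equal models. Your argument never needs that identification. It works directly with the profile likelihood the algorithm actually computes and yields an exact finite-sample identity. Well-definedness is also explicit along the whole reversal sequence, since after a covered reversal each new family is contained in the old family $P \cup \pi_a \cup \pi_b$.

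The price is importing Chickering's theorem, a nontrivial combinatorial result. For the penalty half you could have skipped the covered-edge computation and used the paper's one-line observation that $\sum_j r_j s_j = \sum_{\pi_k \to \pi_j} r_j r_k$ depends only on the skeleton.
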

\begin{proof}
  Markov equivalent DAGs impose the same block-level CI statements, so the models of \autoref{def:blockmodel} on $H$ and $H'$ coincide and attain the same maximized likelihood in every environment.
  Moreover, $\dim(H) = \sum_{\pi_k \to \pi_j \in H} r_jr_k + \sum_j r_j(r_j + 1)/2$ depends only on the skeleton, which $H$ and $H'$ share.
  Hence each summand of \eqref{eq:bic-env} agrees, and so does their sum over $\cE$.
\end{proof}

For a set of blocks $S$, let $\Sigma^{(e)}_{\pi_j \mid S}$ be the population residual covariance of $X^{(e)}_j$ given $X^{(e)}_S$, and $\Sigma^{(e)}_{\pi_j,\pi_k \mid S}$ the partial cross-covariance of $X^{(e)}_j$ and $X^{(e)}_k$ given $X^{(e)}_S$.
For Gaussian $P^{(e)}$, $\indep[P^{(e)}]{X_{\pi_j}}{X_{\pi_k}}{X_S}$ holds if and only if $\Sigma^{(e)}_{\pi_j,\pi_k \mid S} = 0$ \citep{lauritzen_graphical_1996}.

\begin{lemma}[Per-environment local consistency]
  \label{lem:env-localcons}
  Fix $e \in \cE$, a block $\pi_j$, a set of blocks $S \not \ni \pi_j$ and a block $\pi_k \notin S \cup \{\pi_j\}$, and let
  $\Delta^{(e)} := \BIC^{(e)}_j(\pi_j, S \cup \{\pi_k\}) - \BIC^{(e)}_j(\pi_j, S)$.
  As $n_e \to \infty$, $\Delta^{(e)} \xrightarrow{p} + \infty$ if $\Sigma^{(e)}_{\pi_j,\pi_k \mid S} \neq 0$, and $\Delta^{(e)} \xrightarrow{p} - \infty$ otherwise.
\end{lemma}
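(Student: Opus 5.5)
Write $S' := S \cup \{\pi_k\}$ and split $\Delta^{(e)}$ into a likelihood part and a penalty part. By \autoref{def:bic},
\[
  \Delta^{(e)} = -n_e\Bigl(\log\bigl|\widehat\Sigma^{(e)}_j(S')\bigr| - \log\bigl|\widehat\Sigma^{(e)}_j(S)\bigr|\Bigr) - \lambda\log(n_e)\, r_j r_k ,
\]
because $d_j$ grows by exactly $r_j r_k$ when $\pi_k$ joins the parent set. The penalty term is deterministic and of order $-\log n_e$. Everything therefore depends on how the log-determinant ratio behaves: it should be order one in the dependent case and order $1/n_e$ in the independent case.

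To control the ratio, apply the Schur-complement (partitioned regression) identity to the regression of $X_j$ on $S'$. This gives
\[
  \widehat\Sigma^{(e)}_j(S') = \widehat\Sigma^{(e)}_j(S) - \widehat\Sigma^{(e)}_{\pi_j,\pi_k\mid S}\bigl(\widehat\Sigma^{(e)}_{\pi_k\mid S}\bigr)^{-1}\widehat\Sigma^{(e)}_{\pi_k,\pi_j\mid S},
\]
where the hatted quantities are sample partial covariances. The log-determinant difference then becomes $\log|I - R|$, with $R$ the matrix of squared sample canonical correlations between $X_j$ and $X_k$ after partialling out $X_S$.

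\emph{Dependent case.} By the law of large numbers, $S^{(e)}\to\Sigma^{(e)}$ almost surely. Matrix inversion and the determinant are continuous on positive definite matrices, so the continuous mapping theorem gives convergence of the log-determinant difference to its population value $\log|\Sigma^{(e)}_{\pi_j\mid S'}| - \log|\Sigma^{(e)}_{\pi_j\mid S}|$. This limit is strictly negative. The subtracted term $\Sigma_{\pi_j,\pi_k\mid S}\Sigma_{\pi_k\mid S}^{-1}\Sigma_{\pi_k,\pi_j\mid S}$ is positive semidefinite and nonzero whenever $\Sigma^{(e)}_{\pi_j,\pi_k\mid S}\neq 0$, because $\Sigma_{\pi_k\mid S}\succ 0$. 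For $A\succ 0$ and $0\ne C\succeq 0$ with $A-C\succ 0$, we have $|A-C|<|A|$. Hence $\Delta^{(e)} = n_e c + o_p(n_e) - O(\log n_e)$ with $c>0$, which tends to $+\infty$ in probability.

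\emph{Independent case.} The main obstacle is to show that the likelihood gain is $O_p(1)$ rather than merely $o_p(n_e)$. The route is the classical likelihood-ratio argument for nested regular models. When $\Sigma^{(e)}_{\pi_j,\pi_k\mid S}=0$, the model with parent set $S$ is the true submodel for the conditional law of $X_j$ given $X_{S'}$. Both models are regular, being curved exponential families by \autoref{thm:curved}, and nested. By Wilks' theorem \citep{vaart_asymptotic_1998}, $-n_e\log|I-R| = 2(\hat\ell(S')-\hat\ell(S))$ converges in distribution to $\chi^2_{r_jr_k}$, so it is $O_p(1)$.

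A more direct alternative for the independent case avoids Wilks. Under the null, $\sqrt{n_e}\,\widehat\Sigma^{(e)}_{\pi_j,\pi_k\mid S}=O_p(1)$ by the CLT and the delta method. Then $R=O_p(1/n_e)$, and $-n_e\log|I-R| = n_e\,\tr R + o_p(1) = O_p(1)$.

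Either way, $\Delta^{(e)} = O_p(1) - \lambda r_jr_k\log n_e$. Since $r_j r_k\ge 1$ and $\lambda>0$, this tends to $-\infty$ in probability, which completes the independent case.
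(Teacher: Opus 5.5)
Your proposal is correct and follows essentially the same route as the paper. It uses the same split of $\Delta^{(e)}$ into an $n_e$-scaled log-determinant ratio minus a $\lambda r_jr_k\log n_e$ penalty, continuous mapping plus the Schur-complement identity for the dependent case, and Wilks' theorem giving a $\chi^2_{r_jr_k}$ limit, hence $O_p(1)$, in the independent case. Your rewriting via sample partial canonical correlations and the CLT/delta-method alternative ($R = O_p(1/n_e)$) are sound extras that make the independent case self-contained without appealing to Wilks.
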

\begin{proof}
  Write $S' := S \cup \{\pi_k\}$.
  Both parent sets share the $r_j(r_j + 1)/2$ covariance parameters, and the regression block grows by $r_jr_k$ entries, so
  \[
    \Delta^{(e)} = n_e\log\frac{\big|\widehat\Sigma^{(e)}_j(S)\big|}{\big|\widehat\Sigma^{(e)}_j(S')\big|}
                   - \lambda\log(n_e)\,r_jr_k.
  \]
  Since $S^{(e)} \xrightarrow{p} \Sigma^{(e)}$, the continuous mapping theorem gives $\widehat\Sigma^{(e)}_j(T) \xrightarrow{p} \Sigma^{(e)}_{\pi_j \mid T}$ for $T \in \{S,S'\}$.
  By the Schur complement formula,
  \[
    \Sigma^{(e)}_{\pi_j \mid S'}
    = \Sigma^{(e)}_{\pi_j \mid S}
      - \Sigma^{(e)}_{\pi_j,\pi_k \mid S}\big(\Sigma^{(e)}_{\pi_k \mid S}\big)^{-1}\Sigma^{(e)}_{\pi_k,\pi_j \mid S},
  \]
  so the population log-ratio is non-negative and vanishes exactly when $\Sigma^{(e)}_{\pi_j,\pi_k \mid S} = 0$.
  If it is positive, the likelihood term grows like $n_e$ and dominates the $\cO(\log n_e)$ penalty.
  Otherwise, Wilks' theorem for nested Gaussian regressions \citep{vaart_asymptotic_1998} gives $n_e\log\big(|\widehat\Sigma^{(e)}_j(S)|/|\widehat\Sigma^{(e)}_j(S')|\big) \xrightarrow{d} \chi^2_{r_jr_k}$, so the likelihood term is $\cO_p(1)$ while the penalty diverges.
\end{proof}

\begin{proof}[Proof of \autoref{prop:localcons}]
  By decomposability, the increment of $\BIC_j$ is $\sum_{e \in \cE}\Delta^{(e)}$, with $n_e = \gamma_e n$.
  If $\nindep[\cE]{\pi_j}{\pi_k}{S}$, at least one environment has $\Sigma^{(e)}_{\pi_j,\pi_k \mid S} \neq 0$ and, by \autoref{lem:env-localcons}, contributes a term of order $n$, which outgrows the finitely many terms of order $\log n$ from the others.
  If $\indep[\cE]{\pi_j}{\pi_k}{S}$, every $\Delta^{(e)}$ diverges to $-\infty$, and so does their finite sum.
\end{proof}

\subsection{Estimation and guarantees}

\subsubsection{Partition estimation and refinement}

\begin{algorithm}[H]
  \caption{
    Compute interventional descendants matrix; adapted from \citealp{madaleno_coarsening_2026-1}
  }\label{alg:refineaux}
  \begin{algorithmic}[1]
    \Require data $\cD = \{X^{(e)}\}_{e \in \cE}$, threshold $\alpha$, $\mathrm{Test}$
    \Ensure interventional descendants matrix $M \in \{0,1\}^{|V| \times (|\cE|-1)}$
    \For{$(v,e) \in V \times (\cE \setminus \{0\})$} \Comment{in parallel}
    \State $p_{v,e} \gets \mathrm{Test} \big(X_v^{(e)}, X_v^{(0)}\big)$ \Comment{Welch-t, LRT, etc.}
    \State $M_{v,e} \gets \1[p_{v,e} < \alpha]$
    \EndFor
    \State \Return $M$
  \end{algorithmic}
\end{algorithm}

\begin{algorithm}[H]
  \caption{
    Refine via interventional descendant patterns; adapted from \citealp{madaleno_coarsening_2026-1}
  }\label{alg:refinetest}
  \begin{algorithmic}[1]
    \Require partition $\Pi$, interventional descendant matrix $M$
    \Ensure pair $(\pi^*, \{\pi_a, \pi_b\})$ or $(\varnothing, \varnothing)$
    \For{$\pi \in \Pi$ with $|\pi| > 1$}
    \State pick $u \in \pi$
    \State $\pi_a \gets \{v \in \pi \mid M_{v, \cdot} = M_{u, \cdot}\}$
    \State $\pi_b \gets \pi \setminus \pi_a$
    \If{$\pi_b \neq \varnothing$} \Return $(\pi, \{\pi_a, \pi_b\})$ \EndIf
    \EndFor
    \State \Return $(\varnothing, \varnothing)$
  \end{algorithmic}
\end{algorithm}

\subsubsection{Full COARSE procedure}

Each block is searched over its estimated parent pool $\hatpastar{\pi} := \{\pi' \in \widehat\Pi^{\cE} : \widehat{\supp}(\pi') \subsetneq \widehat{\supp}(\pi)\}$.
The pools are read off the supports alone, so no actual linear extension of $<_{\supp}$ needs to be computed.

\begin{algorithm}[H]
  \caption{\coarse}\label{alg:coarse}
  \begin{algorithmic}[1]
    \Require data $\cD = \{X^{(e)}\}_{e \in \cE}$, threshold $\alpha$, $\mathrm{Test}$, penalty $\lambda$
    \Ensure estimated coarse DAG $\hat G$
    \State $\widehat M \gets \textsc{InterventionalDescendants}(\cD, \alpha, \mathrm{Test})$ \Comment{\autoref{alg:refineaux}}
    \State $\widehat\Pi^{\cE} \gets \{V\}$
    \Repeat
    \State $(\pi^*, \{\pi_a, \pi_b\}) \gets \textsc{Refine}(\widehat\Pi^{\cE}, \widehat M)$ \Comment{\autoref{alg:refinetest}}
    \If{$\pi^* \neq \varnothing$} $\widehat\Pi^{\cE} \gets (\widehat\Pi^{\cE} \setminus \{\pi^*\}) \cup \{\pi_a, \pi_b\}$ \EndIf
    \Until{$\pi^* = \varnothing$}
    \For{each $\pi \in \widehat\Pi^{\cE}$}
    \State $\widehat{\supp}(\pi) \gets \{e \in \cE \setminus \{0\} : \widehat M_{v,e} = 1\}$ for any $v \in \pi$
    \EndFor
    \For{each $\pi_j \in \widehat\Pi^{\cE}$} \Comment{any order}
    \State $\hatpastar{\pi_j} \gets \{\pi_k \in \widehat\Pi^{\cE} : \widehat{\supp}(\pi_k) \subsetneq \widehat{\supp}(\pi_j)\}$
    \State $\widehat{\Pa}_j \gets \textsc{GrowShrink}\big(\pi_j, \hatpastar{\pi_j}, \BIC_j\big)$ \Comment{\autoref{alg:gs}}
    \EndFor
    \State \Return $\hat G = \bigl(\widehat\Pi^{\cE}, \{\pi_k \to \pi_j : \pi_k \in \widehat{\Pa}_j\}\bigr)$
  \end{algorithmic}
\end{algorithm}

\subsubsection{Consistency of \coarse}
\label{app:consistency}

At block level, the Markov boundary of $\pi_j$ relative to a set of blocks $Z$, written $\B(\pi_j,Z)$, is the inclusion-minimal $S \subseteq Z$ with $\indep[\cE]{\pi_j}{Z \setminus S}{S}$.

\begin{lemma}[Block-level compositional graphoid]
  \label{lem:graphoid}
  $\indepsym_{\cE}$ is a compositional graphoid on $\Pi^{\cE}$.
  Hence $\B(\pi_j,Z)$ exists and is unique for every block $\pi_j$ and every $Z \subseteq \Pi^{\cE} \setminus \{\pi_j\}$.
\end{lemma}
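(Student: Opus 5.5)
The plan is to derive each axiom of \autoref{def:graphoid} for $\indepsym_{\cE}$ from the same axiom for the fine Gaussian relations $\indepsym_{P^{(e)}}$, one environment at a time. Two facts carry the argument. First, block statements are fine statements in disguise. Second, every axiom is a Horn implication, so it is preserved when we intersect relations over environments.

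\emph{Step 1 (translation to the fine level).} For a set of blocks $A \subseteq \Pi^{\cE}$, write $\bar A := \bigcup_{\pi \in A} \pi \subseteq V$. Since $\Pi^{\cE}$ is a partition, pairwise disjoint sets of blocks $A,B,C$ give pairwise disjoint sets $\bar A, \bar B, \bar C$. The map $A \mapsto \bar A$ also commutes with unions. By definition,
\[
A \indepsym_{\cE} B \mid C \iff X_{\bar A} \indepsym_{P^{(e)}} X_{\bar B} \mid X_{\bar C} \ \text{ for every } e \in \cE.
\]
So it suffices to know that each $\indepsym_{P^{(e)}}$ is a compositional graphoid on $V$.

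\emph{Step 2 (each environment is a compositional graphoid).} By \eqref{eq:lin-gauss}, $X^{(e)} = (I - A^{(e)})^{-1}\eta^{(e)}$, where $I - A^{(e)}$ is invertible because $A^{(e)}$ is supported on the edges of the DAG $G$ and is therefore nilpotent. Moreover $\Omega^{(e)}$ is diagonal with strictly positive entries. Hence
\[
\operatorname{Cov}(X^{(e)}) = (I - A^{(e)})^{-1}\,\Omega^{(e)}\,(I - A^{(e)})^{-\top} \succ 0.
\]
By the fact recalled after \autoref{def:graphoid} \citep{lauritzen_graphical_1996}, the CI relation of a Gaussian with positive definite covariance is a compositional graphoid.

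\emph{Step 3 (transfer through the intersection over environments).} Each axiom (1)--(6) has the form ``a conjunction of CI statements implies one CI statement.'' Suppose the premises hold for $\indepsym_{\cE}$, that is, in every environment. Translating them via Step 1, they hold for $\indepsym_{P^{(e)}}$ for each fixed $e$. Step 2 then gives the conclusion in that environment, and since $e$ was arbitrary, the conclusion holds for $\indepsym_{\cE}$.

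I expect Step 3 to be the only point needing care. A relation defined by ``for every $e$'' inherits these axioms only because each axiom has a single, non-disjunctive conclusion. This is why the argument works for $\indepsym_{\cE}$, whereas it would fail for a relation defined by ``in some $e$.''

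\emph{Step 4 (boundary).} Having shown that $\indepsym_{\cE}$ is a graphoid on $U = \Pi^{\cE}$, I apply \autoref{thm:markov-boundary-unique} with $\Pi^{\cE}$ in place of $V$. I then use the restriction remark after that theorem, with $Z \cup \{\pi_j\}$ in place of $V$: the axioms remain closed under restriction to triples inside $Z \cup \{\pi_j\}$. This yields existence and uniqueness of $\B(\pi_j, Z)$ for every $Z \subseteq \Pi^{\cE} \setminus \{\pi_j\}$.
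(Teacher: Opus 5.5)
Your proposal is correct and follows essentially the same route as the paper. Both arguments use the same three steps: each $P^{(e)}$ is Gaussian with positive definite covariance, hence a compositional graphoid on $V$; the axioms survive restriction to unions of blocks and intersection over environments, because every axiom has conjunctive premises and a single conclusion; and the boundary claim then follows from \autoref{thm:markov-boundary-unique}. Your derivation of positive definiteness via $(I-A^{(e)})^{-1}\Omega^{(e)}(I-A^{(e)})^{-\top}$ and your explicit use of the restriction remark only spell out steps the paper leaves implicit.
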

\begin{proof}
  Each $P^{(e)}$ is Gaussian with positive definite covariance, so its CI relation is a compositional graphoid on $V$.
  Restricting it to unions of blocks preserves the axioms.
  The intersection over $e \in \cE$ is again a compositional graphoid: whenever the premises of an axiom hold in every $P^{(e)}$, so does its conclusion.
  The second claim follows from \autoref{thm:markov-boundary-unique}.
\end{proof}

\begin{proof}[Proof of \autoref{thm:consistency}]
  Under oracle tests, \autoref{alg:coarse} runs with $\Pi^{\cE}$ and the pools $\pastar{\cdot}$.
  Fix a block $\pi_j$ and let $Z_j := \pastar{\pi_j}$.
  By \autoref{cor:noorder}(ii), $\Pa_j$ is the Markov boundary of $\pi_j$ relative to $Z_j$ in every $P^{(e)}$.
  Hence $\indep[\cE]{\pi_j}{Z_j \setminus \Pa_j}{\Pa_j}$, while no proper subset of $\Pa_j$ has this property, so $\Pa_j$ is the Markov boundary of $\pi_j$ relative to $Z_j$ under $\indepsym_{\cE}$, unique since $\indepsym_{\cE}$ is a compositional graphoid (\autoref{lem:graphoid}).
  By \autoref{prop:localcons}, every increment of $\BIC_j$ has, in the limit, the sign of the matching $\indepsym_{\cE}$ statement, so the score-based \textsc{GrowShrink} returns $\Pa_j$ with probability tending to one (\autoref{lem:gs}).
  A union bound over the $q$ blocks gives $\PP(\widehat\Pa_j = \Pa_j\ \forall j) \to 1$, and on this event the edges of $\hat G$ are exactly $\pi_k \to \pi_j$ with $\pi_k \in \pa[G^{\cE}]{\pi_j}$, that is $\hat G = G^{\cE}$.
  Finally, each \textsc{GrowShrink} call depends only on $\pi_j$, $\pastar{\pi_j}$ and $\BIC_j$, and not on the order in which blocks are processed.
\end{proof}

\subsection{Worst-case complexity}
\label{app:complexity}

\begin{proof}[Proof of \autoref{thm:complexity}]
  (1) For each $e \neq 0$, the $d$ Welch statistics need the means and variances of $X^{(e)}$
  and $X^{(0)}$, at cost $\cO(d(n_e + n_0))$, so $\cO(dn)$ in total.
  (2) Each refinement pass compares at most $d$ rows of $\hat M$ to a representative, and there
  are at most $q - 1$ splits: $\cO(qd|\cE|)$.
  (3) Support inclusion for every ordered pair of blocks costs $\cO(q^2|\cE|)$.
  (4) Centering costs $\cO(dn)$ and the $|\cE|$ sample covariances $S^{(e)}$ cost $\cO(d^2n)$;
  they are computed once and cached.
  (5) Each grow sweep that does not end the grow phase adds a block, so the phase makes at most
  $|Z| + 1$ sweeps of at most $|Z|$ probes, whatever the scores; the shrink phase is symmetric.
  With $Z = \hatpastar{\pi_j}$ this gives $\cO(q^2)$ probes per block.
  Each probe slices $S^{(e)}_{F_j}$ in every environment and computes a Cholesky factorization,
  a triangular solve, an $r_j \times r_j$ Schur complement and a log-determinant, for
  $\cO((r_j + s_j)^3) = \cO(d^3)$ per environment.
  Over the $q$ blocks this gives $\cO(|\cE|\,q^3d^3)$.
  Since $q \le d$ and $|\cE| \le n$, phases (1)--(3) are dominated by $\cO(d^2n)$,
  and the total is
  \[
    \cO(dn) + \cO(qd|\cE|) + \cO(q^2|\cE|) + \cO(d^2n) + \cO(|\cE|\,q^3d^3) = \cO\big(d^2 n + |\cE|\,q^{3}d^{3}\big). \qedhere
  \]
\end{proof}

\section{Additional experiments}
\label{app:experiments}

We now present additional experiments that test our theoretical results and assess robustness to violations of our assumptions.
Each run used a single CPU core on an Apple M3 Max (36\,GB RAM).

\subsection{Experimental details}
\label{app:exp-details}

\paragraph{Synthetic data.}

ER graphs are sampled with \texttt{sempler} with expected degree $\rho(d - 1)$.
Scale-free (SF) graphs are Barab\'asi--Albert graphs in which each new node attaches to $\max\{1, \mathrm{round}(\rho(d - 1)/2)\}$ existing nodes, matching the expected degree of the ER graph with the same $\rho$.
Edge weights have magnitude uniform in $[0.5, 2]$ and independent random signs.
Observational noise means are uniform in $[-2, 2]$ and variances uniform in $[0.5, 2]$.
The model is standardized in causal order to unit observational variance \citep{ormaniec_standardizing_2025} while interventional environments are not re-standardized.
Each interventional environment shifts its targets by independent noise with mean $2$ and variance $1$.
With $k = 1$ target per environment, targets are drawn without replacement, with $k > 1$ (\autoref{app:exp-multitarget}) each environment draws $k$ targets and targets may repeat across environments.
For non-Gaussian families (\autoref{app:exp-noise}), the noise of each variable is rescaled to the mean and variance of its Gaussian counterpart.

\paragraph{Pipeline, cross-validated and oracle runs.}

In the scalability (\autoref{sec:exp-scal}) and density (\autoref{app:exp-density}) studies, we run the full pipeline of \autoref{alg:coarse} with a fixed $\alpha = 10^{-4}$.
All runtimes are wall-clock times of one fit on one CPU core (Apple M3 Max, 36\,GB RAM).
While in the multiple testing (\autoref{app:exp-threshold}) and multi-target (\autoref{app:exp-multitarget}) experiments, we choose $\alpha$ from $\{10^{-4}, 10^{-3}, 10^{-2}, 0.05, 0.1\}$ by 10-fold cross-validation, maximizing the held-out Gaussian log-likelihood of the fitted coarse model.
In the remaining experiments, namely the comparison with RePaRe (\autoref{sec:exp-compare}) and the sensitivity analyses for the penalty multiplier $\lambda$ (\autoref{app:exp-lambda}), noise (\autoref{app:exp-noise}), and dimensionality (\autoref{app:exp-highdim}), all methods are given the true partition, so no threshold is needed.

\paragraph{Causal chamber.}

We use the \texttt{lt\_interventions\_standard\_v1} dataset of the light tunnel: the actuators R, G, B, the current $\tilde C$, the infrared and visible sensors $\tilde I_{1:3}$, $\tilde V_{1:3}$, the polarizer actuators $\theta_{1,2}$ and angle sensors $\tilde\theta_{1,2}$, and the six lens readings $L_{11}, \dots, L_{32}$.
The observational dataset has $10^4$ samples and each interventional dataset $10^3$.
In the \emph{ungrouped} mode each of the five interventional datasets is one environment ($m = 5$); the true coarse DAG has $8$ blocks and $10$ edges, which expand to $39$ fine-grained edges.
In the \emph{grouped} mode the datasets are pooled into two environments, one for the colour actuators and one for the polarizers ($m = 2$); the true coarse DAG has $4$ blocks and $4$ edges.
Since physical interventions may change the spread of a variable as well as its mean, on these data \coarse and RePaRe build $\hat M$ with a Gaussian likelihood-ratio test for a change in mean or variance.

Hyperparameters are selected with different approaches.
For \coarse we select $\alpha$ by $10$-fold cross-validation with $\lambda = 1$ fixed.
RePaRe searches its partition and edge threshold over the same grid but uses the GnIES score on the expanded DAG, as in \citet{madaleno_coarsening_2026-1}.
Importantly, this means that RePaRE actually consumes the intervention target at selection time, while \coarse does not.
GIES and GnIES run with their default BIC score, and UT-IGSP selects its two test levels by its own score.
Oracle rows select the grid cell with the highest ARI, then the highest F-score (F-score alone for UT-IGSP).

\paragraph{Negative controls \citep{petersen_are_2025}}

For a coarse method, a null draw is a random DAG on the estimated blocks with the same number of edges, expanded to the variable level.
For a fine-grained method, a null draw is a random DAG on the $20$ variables with the same number of adjacencies, mapped to the type of equivalence class the method returns.
Each draw is scored at both levels; we report the mean and the $2.5\%$ and $97.5\%$ quantiles over $1000$ draws.
For adjacencies we also use the exact null, under which the number of true adjacencies among the estimated ones is hypergeometric.

\subsection{Finite-sample variants}
\label{app:heuristics}

In finite samples a fixed $\alpha$ over $dm$ tests can split or merge blocks, invalidating the procedure, and large blocks inflate the model parameters, making the estimation procedure more complex and data hungry.
We address both these issues through three variants of the procedure:

\begin{enumerate}
  \item CV-\coarse: Multiple-testing corrections control false shifts, not partition accuracy, and tend to be conservative as $d$ grows.
        CV-\coarse selects $\alpha$ from a grid by $K$-fold cross-validated held-out Gaussian log-likelihood of the fitted coarse model, without accessing the ground truth.
  \item kPC-\coarse: The block model is unrestricted, so each block costs $r_j s_j + r_j(r_j + 1)/2$ parameters.
        kPC-\coarse replaces each block by its first $k_j := \min\{k, r_j\}$ principal components, estimated in the observational environment and applied to all others, and scores with $k_j$ in place of $r_j$.
        The family size drops from $r_j + s_j$ to $k_j + \sum_{\pi_k \in \Pa_j} k_k$.
  \item avg-\coarse: Here we summarize each block by the mean of its standardized variables.
        In this way the score runs on $q$ scalar Gaussian nodes.
\end{enumerate}

\subsection{Graph density and graph family}
\label{app:exp-density}

\autoref{fig:density-er} and \autoref{fig:density-sf} repeat the study of \autoref{sec:exp-scal} at $d = 10$ for ER and SF graphs, with $\rho \in \{0.2, 0.5, 0.8\}$, $m \in \{2, 5, 8\}$ and $n_e$ from $10^2$ to $10^5$.
On ER graphs, the median F-score reaches $1$ faster than ARI in almost any setting, suggesting that constraint-based partition recovery is the bottleneck of the procedure.
Denser graphs and more environments need more samples: more environments give more blocks to separate, and each adds $d$ tests to $\hat M$.
The hardest setting, $\rho = 0.8$ and $m = 8$, starts from a median ARI of only $0.33$ at $n_e = 10^2$.
SF graphs converge faster: both medians reach $1$ by $n_e = 2 \times 10^3$ in every setting.

\begin{figure*}[t]
  \centering
  \begin{minipage}{0.32\textwidth}
    \centering
    \includegraphics[width=\linewidth]{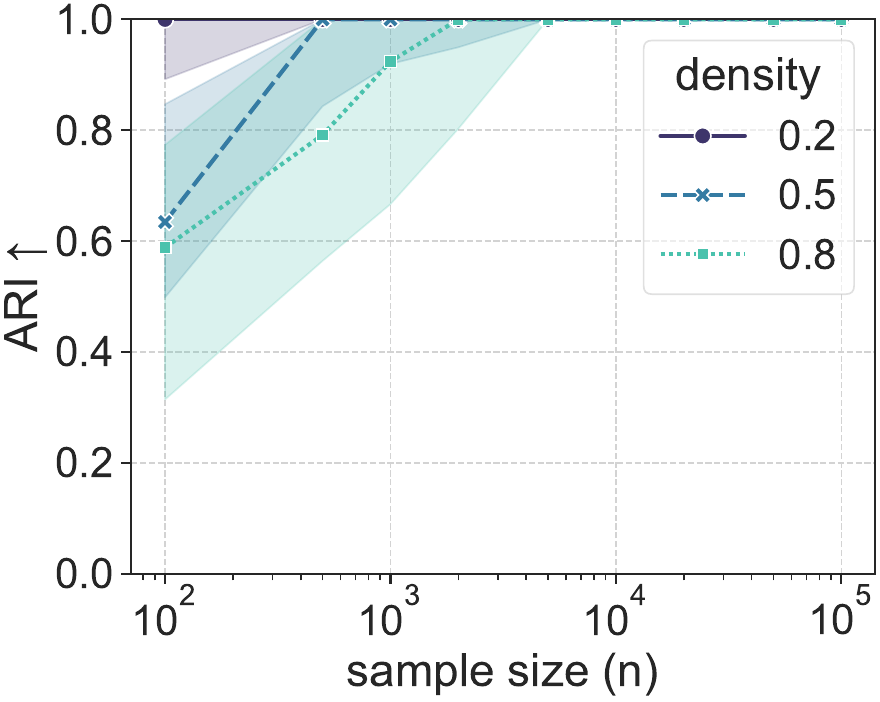}\\
    {\small (a) ARI, $m = 2$}
  \end{minipage}
  \hfill
  \begin{minipage}{0.32\textwidth}
    \centering
    \includegraphics[width=\linewidth]{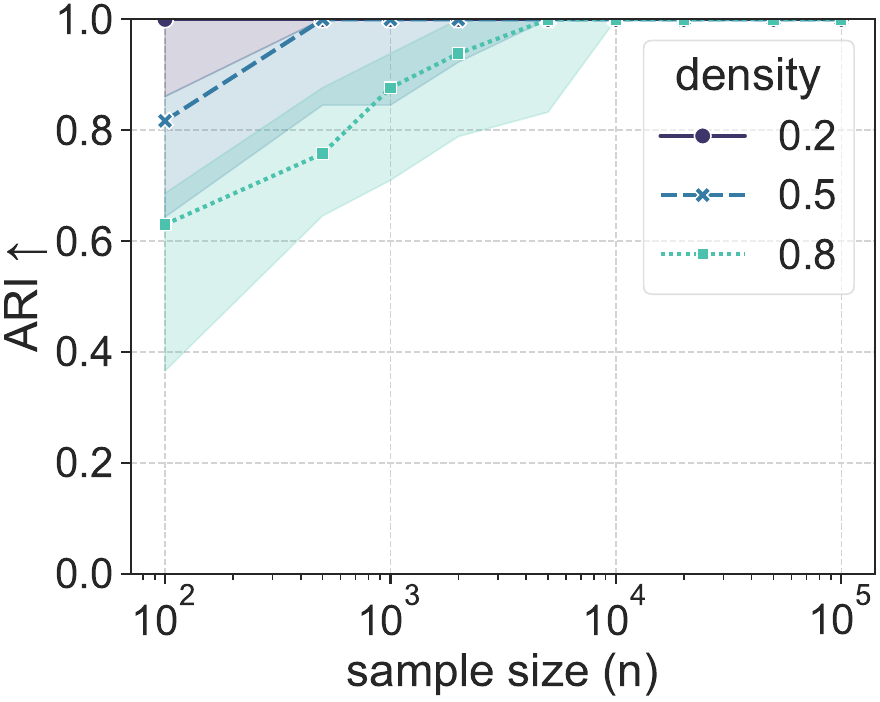}\\
    {\small (b) ARI, $m = 5$}
  \end{minipage}
  \hfill
  \begin{minipage}{0.32\textwidth}
    \centering
    \includegraphics[width=\linewidth]{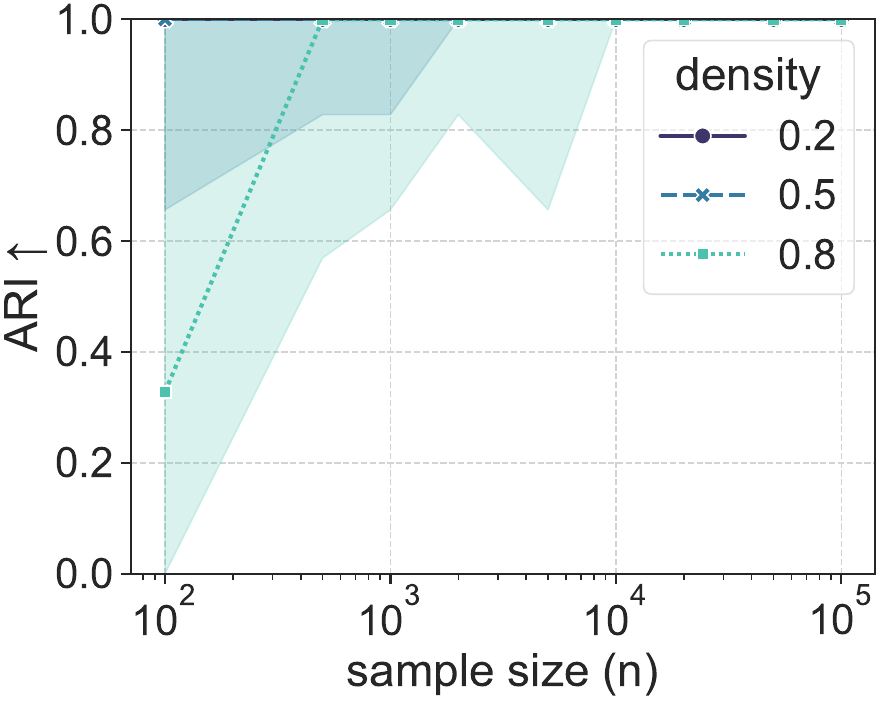}\\
    {\small (c) ARI, $m = 8$}
  \end{minipage}

  \vspace{4pt}
  \begin{minipage}{0.32\textwidth}
    \centering
    \includegraphics[width=\linewidth]{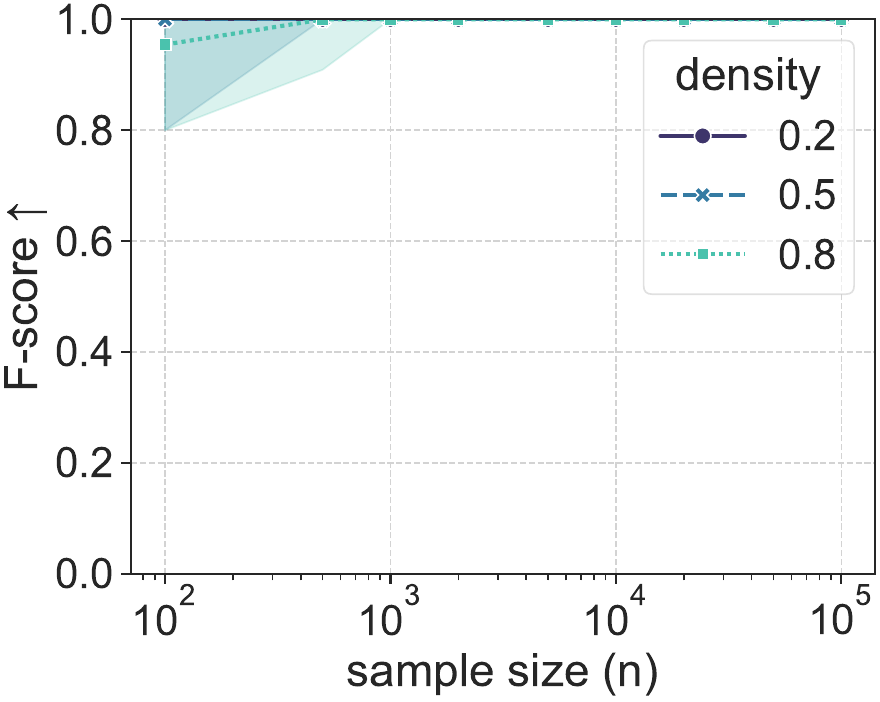}\\
    {\small (d) F-score, $m = 2$}
  \end{minipage}
  \hfill
  \begin{minipage}{0.32\textwidth}
    \centering
    \includegraphics[width=\linewidth]{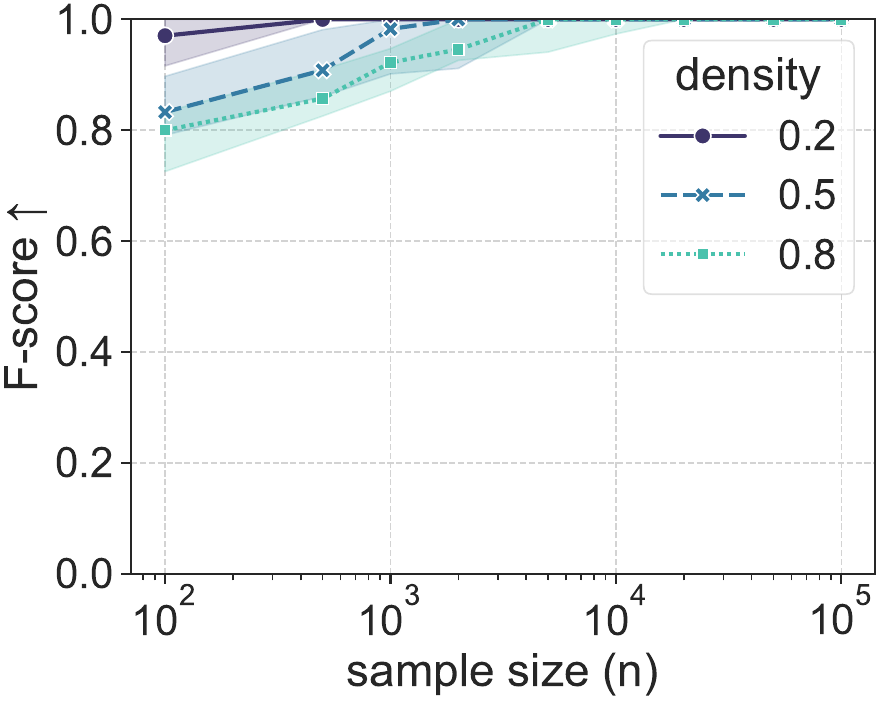}\\
    {\small (e) F-score, $m = 5$}
  \end{minipage}
  \hfill
  \begin{minipage}{0.32\textwidth}
    \centering
    \includegraphics[width=\linewidth]{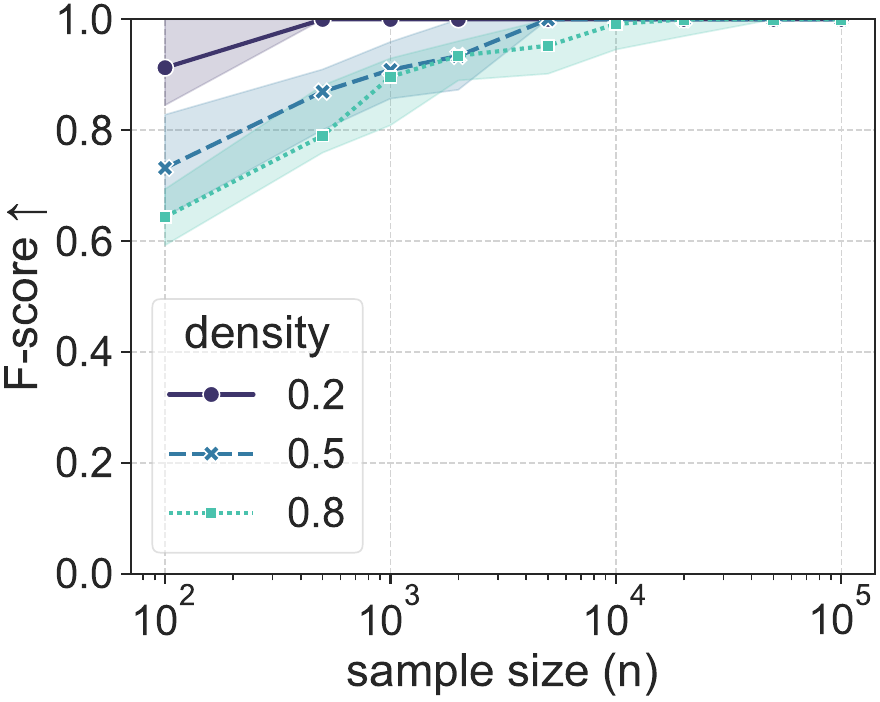}\\
    {\small (f) F-score, $m = 8$}
  \end{minipage}
  \caption{
    Full \coarse pipeline on ER graphs, $d = 10$; one curve per density $\rho$.
    Denser graphs and more environments delay convergence, but every setting converges.
  }
  \label{fig:density-er}
\end{figure*}

\begin{figure*}[t]
  \centering
  \begin{minipage}{0.32\textwidth}
    \centering
    \includegraphics[width=\linewidth]{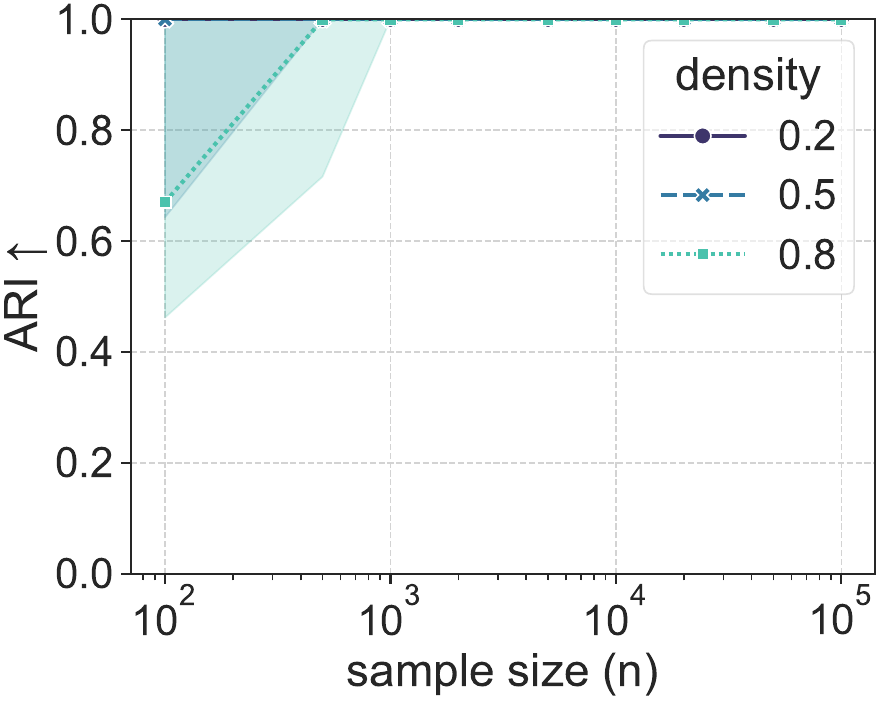}\\
    {\small (a) ARI, $m = 2$}
  \end{minipage}
  \hfill
  \begin{minipage}{0.32\textwidth}
    \centering
    \includegraphics[width=\linewidth]{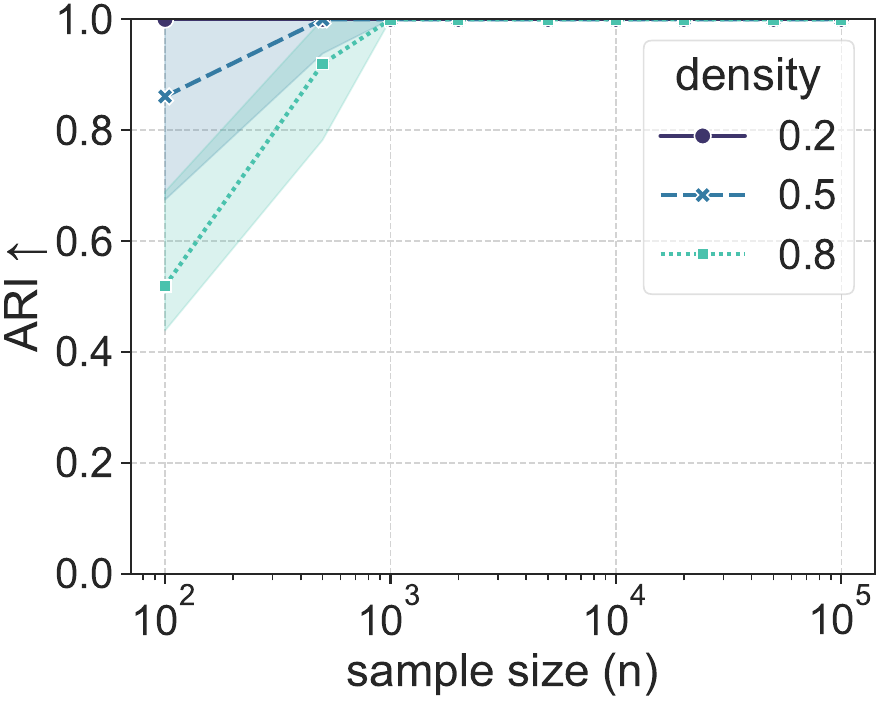}\\
    {\small (b) ARI, $m = 5$}
  \end{minipage}
  \hfill
  \begin{minipage}{0.32\textwidth}
    \centering
    \includegraphics[width=\linewidth]{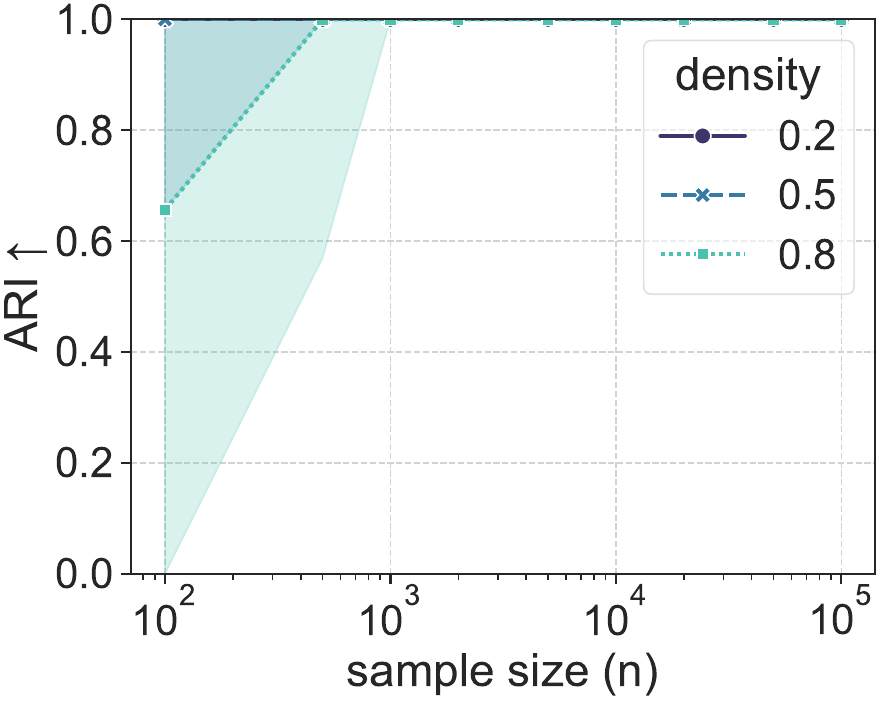}\\
    {\small (c) ARI, $m = 8$}
  \end{minipage}

  \vspace{4pt}
  \begin{minipage}{0.32\textwidth}
    \centering
    \includegraphics[width=\linewidth]{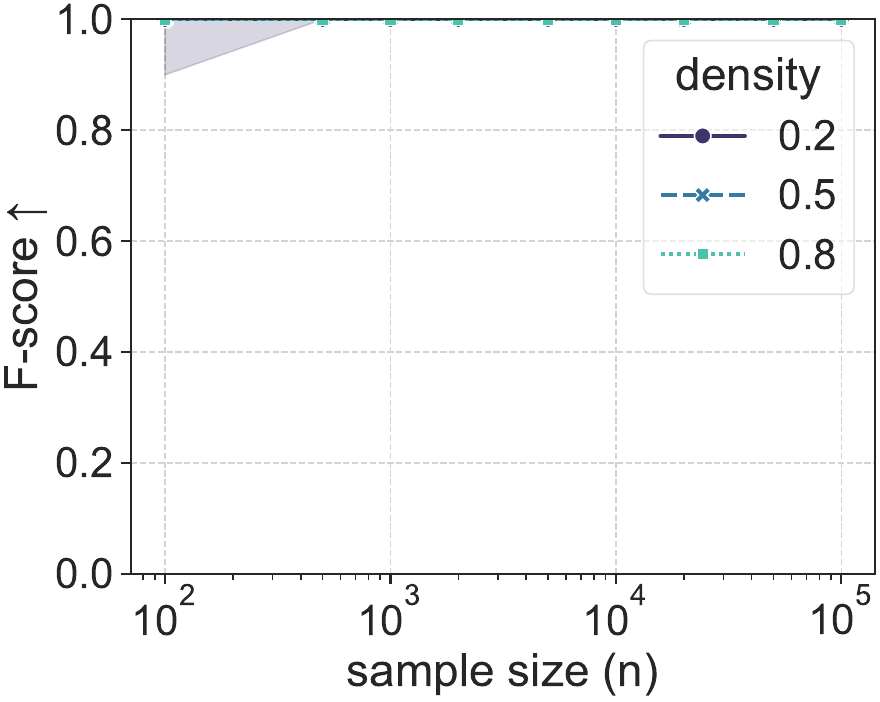}\\
    {\small (d) F-score, $m = 2$}
  \end{minipage}
  \hfill
  \begin{minipage}{0.32\textwidth}
    \centering
    \includegraphics[width=\linewidth]{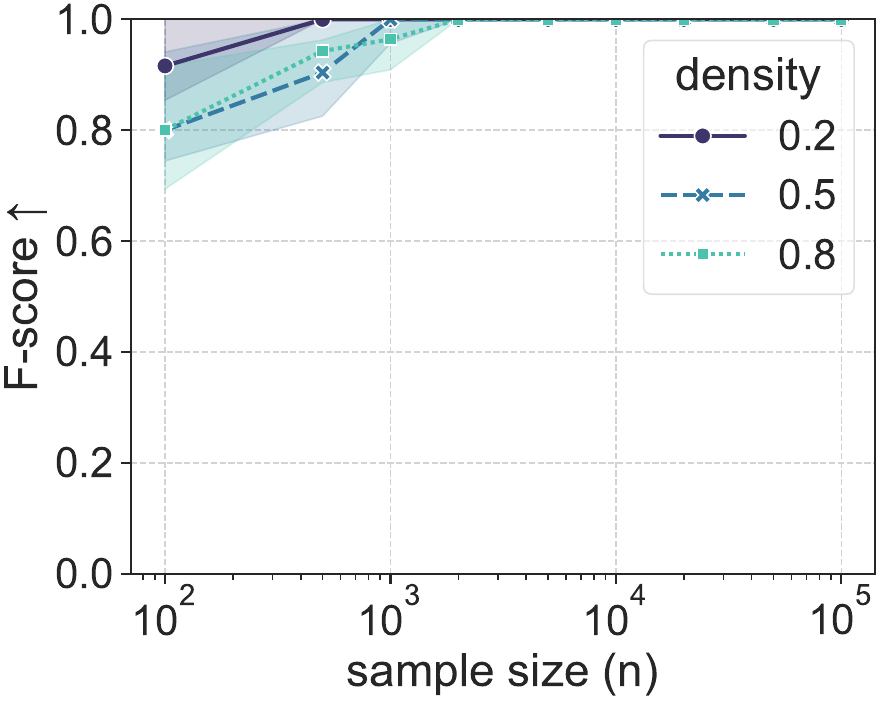}\\
    {\small (e) F-score, $m = 5$}
  \end{minipage}
  \hfill
  \begin{minipage}{0.32\textwidth}
    \centering
    \includegraphics[width=\linewidth]{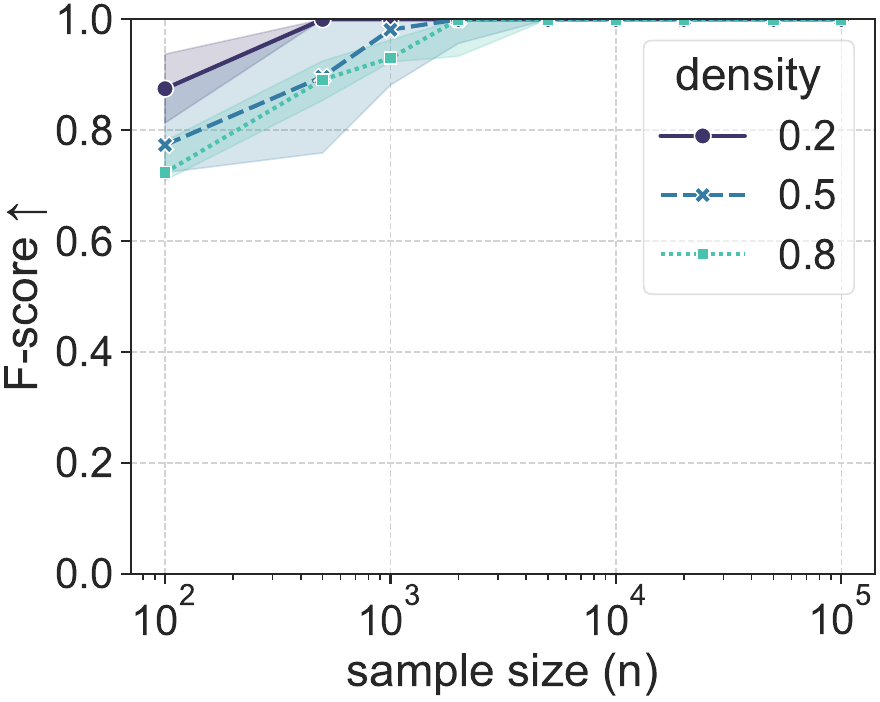}\\
    {\small (f) F-score, $m = 8$}
  \end{minipage}
  \caption{As \autoref{fig:density-er}, on scale-free graphs.}
  \label{fig:density-sf}
\end{figure*}

\subsection{Comparison against RePaRe: full grid}
\label{app:exp-compare}

\autoref{tab:compare-full} completes \autoref{tab:compare} with every sample size of the grid.

\begin{table}[h]
  \centering
  \caption{
    Edge recovery under the oracle partition, ER graphs, $\rho = 0.2$, $m = 5$, all sample sizes.
    F-score as mean $\pm$ sample standard deviation over 20 seeds; speed-up is the ratio of mean RePaRe to mean \coarse runtime.
    Bold marks the better method where the two differ at the reported precision.
  }
  \label{tab:compare-full}
  \resizebox{\textwidth}{!}{
    \begin{tabular}{lccccccc}
      \toprule
      $d$                   &       10        &       20        &       50        &       100       &       200       &       300       &       500       \\
      \midrule
      \multicolumn{8}{l}{\textbf{F-score $\uparrow$, \coarse}}                                                                                            \\[2pt]
      $n_e = 10^3$          & 1.00 $\pm$ 0.00 & \textbf{1.00 $\pm$ 0.02} & \textbf{0.99 $\pm$ 0.02} & \textbf{0.98 $\pm$ 0.02} & \textbf{0.95 $\pm$ 0.05} & 0.84 $\pm$ 0.08 & 0.71 $\pm$ 0.05 \\
      $n_e = 5 \times 10^3$ & 1.00 $\pm$ 0.00 & 1.00 $\pm$ 0.00 & \textbf{1.00 $\pm$ 0.01} & \textbf{1.00 $\pm$ 0.00} & \textbf{1.00 $\pm$ 0.01} & \textbf{1.00 $\pm$ 0.01} & \textbf{1.00 $\pm$ 0.01} \\
      $n_e = 10^4$          & 1.00 $\pm$ 0.00 & 1.00 $\pm$ 0.00 & 1.00 $\pm$ 0.00 & \textbf{1.00 $\pm$ 0.00} & \textbf{1.00 $\pm$ 0.01} & \textbf{1.00 $\pm$ 0.00} & \textbf{1.00 $\pm$ 0.01} \\
      $n_e = 5 \times 10^4$ & 1.00 $\pm$ 0.00 & 1.00 $\pm$ 0.00 & 1.00 $\pm$ 0.00 & 1.00 $\pm$ 0.00 & \textbf{1.00 $\pm$ 0.00} & 1.00 $\pm$ 0.00 & 1.00 $\pm$ 0.00 \\
      $n_e = 10^5$          & 1.00 $\pm$ 0.00 & 1.00 $\pm$ 0.00 & 1.00 $\pm$ 0.00 & 1.00 $\pm$ 0.00 & 1.00 $\pm$ 0.00 & 1.00 $\pm$ 0.00 & 1.00 $\pm$ 0.00 \\
      \midrule
      \multicolumn{8}{l}{\textbf{F-score $\uparrow$, RePaRe}}                                                                                             \\[2pt]
      $n_e = 10^3$          & 1.00 $\pm$ 0.01 & 0.96 $\pm$ 0.03 & 0.95 $\pm$ 0.05 & 0.93 $\pm$ 0.07 & 0.95 $\pm$ 0.08 & \textbf{0.95 $\pm$ 0.10} & \textbf{0.97 $\pm$ 0.04} \\
      $n_e = 5 \times 10^3$ & 1.00 $\pm$ 0.00 & 1.00 $\pm$ 0.01 & 0.99 $\pm$ 0.02 & 0.97 $\pm$ 0.04 & 0.97 $\pm$ 0.05 & 0.97 $\pm$ 0.07 & 0.99 $\pm$ 0.03 \\
      $n_e = 10^4$          & 1.00 $\pm$ 0.00 & 1.00 $\pm$ 0.00 & 1.00 $\pm$ 0.01 & 0.98 $\pm$ 0.03 & 0.99 $\pm$ 0.04 & 0.98 $\pm$ 0.06 & 0.99 $\pm$ 0.02 \\
      $n_e = 5 \times 10^4$ & 1.00 $\pm$ 0.00 & 1.00 $\pm$ 0.00 & 1.00 $\pm$ 0.00 & 1.00 $\pm$ 0.00 & 0.99 $\pm$ 0.02 & 1.00 $\pm$ 0.01 & 1.00 $\pm$ 0.01 \\
      $n_e = 10^5$          & 1.00 $\pm$ 0.00 & 1.00 $\pm$ 0.00 & 1.00 $\pm$ 0.00 & 1.00 $\pm$ 0.00 & 1.00 $\pm$ 0.00 & 0.99 $\pm$ 0.01 & 1.00 $\pm$ 0.01 \\
      \midrule
      \multicolumn{8}{l}{\textbf{Speed-up $\uparrow$ (RePaRe runtime / \coarse runtime)}}                                                                 \\[2pt]
      $n_e = 10^3$          &    9$\times$    &   21$\times$    &   38$\times$    &   44$\times$    &   37$\times$    &   37$\times$    &   32$\times$    \\
      $n_e = 5 \times 10^3$ &   25$\times$    &   43$\times$    &   84$\times$    &   108$\times$   &   101$\times$   &   94$\times$    &   63$\times$    \\
      $n_e = 10^4$          &   36$\times$    &   66$\times$    &   136$\times$   &   146$\times$   &   169$\times$   &   128$\times$   &   101$\times$   \\
      $n_e = 5 \times 10^4$ &   54$\times$    &   104$\times$   &   282$\times$   &   310$\times$   &   372$\times$   &   337$\times$   &   221$\times$   \\
      $n_e = 10^5$          &   51$\times$    &   172$\times$   &   342$\times$   &   510$\times$   &   509$\times$   &   424$\times$   &   329$\times$   \\
      \bottomrule
    \end{tabular}
  }
\end{table}

\subsection{Penalty multiplier}
\label{app:exp-lambda}

\autoref{fig:lambda} reports an ablation on the penalty multiplier $\lambda$.
We run the edge stage of \coarse under the oracle partition on ER graphs with $d = 50$, $m = 5$, $\rho \in \{0.2, 0.5\}$ and $n_e \in \{10^2, 5 \times 10^2, 10^3, 5 \times 10^3\}$, for $\lambda \in \{0.25, 0.5, 1, 2, 4\}$.
Precision is at least $0.93$ in every cell and at least $0.99$ for $\lambda \ge 0.5$, so $\lambda$ mainly affects recall.
At $n_e = 10^2$, the standard BIC ($\lambda = 1$) already misses some edges (F-score $0.84$ and $0.87$), $\lambda = 2$ misses between a third and a half of them, and $\lambda = 4$ misses most.
From $n_e = 5 \times 10^3$ on, every $\lambda$ reaches an F-score of at least $0.99$.
Consistently with \autoref{prop:localcons}, the choice of $\lambda$ matters only at small $n_e$.
There the score is still far from its asymptotic regime, the number of parameters is large relative to the sample size, and the penalty can outweigh the likelihood gain of true edges.
In these cases a lighter penalty is the better choice.
Across the regimes we tested, $\lambda = 1$ is a good default for \coarse.

\begin{figure}[t]
  \centering
  \begin{minipage}{0.4\textwidth}
    \centering
    \includegraphics[width=\linewidth]{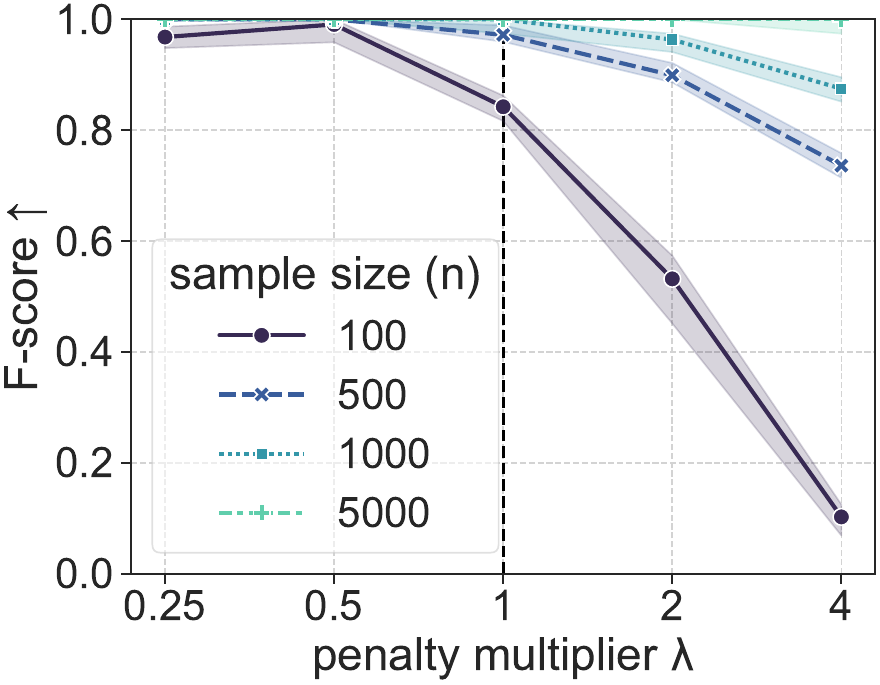}\\
    {\small (a) $\rho = 0.2$}
  \end{minipage}
  \hspace{0.04\textwidth}
  \begin{minipage}{0.4\textwidth}
    \centering
    \includegraphics[width=\linewidth]{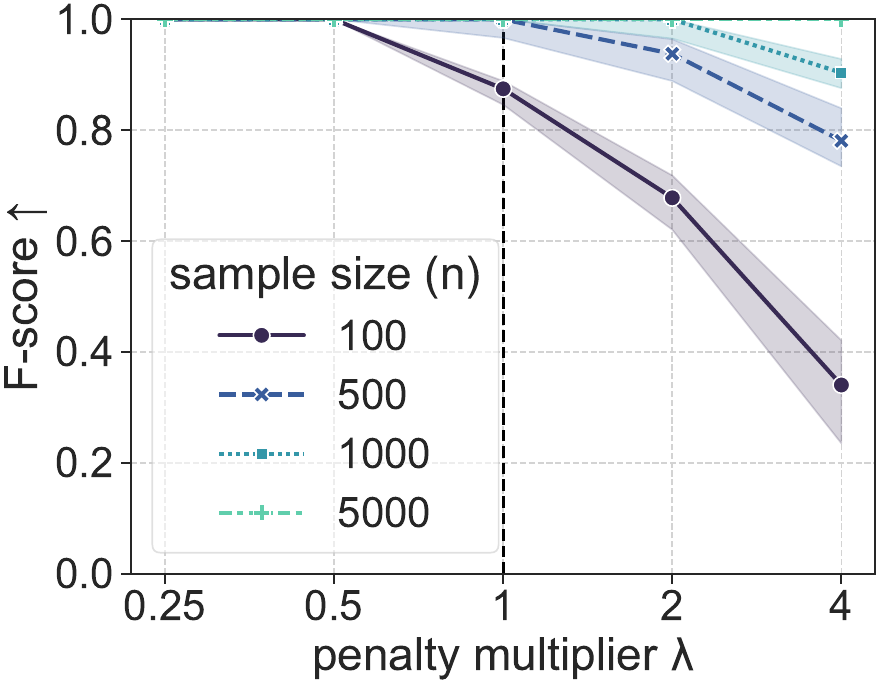}\\
    {\small (b) $\rho = 0.5$}
  \end{minipage}
  \caption{
    Edge F-score under the oracle partition against the BIC penalty multiplier, ER graphs, $d = 50$, $m = 5$; one curve per $n_e$, dashed line at the standard BIC.
    Precision stays above $0.99$ for $\lambda \ge 0.5$, so the loss at large $\lambda$ and small $n_e$ is recall.
  }
  \label{fig:lambda}
\end{figure}

\subsection{Cutoff rules for the descendant test}
\label{app:exp-threshold}

In order to build $\hat M$, \autoref{alg:refineaux} thresholds $d \cdot m$ Welch $p$-values at a fixed level $\alpha$.
\autoref{fig:threshold} compares this choice with rules that set the cutoff from the data, on one shared $p$-value matrix per dataset.
We tested several well known statistical correction at $\delta = 0.05$:

\begin{itemize}
  \item Bonferroni \citep{dunn_multiple_1961}: uses $\alpha = \delta/(dm)$ and control the probability of at least one false shift under any dependence between tests.
  \item Benjamini--Hochberg (BH) \citep{benjamini_controlling_1995}: uses $\alpha = k^\star\delta/(dm)$, with $k^\star := \max\{i : p_{(i)} \le i\delta/(dm)\}$ and bounds the false discovery rate by $\delta$, provided the $p$-values are independent or positively dependent.
  \item Benjamini--Yekutieli (BY) \citep{benjamini_control_2001}: which extends BH by keeping the false discovery rate below $\delta$ under any dependence.
\end{itemize}

And compare them against both our implementation of CV-\coarse over the $\alpha$ grid of \autoref{app:exp-details} and the oracle $\alpha$ (picked as the grid value with the highest ARI on that dataset).
For the study we used ER graphs, $d = 250$, $m = 5$, $\rho \in \{0.2, 0.5, 0.8\}$, $n_e \in \{10^4, 10^5\}$, 20 seeds and scored only the partition recovery stage.

The ARI of a fixed cutoff rises slowly from $\alpha = 10^{-8}$ to a maximum near $\alpha = 10^{-2}$ and collapses beyond it, where false shifts split true blocks.
Bonferroni sits at $\alpha \approx 4 \times 10^{-5}$ and loses the most (median gap to the oracle $-0.10$ to $-0.15$ at $n_e = 10^4$); BH and BY land in the flat region and lose at most $0.07$.
CV-\coarse, the only rule that needs neither a level nor the truth, is within $0.04$ of the oracle median at $n_e = 10^4$ and within $0.02$ at $n_e = 10^5$ (\autoref{fig:threshold-gap}).
The fixed $\alpha = 10^{-4}$ of \autoref{sec:exp-setup} is conservative at this $d$: $0.43$ against $0.55$ for the oracle at $\rho = 0.2$, $n_e = 10^4$, and $0.72$ against $0.79$ at $n_e = 10^5$.
This is the loss \autoref{sec:exp-scal} attributes to the partition stage, and CV recovers most of it.

\begin{figure*}[t]
  \centering
  \includegraphics[width=\textwidth]{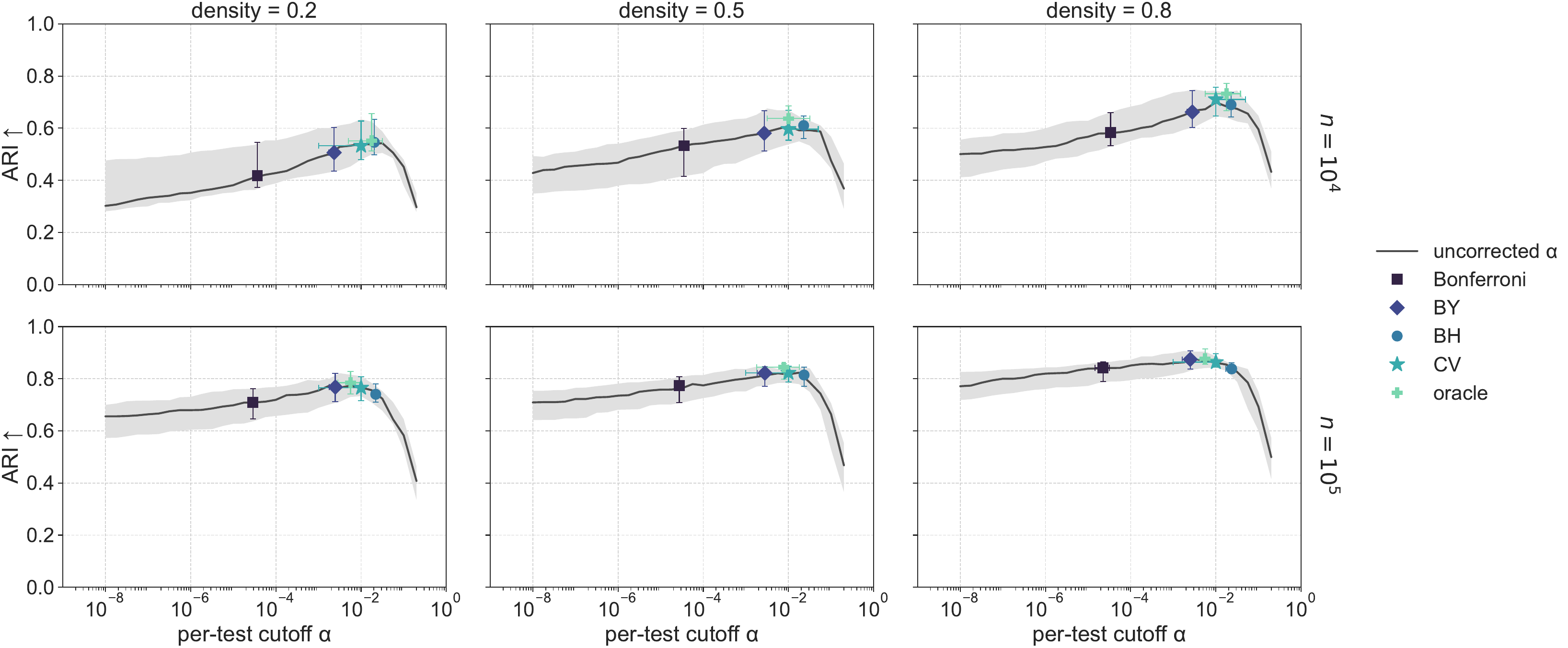}
  \caption{
    Partition ARI against the per-test cutoff, ER graphs, $d = 250$, $m = 5$.
    Grey: fixed $\alpha$ (median and interquartile range); markers: cutoff and ARI reached by each rule (medians with interquartile ranges).
    Every data-driven rule but Bonferroni lands in the flat region of the curve; CV reaches the oracle without ground truth.
  }
  \label{fig:threshold}
\end{figure*}

\begin{figure*}[t]
  \centering
  \includegraphics[width=0.85\textwidth]{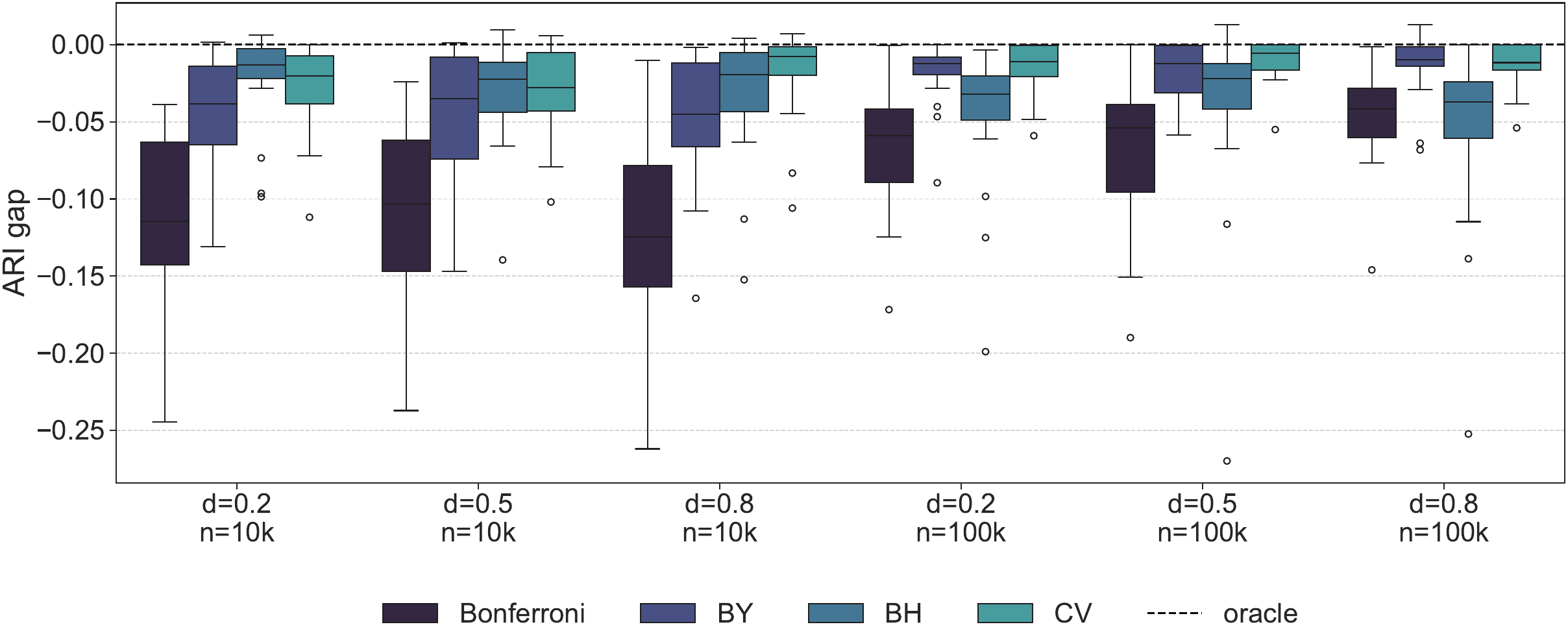}
  \caption{
    ARI of each rule minus the oracle ARI on the same dataset, over 20 seeds. $d$ stands for density. 
  }
  \label{fig:threshold-gap}
\end{figure*}

\subsection{Multi-target interventions}
\label{app:exp-multitarget}

\autoref{sec:target} defines $\Pi^\cE$ through signatures and shows that it is coarser than $\Pi^\cI$ \citep{madaleno_coarsening_2026-1} under multi-target interventions (\autoref{app:multi-target}).
We quantify the gap at the population level and check that \coarse estimates $\Pi^\cE$, not $\Pi^\cI$.

\paragraph{Population level.}

On ER graphs with $d = 50$ and $\rho \in \{0.2, 0.5, 0.8\}$ (200 seeds) we compute both partitions from the graph and the targets and report their ARI, the block ratio $|\Pi^\cE| / |\Pi^\cI|$ and the \emph{hidden-edge fraction}, the share of true coarse edges of $G^\cI$ that fall inside one block of $\Pi^\cE$.
In the \emph{native} design (\autoref{fig:mt-native}) each of $m$ environments draws $k$ targets.
In the \emph{overlap} design (\autoref{fig:mt-overlap}) the targets of all environments come from a fixed pool of $20$ nodes: a cover of the pool by disjoint $k$-subsets plus random $k$-subsets, up to $m$ environments.

In the native design the resolution falls quickly with $k$: at $k = 2$ the ARI is $0.38$--$0.75$ and at $k = 5$ below $0.26$, with $\Pi^\cE$ holding a quarter to a half of the blocks of $\Pi^\cI$.
More environments lower the ARI further, since every new multi-target environment refines $\Pi^\cI$ more than $\Pi^\cE$, but they lower the hidden-edge fraction, from $0.40$--$0.61$ at $m = 2$ to $0.18$--$0.33$ at $m = 20$ for $k = 5$.
The overlap design shows the other side: with a fixed target pool, adding overlapping environments recovers resolution, from ARI $0.56$ at $m = 10$ to $0.86$ at $m = 40$ for $k = 2$ ($\rho = 0.2$), with the hidden-edge fraction dropping towards zero.
What the interventions can resolve is set by how their target sets intersect, not by how many there are.

\paragraph{Estimation.}

\autoref{fig:mt-fit} runs \coarse with cross-validated $\alpha$ on ER graphs with $d = 50$, $m = 10$, $k \in \{1, \dots, 5\}$ targets per environment and $n_e \in \{10^3, 10^4, 10^5\}$ (20 seeds), scoring $\hat\Pi$ against both $\Pi^\cE$ and $\Pi^\cI$.
$\mathrm{ARI}(\hat\Pi, \Pi^\cE)$ is (almost) flat in $k$ and increases with $n_e$, reaching $0.85$--$0.92$ at $n_e = 10^5$, whereas $\mathrm{ARI}(\hat\Pi, \Pi^\cI)$ decreases with $k$ and, for $n_e \ge 10^4$, follows the population curve $\mathrm{ARI}(\Pi^\cE, \Pi^\cI)$.
Hence the estimate converges to $\Pi^\cE$, the object \autoref{def:icoarsening} identifies, and its distance to $\Pi^\cI$ reflects the population gap rather than estimation error.

\begin{figure*}[t]
  \centering
  \includegraphics[width=\textwidth]{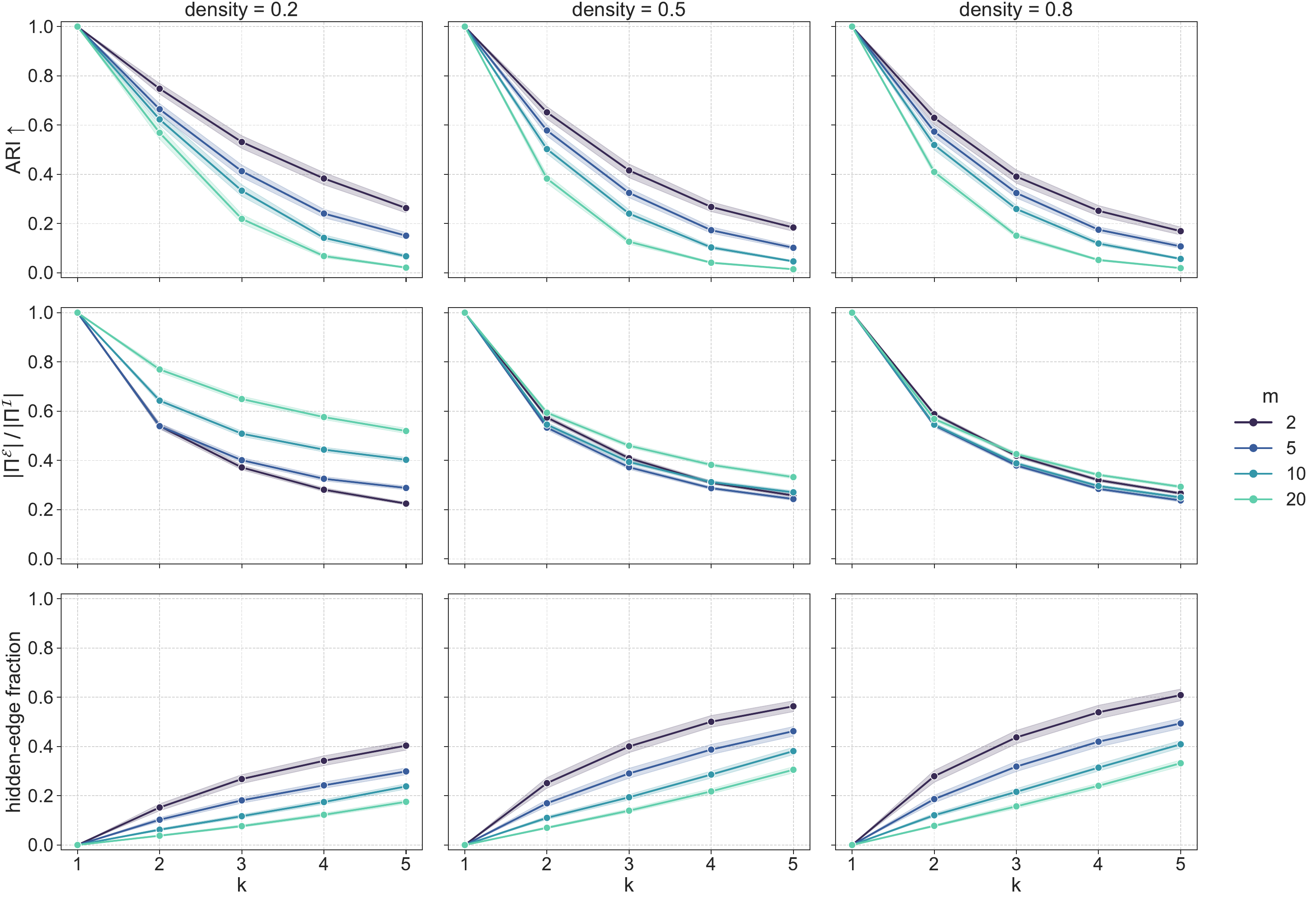}
  \caption{
    $\Pi^\cE$ against $\Pi^\cI$ at the population level, native design, ER graphs, $d = 50$; $k$ targets per environment, one curve per number of environments $m$ (mean and 95\% CI over 200 seeds).
    Resolution is lost as soon as $k > 1$; more environments refine $\Pi^\cI$ faster than $\Pi^\cE$ but hide fewer edges.
  }
  \label{fig:mt-native}
\end{figure*}

\begin{figure*}[t]
  \centering
  \includegraphics[width=\textwidth]{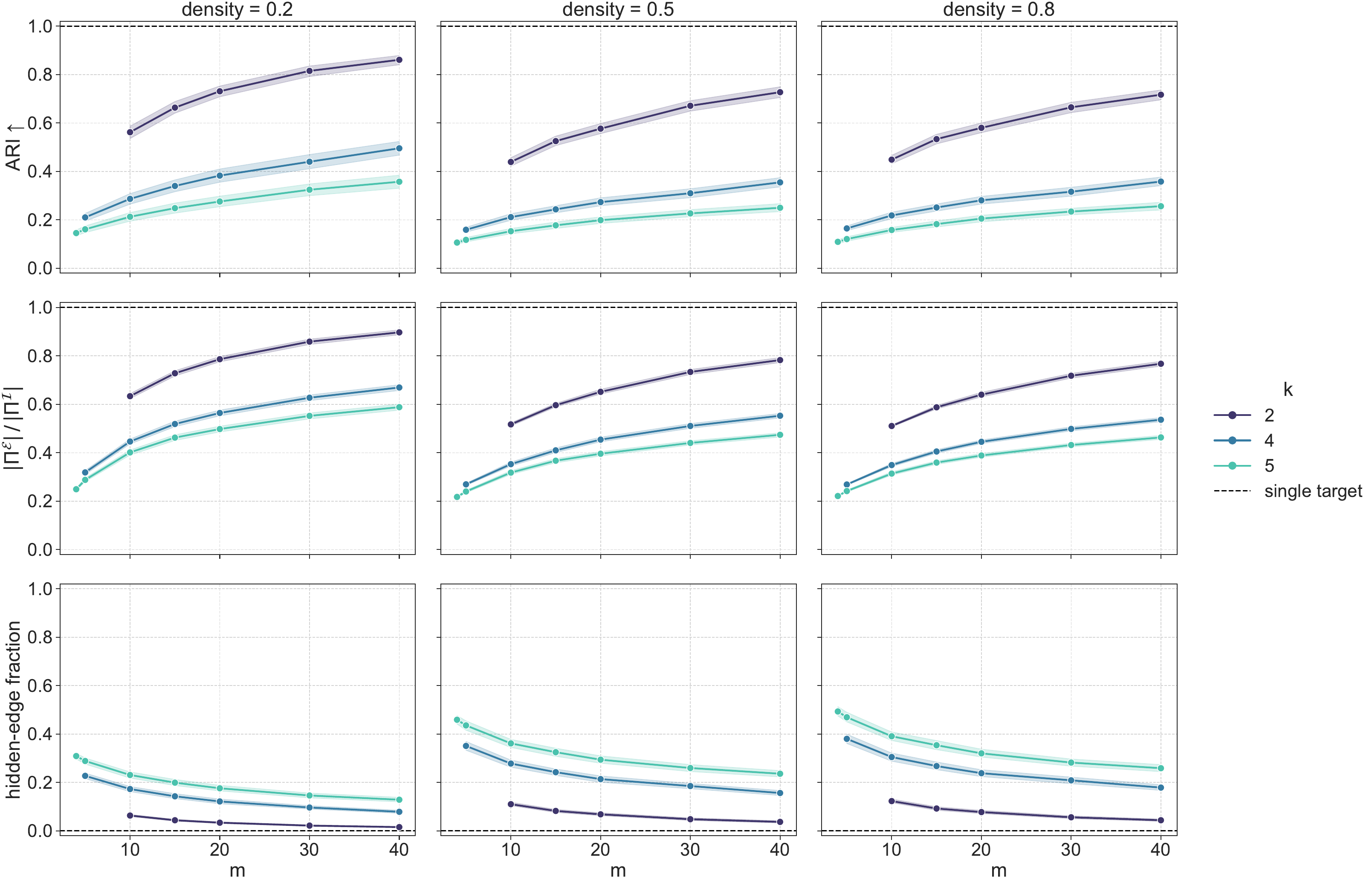}
  \caption{
    As \autoref{fig:mt-native}, overlap design: all targets come from a pool of $20$ nodes and $m$ grows; one curve per $k$, dashed line at the single-target value.
    Overlapping environments recover the resolution lost to multi-target interventions.
  }
  \label{fig:mt-overlap}
\end{figure*}

\begin{figure*}[t]
  \centering
  \includegraphics[width=\textwidth]{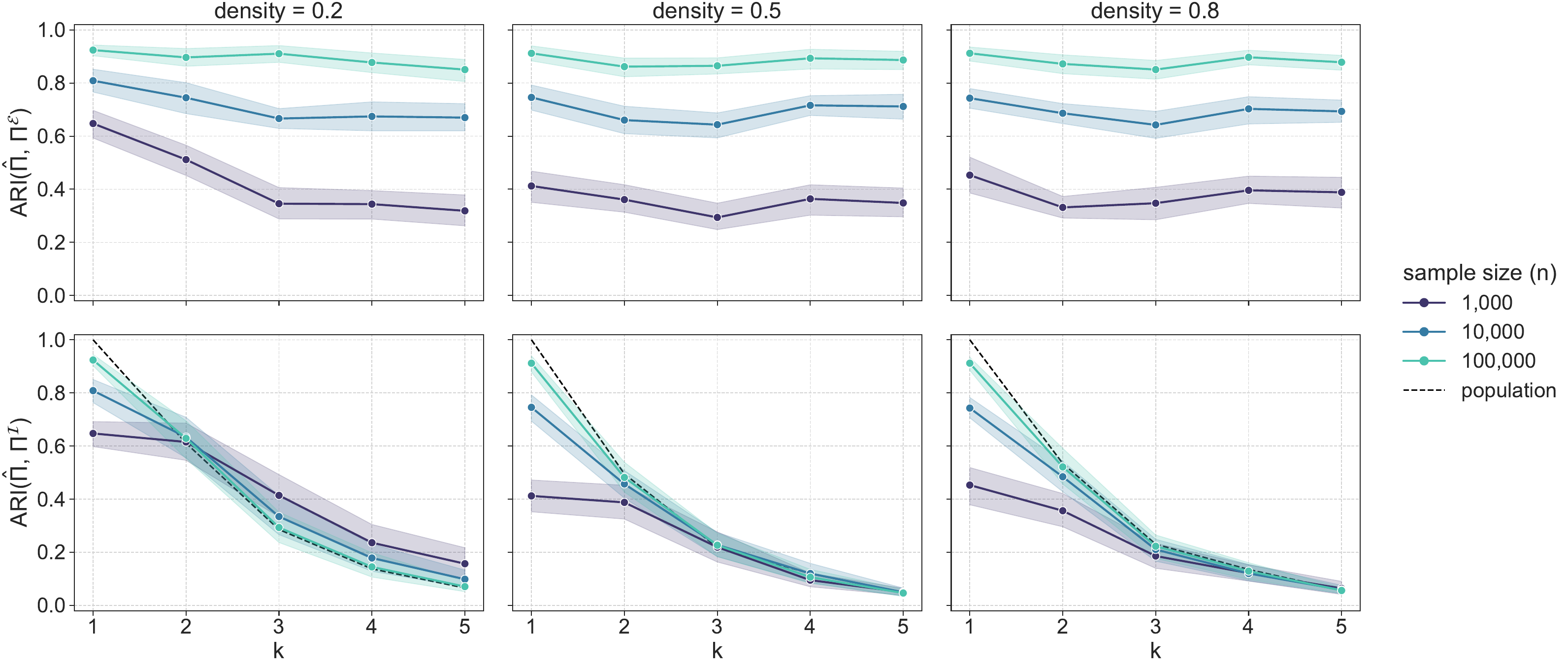}
  \caption{
    Partition estimated by \coarse with cross-validated $\alpha$ against $\Pi^\cE$ (top) and $\Pi^\cI$ (bottom), ER graphs, $d = 50$, $m = 10$; one curve per $n_e$, dashed: population $\mathrm{ARI}(\Pi^\cE, \Pi^\cI)$.
    The estimate tracks $\Pi^\cE$ at every $k$; its distance to $\Pi^\cI$ is the population gap.
  }
  \label{fig:mt-fit}
\end{figure*}

\subsection{High-dimensional regime: reduced-rank variants}
\label{app:exp-highdim}

The block MLE of \autoref{def:blockmodel} exists only when $n_e$ exceeds the family size $|F_j| = r_j + s_j$ (\autoref{app:block}), which fails when blocks are large and samples few.
Two variants reduce the dimension of every block before scoring; both change only the score stage of \autoref{alg:coarse}.
After centering, every environment is divided by the per-variable standard deviation of the observational environment, so that the reduction is the same linear map in every environment.

\emph{kPC-\coarse} replaces each block $X^{(e)}_j \in \R^{r_j}$ by $Z^{(e)}_j := V_j^\top X^{(e)}_j \in \R^{k_j}$, where $V_j \in \R^{r_j \times k_j}$ holds the top $k_j := \min\{k, r_j\}$ right singular vectors of the observational data matrix of block $\pi_j$, that is, its first $k_j$ principal directions, estimated once in the observational environment and applied unchanged to every other one.
\emph{avg-\coarse} is the rank-one case $V_j := \1 / r_j$, the mean of the scaled variables of the block.
The pooled BIC of \autoref{def:bic} and the grow-shrink search are then run on the reduced blocks, with $k_j$ in place of $r_j$.
Consequently, the family size of block $j$ drops from $r_j + s_j$ to $k_j + \sum_{\pi_k \in \Pa_j} k_k$, so the reduced score exists for much smaller $n_e$.

We compare \coarse with 1PC-, 3PC- and avg-\coarse under the oracle partition on ER graphs with $d = 100$, $\rho = 0.1$, $m \in \{5, 10, 20\}$ and $n_e$ from $10$ to $10^3$ (20 seeds).

\autoref{fig:highdim} shows the edge recovery performance of each method as the sample size moves from the high-dimensional to the classical statistical regime.
\autoref{fig:highdim-block} explores a similar setting, but focuses on how many samples are needed to recover the correct parent set.
For every block with at least one parent, it reports whether the parent set is recovered exactly and the fraction of parents recovered, as a function of $n_e / F_j$, where $F_j$ is the number of variables in the block and its true parents.
\autoref{fig:highdim-feasible} reports the share of blocks whose family fits in $n_e$ samples, measured in each method's own units.

For $n_e \le 30$, few families of \coarse are feasible (degenerate fits score $-\infty$) and its F-score is only $0.17$--$0.44$, while the reduced variants already recover a large part of the edges (F-score $0.4$--$0.8$), with 3PC the best variant throughout.
\coarse overtakes 3PC at $n_e \approx 100$ for $m = 20$ and at $n_e \approx 500$ for $m = 5$, reaching an F-score of $0.96$--$0.98$ at $n_e = 10^3$ and precision $1$.
The variants instead plateau or decline, more so with more environments: 1PC/3PC reach $0.87$/$0.93$ at $m = 5$ but only $0.62$/$0.78$ at $m = 20$, and avg reaches $0.59$.
The decline is in precision ($0.46$--$0.66$ at $m = 20$), the discarded within-block directions carry part of the dependence between a block and its parents, so the reduced model is misspecified and, given enough data, admits spurious parents.
Reduction thus trades exactness at large $n_e$ for a score that exists when $n_e$ is below the family size.

Per block (\autoref{fig:highdim-block}), \coarse never recovers a parent set exactly while $n_e \le F_j$, because grow-shrink only admits parents whose family still fits in $n_e$ samples (the MLE does not exist).
Just below $n_e = F_j$, its recall is $0.31$--$0.45$, against $0.60$--$0.72$ for 3PC (exact recovery $0.09$--$0.44$).
Above $n_e = F_j$, \coarse becomes exact ($\ge 0.86$ from $n_e / F_j \approx 20$, ${\approx}\,1$ from ${\approx}\,45$), while 3PC keeps recall at $0.96$--$0.98$ but is exact in only $0.60$--$0.89$ of blocks ($0.31$--$0.94$ for 1PC and avg).
The gap comes from spurious parents, matching the precision loss in \autoref{fig:highdim}.

\begin{figure*}[t]
  \centering
  \includegraphics[width=\textwidth]{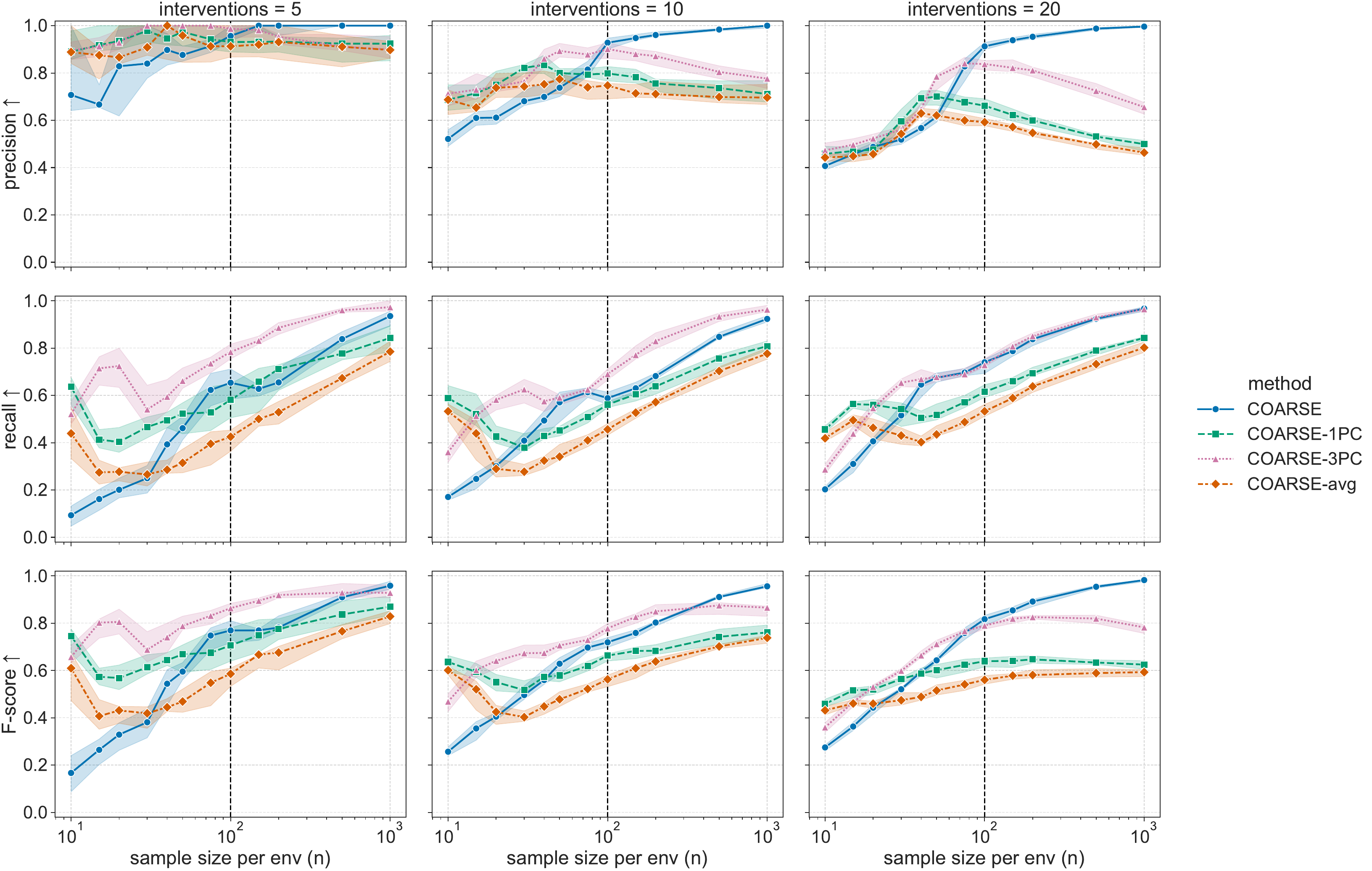}
  \caption{
    Edge recovery under the oracle partition for \coarse and its reduced-rank variants, ER graphs, $d = 100$, $\rho = 0.1$; dashed line at $n_e = d$.
    The variants score where \coarse cannot ($n_e$ below the family size) and lose precision where it can.
  }
  \label{fig:highdim}
\end{figure*}

\begin{figure*}[t]
  \centering
  \includegraphics[width=1\linewidth]{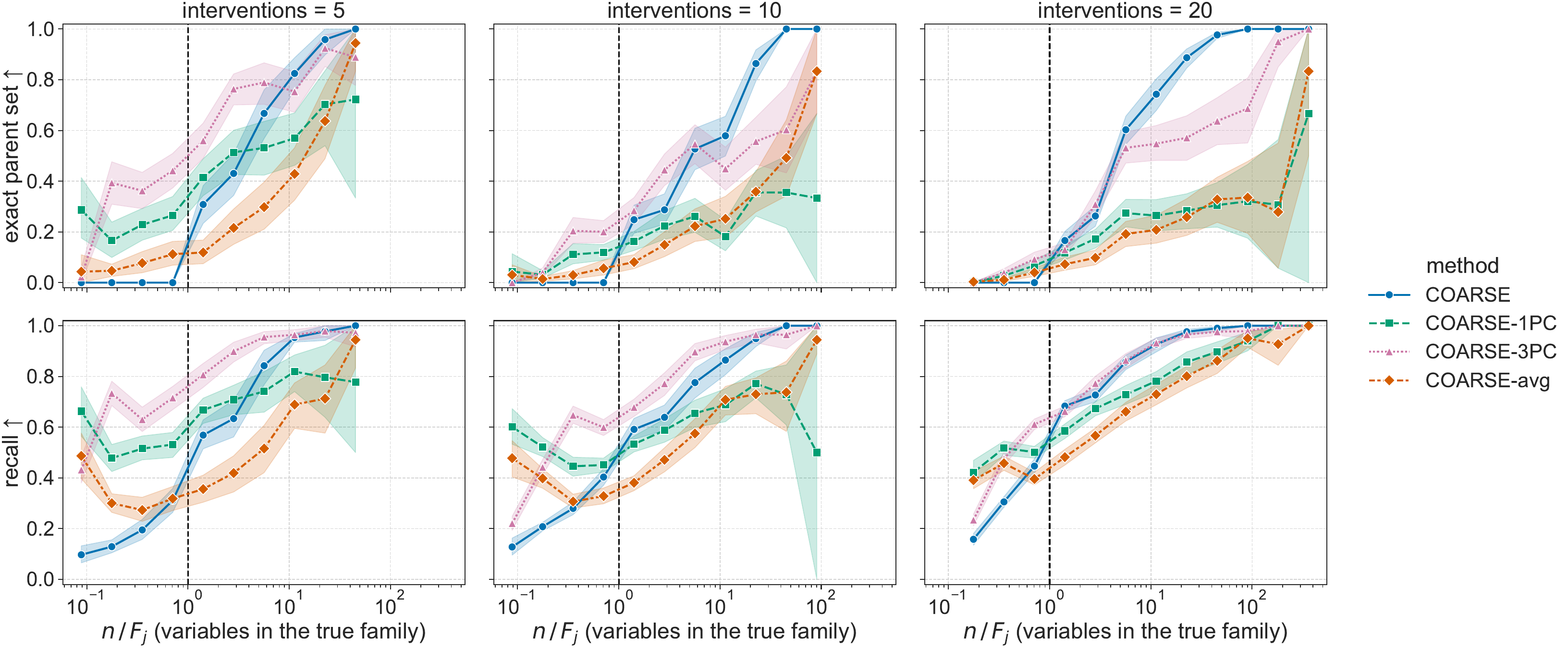}
  \caption{
    Per-block parent recovery under the oracle partition for \coarse and its reduced-rank variants, ER graphs, $d = 100$, $\rho = 0.1$: exact parent set (top) and parent recall (bottom) against $n_e / F_j$, with $F_j$ the number of variables in the true family of $\pi_j$ for every method; blocks binned on a $\log_2$ scale, dashed line at $n_e = F_j$.
    \coarse near 0 recovery below the line stems from the non-existence of the MLE; the variants recover part of the parents below it but add spurious ones above it.
  }

  \label{fig:highdim-block}
\end{figure*}

\begin{figure*}[t]
  \centering
  \includegraphics[width=\textwidth]{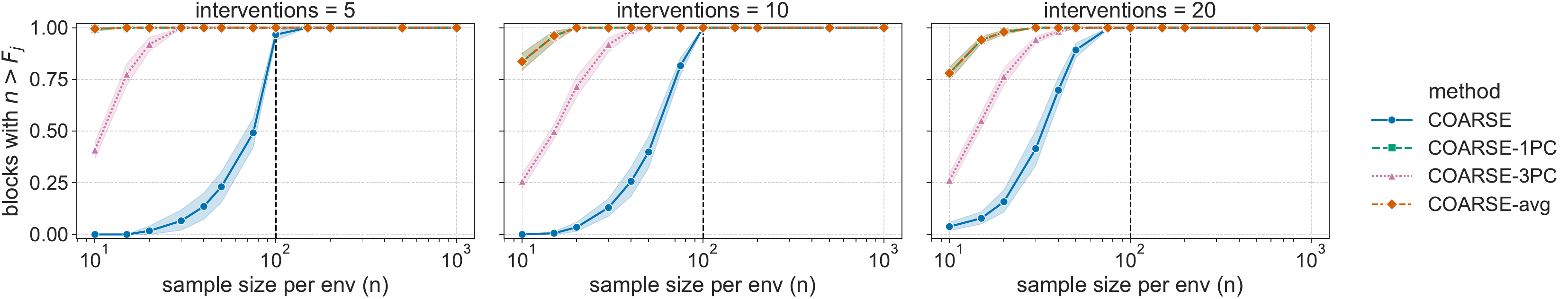}
  \caption{
    Share of blocks whose family size, in each method's own units, is below $n_e$, same setting as \autoref{fig:highdim}.
    The crossover of \autoref{fig:highdim} coincides with the point where every family of \coarse becomes feasible.
  }
  \label{fig:highdim-feasible}
\end{figure*}

\subsection{Noise distributions}
\label{app:exp-noise}

The score of \autoref{def:bic} is a Gaussian likelihood.
\autoref{fig:noise} runs the edge stage under the oracle partition on ER graphs with $\rho = 0.2$ and $m = 5$, for $d \in \{50, 100, 200, 500\}$ and five noise families with equal mean and variance: Gaussian, uniform, Laplace and Student-$t$ with $3$ and $4$ degrees of freedom.
The curves coincide: at every $(d, n_e)$ the mean F-scores of the five families differ by at most $0.01$, and all reach $1$ at $n_e = 5 \times 10^4$.
In a linear model, d-separation implies a vanishing partial covariance whatever the noise, and that is what the increments of \autoref{prop:localcons} detect allowing BIC to correctly rank the models even under misspecification.
The partition is fixed to the oracle here, so this does not test the Welch test under heavy tails, where a rank-based or distribution-free two-sample test could replace it.

\begin{figure*}[t]
  \centering
  \begin{minipage}{0.245\textwidth}
    \centering
    \includegraphics[width=\linewidth]{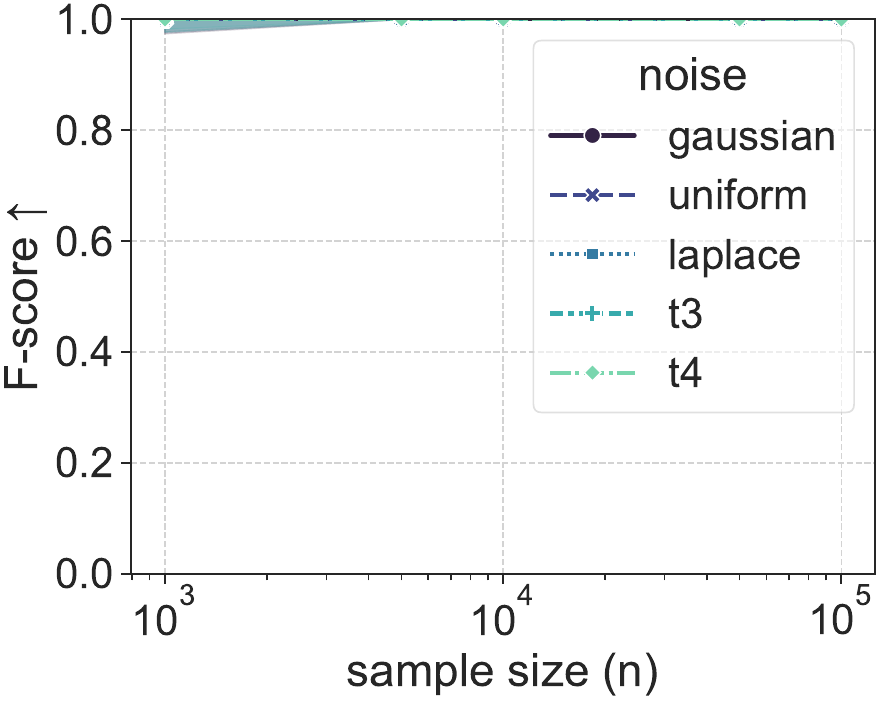}\\
    {\small (a) $d = 50$}
  \end{minipage}
  \hfill
  \begin{minipage}{0.245\textwidth}
    \centering
    \includegraphics[width=\linewidth]{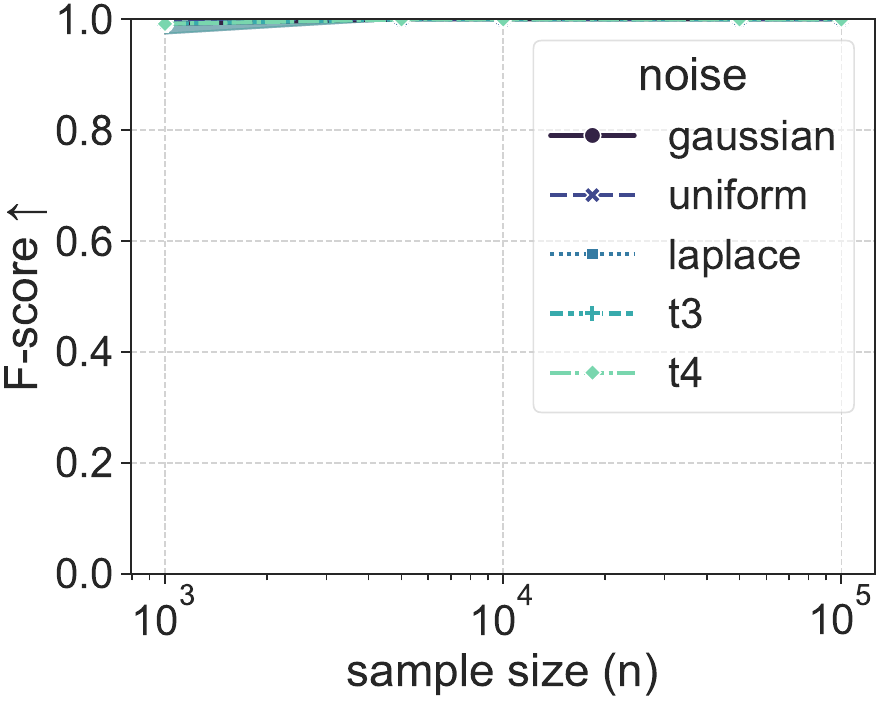}\\
    {\small (b) $d = 100$}
  \end{minipage}
  \hfill
  \begin{minipage}{0.245\textwidth}
    \centering
    \includegraphics[width=\linewidth]{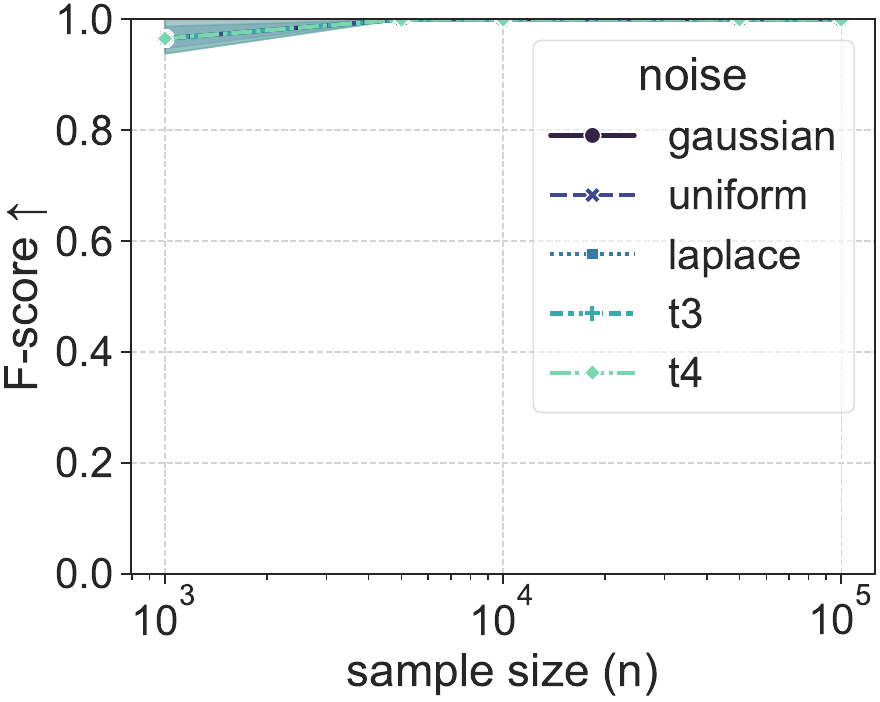}\\
    {\small (c) $d = 200$}
  \end{minipage}
  \hfill
  \begin{minipage}{0.245\textwidth}
    \centering
    \includegraphics[width=\linewidth]{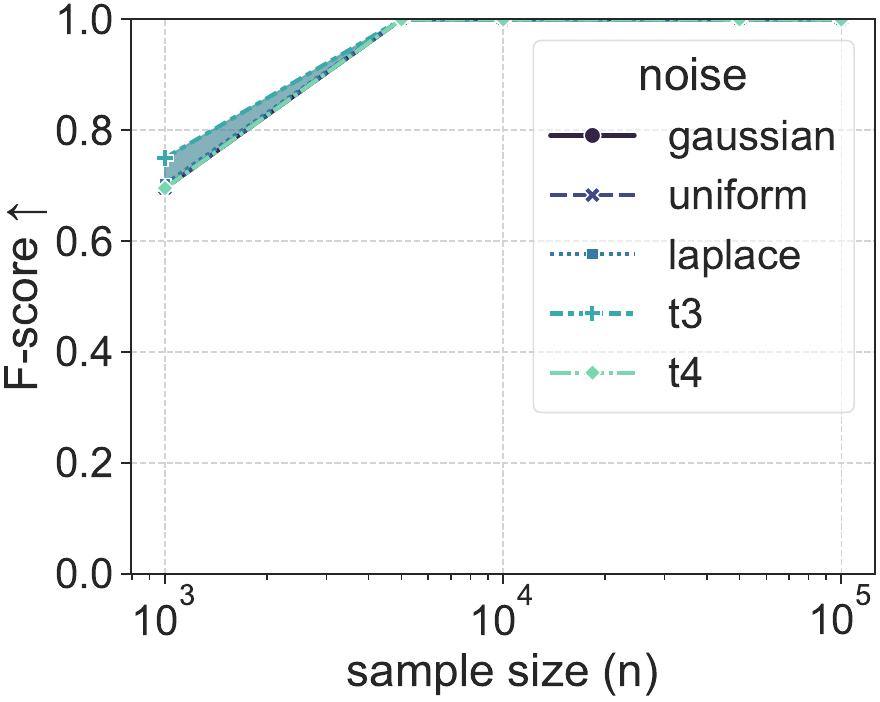}\\
    {\small (d) $d = 500$}
  \end{minipage}
  \caption{
    Edge F-score under the oracle partition for five noise families, ER graphs, $\rho = 0.2$, $m = 5$.
    The Gaussian score is insensitive to the noise family in the linear model.
  }
  \label{fig:noise}
\end{figure*}

\subsection{Causal chamber: full results}
\label{app:exp-chamber}

\autoref{tab:chamber-full} reports precision and recall, oracle selection and the grouped mode.
In the ungrouped mode the hyperparameters selected without ground truth coincide with the oracle ones for \coarse ($\alpha = 10^{-4}$) and RePaRe ($\alpha = 10^{-4}$, $\beta = 10^{-2}$); UT-IGSP gains $0.11$ in F-score under oracle selection.
In the grouped mode ($m = 2$) both methods select a looser partition level than the oracle ($\alpha = 10^{-2}$ for \coarse, $10^{-3}$ for RePaRe) and miss the oracle partition by one variable (ARI $0.93$); the edges are the same under both selections, $2$ of the $4$ true coarse edges with no false edge.
\autoref{tab:chamber-adj} complements \autoref{tab:chamber} with adjacency (undirected) metrics at the variable level and the exact hypergeometric null.
Every method recovers adjacencies above chance, including GnIES, whose directed F-score in \autoref{tab:chamber} is at chance: its errors are in the orientations.
\coarse and RePaRe have the highest precision, $0.86$, and tie with UT-IGSP for the highest recall.

\begin{table}[h]
  \centering
  \caption{
    Light tunnel, all methods and selections.
    \emph{Score} rows select hyperparameters without ground truth (\coarse: cross-validation; RePaRe: GnIES score; UT-IGSP: BIC), \emph{oracle} rows by maximum ARI, then F-score; GIES and GnIES have no hyperparameters.
    Metrics refer to the coarse DAG for \coarse and RePaRe and to the variable-level graph for the others; GnIES is scored on the skeleton. 
    Fit and Search are wall-clock seconds of the final fit and of the hyperparameter search.
  }
  \label{tab:chamber-full}

  \begin{tabular}{llcccccc}
    \toprule
    Method  & Sel.   & ARI $\uparrow$ & Prec. $\uparrow$ & Rec. $\uparrow$ &  F1 $\uparrow$  &   Fit(s)   & Search(s) \\
    \midrule
    \multicolumn{8}{l}{\textbf{Ungrouped} ($m = 5$)}                 \\[2pt]
    \coarse & score  & 1.00 & 1.00  & 0.80 & 0.89 &  0.02   &  0.40  \\
            & oracle & 1.00 & 1.00  & 0.80 & 0.89 &  0.01   &   --   \\
    RePaRe  & score  & 1.00 & 1.00  & 0.80 & 0.89 &  0.10   &  1.54  \\
            & oracle & 1.00 & 1.00  & 0.80 & 0.89 &  0.10   &   --   \\
    GIES    & --     &  --  & 0.63  & 0.62 & 0.62 &  1.83   &   --   \\
    GnIES   & --     &  --  & 0.41  & 0.56 & 0.47 &   376   &   --   \\
    UT-IGSP & score  &  --  & 0.38  & 0.54 & 0.45 &  0.19   &  2.69  \\
            & oracle &  --  & 0.53  & 0.59 & 0.56 &  0.13   &   --   \\
    \midrule
    \multicolumn{8}{l}{\textbf{Grouped} ($m = 2$)}                   \\[2pt]
    \coarse & score  & 0.93 & 1.00  & 0.50 & 0.67 & $<$0.01 &  0.12  \\
            & oracle & 1.00 & 1.00  & 0.50 & 0.67 & $<$0.01 &   --   \\
    RePaRe  & score  & 0.93 & 1.00  & 0.50 & 0.67 &  0.02   &  0.27  \\
            & oracle & 1.00 & 1.00  & 0.50 & 0.67 &  0.02   &   --   \\
    \bottomrule
  \end{tabular}

\end{table}

\begin{table}[h]
  \centering
  \small
  \caption{
    Light tunnel, ungrouped mode: adjacency (undirected) metrics against the exact negative control.
    Under the null hypothesis, the number of true adjacencies among the $\hat m$ estimated ones is hypergeometric.
    Null F1 is its mean and 95\% interval.
  }
  \label{tab:chamber-adj}
  \begin{tabular}{lccccr}
    \toprule
    Method  & Prec. $\uparrow$ & Rec. $\uparrow$ & F1 $\uparrow$ &      Null F1      & $\hat m$ \\
    \midrule
    \multicolumn{6}{l}{\textbf{Variable level}}                                                 \\[2pt]
    \coarse &  \textbf{0.86}   &  \textbf{0.64}  & \textbf{0.74} & 0.18 [0.06, 0.29] &       29 \\
    RePaRe  &  \textbf{0.86}   &  \textbf{0.64}  & \textbf{0.74} & 0.18 [0.06, 0.29] &       29 \\
    GIES    &       0.63       &      0.62       &     0.62      & 0.20 [0.10, 0.31] &       38 \\
    GnIES   &       0.41       &      0.56       &     0.47      & 0.24 [0.13, 0.34] &       54 \\
    UT-IGSP &       0.45       &  \textbf{0.64}  &     0.53      & 0.24 [0.13, 0.34] &       55 \\
    \midrule
    \multicolumn{6}{l}{\textbf{Block level}}                                                    \\[2pt]
    \coarse &  \textbf{1.00}   &      0.80       & \textbf{0.89} & 0.32 [0.11, 0.56] &        8 \\
    RePaRe  &  \textbf{1.00}   &      0.80       & \textbf{0.89} & 0.32 [0.11, 0.56] &        8 \\
    GIES    &       0.80       &      0.80       &     0.80      & 0.36 [0.10, 0.60] &       10 \\
    GnIES   &       0.50       &      0.80       &     0.62      & 0.44 [0.23, 0.62] &       16 \\
    UT-IGSP &       0.50       &  \textbf{0.90}  &     0.64      & 0.46 [0.29, 0.64] &       18 \\
    \bottomrule
  \end{tabular}
\end{table}

\end{document}